\documentclass{article}

\usepackage{microtype}
\usepackage{titletoc}
\usepackage{graphicx}
\usepackage{subfigure}
\usepackage{enumitem}
\usepackage{booktabs}
\usepackage{multirow}

\usepackage{amsmath}
\usepackage{cite}
\usepackage{amsthm}
\usepackage{amssymb}
\usepackage{algorithm}

\usepackage{colortbl}
\usepackage{xcolor}       
\usepackage{algorithmic}
\newtheorem{theorem}{Theorem}
\newtheorem{assumption}{Assumption}
\newtheorem{lemma}{Lemma}

\usepackage{hyperref}
\usepackage{makecell}

\usepackage[accepted]{icml2024}

\usepackage{amsmath}
\usepackage{amssymb}
\usepackage{mathtools}
\usepackage{amsthm}

\usepackage[capitalize,noabbrev]{cleveref}

\theoremstyle{plain}

\theoremstyle{definition}

\theoremstyle{remark}

\usepackage[textsize=tiny]{todonotes}

\begin{document}

\twocolumn[
\icmltitle{
Locally Estimated Global Perturbations are Better than Local Perturbations \\ for Federated Sharpness-aware Minimization
}

\begin{icmlauthorlist}
\icmlauthor{Ziqing Fan}{sjtu,pjlab}
\icmlauthor{Shengchao Hu}{sjtu,pjlab}
\icmlauthor{Jiangchao Yao}{sjtu,pjlab}
\icmlauthor{Gang Niu}{riken}
\icmlauthor{Ya Zhang}{sjtu,pjlab}
\icmlauthor{Masashi Sugiyama}{riken,tyo}
\icmlauthor{Yanfeng Wang}{sjtu,pjlab}
\end{icmlauthorlist}

\icmlaffiliation{sjtu}{Cooperative Medianet Innovation Center, Shanghai Jiao Tong University, China;}
\icmlaffiliation{pjlab}{Shanghai AI Laboratory, China;}
\icmlaffiliation{riken}{RIKEN AIP, Japan;}
\icmlaffiliation{tyo}{The University of Tokyo, Japan}

\icmlcorrespondingauthor{Jiangchao Yao and Yanfeng Wang}{\{sunarker,wangyanfeng\}@sjtu.edu.cn}

\icmlkeywords{Machine Learning, ICML}

\vskip 0.3in
]

\printAffiliationsAndNotice{}
\begin{abstract}
In federated learning (FL), the multi-step update
and data heterogeneity among clients often lead to a loss landscape with sharper minima, degenerating the performance of the resulted global model.
Prevalent federated approaches incorporate sharpness-aware minimization (SAM) into local training to mitigate this problem.
However, the local loss landscapes may not accurately reflect the flatness of global loss landscape in heterogeneous environments; as a result, minimizing local sharpness and calculating perturbations on client data might not align the efficacy of SAM in FL with centralized training.
To overcome this challenge, we propose FedLESAM, a novel algorithm that locally estimates the direction of global perturbation on client side as the difference between global models received in the previous active and current rounds.
Besides the improved quality, FedLESAM also speed up federated SAM-based approaches since it only performs once backpropagation in each iteration.
Theoretically, we prove a slightly tighter bound than its original FedSAM by ensuring consistent perturbation. 
Empirically, we conduct comprehensive experiments on four federated benchmark datasets under three partition strategies to demonstrate the superior performance and efficiency of FedLESAM\footnote{Our code is available at: \url{https://github.com/MediaBrain-SJTU/FedLESAM}}. 

\end{abstract}

\begin{figure}[t]
    \centering
    \subfigure[Centralized]{
    \centering
    \label{fig:intro_1}
    \includegraphics[width=0.145\textwidth]{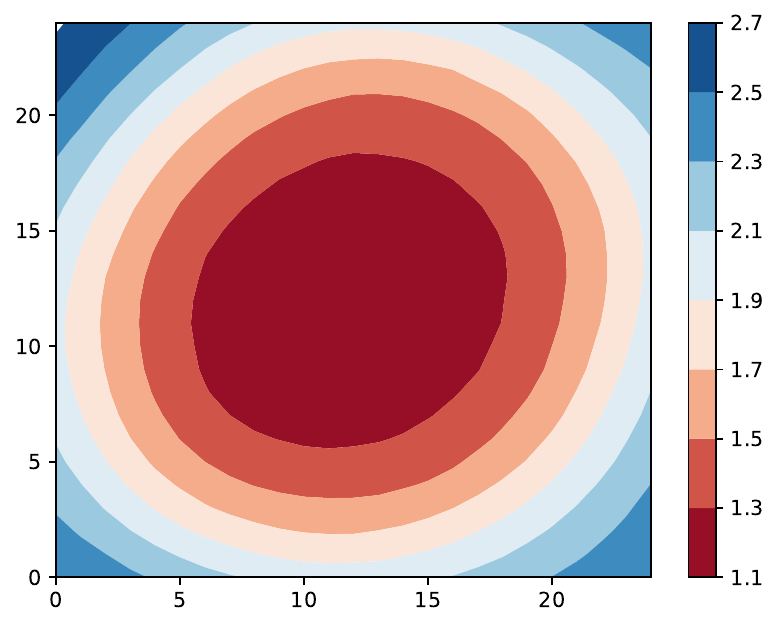}}
    \subfigure[Federated~($0.6$)]{
    \centering
    \label{fig:intro_2}
    \includegraphics[width=0.145\textwidth]{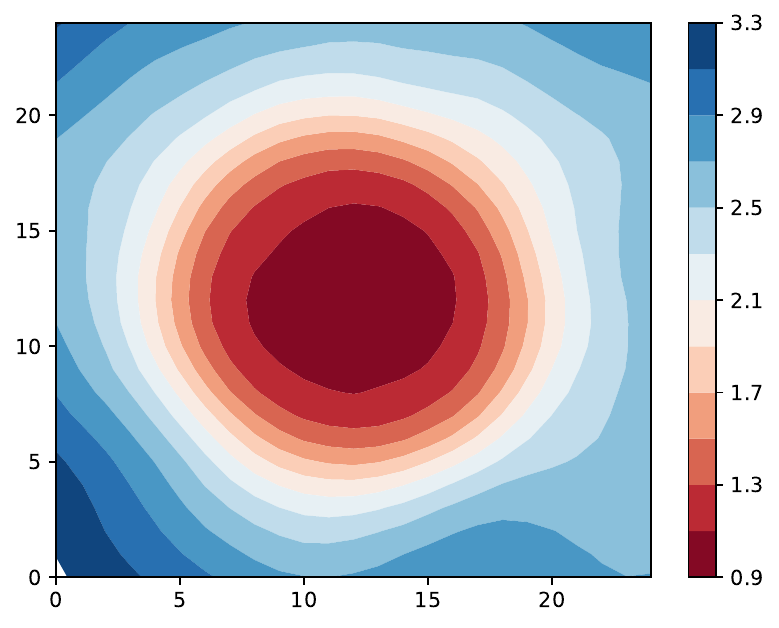}}
    \subfigure[Federated~($0.06$)]{
    \centering
    \label{fig:intro_3}
    \includegraphics[width=0.145\textwidth]{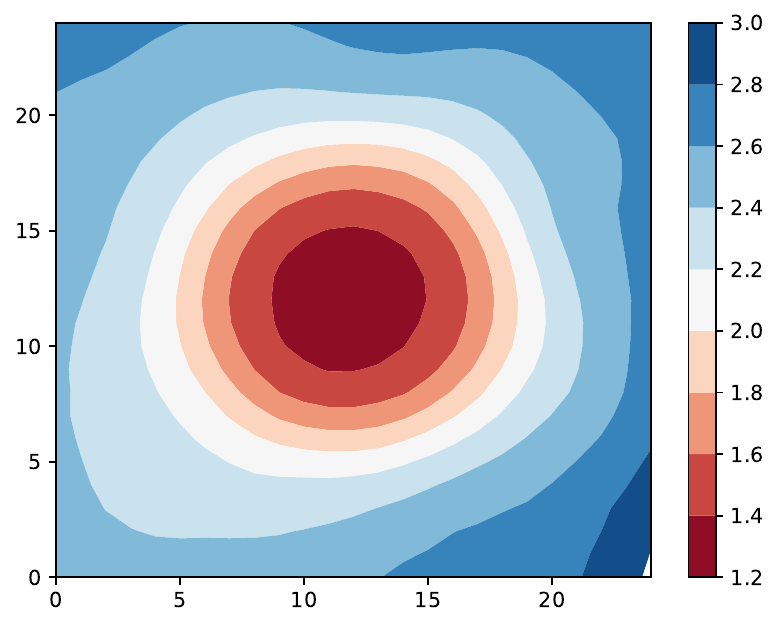}}
    \centering
    \subfigure[SAM in Local]{
    \centering
    \label{fig:intro_4}
    \includegraphics[width=0.17\textwidth]{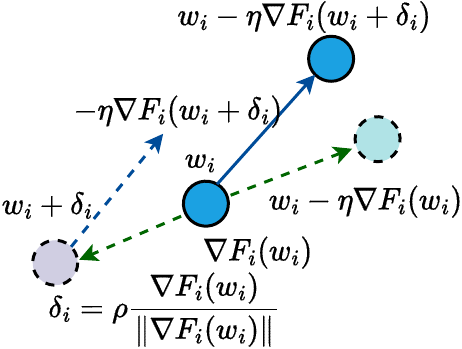}}
    \centering
    \subfigure[Conflicts]{
    \centering
    \label{fig:intro_5}
    \includegraphics[width=0.135\textwidth]{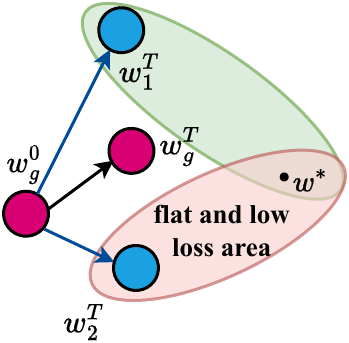}}
    \subfigure[FedLESAM]{
    \centering
    \label{fig:intro_6}
    \includegraphics[width=0.135\textwidth]{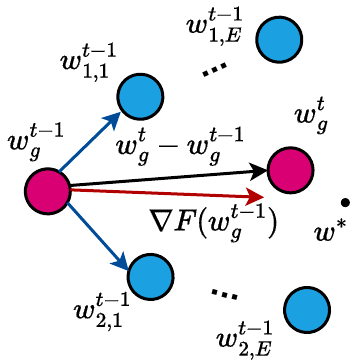}}
    \caption{Figures~\ref{fig:intro_1}-\ref{fig:intro_3} illustrate the loss surface for centralized training and federated training under Dirichlet distributions with coefficients of 0.6 and 0.06. Figure~\ref{fig:intro_4} depicts the local update process of FedSAM, including calculating perturbation based on client data and updating the local model using the gradient of the model after perturbation. Figure~\ref{fig:intro_5} highlights the sharpness minimizing conflicts due to discrepancies between local and global loss landscapes caused by data heterogeneity. Figure~\ref{fig:intro_6} demonstrates our locally estimating global perturbation~(opposite direction of red arrow) via global update~(opposite direction of black arrow).}\label{fig:intro}
    \vspace{-15pt}
\end{figure}

\begin{table*}[t!]
\caption{Summary of federated SAM-based algorithms for solving data heterogeneity, focusing on base algorithm, sharpness minimization target, perturbation calculation strategies, and extra computation introduced by SAM. In FedSMOO, $\mu_i$ and $s$ are dual variable and correction to perturbations. In FedLESAM, $w_i^\mathrm{old}$ is the global model received at previous active round. Refer Sec.~\ref{sec:notation} for other notations.}\label{tab:related}
\vspace{2pt}
\small
\centering
\renewcommand\arraystretch{0.01}
\setlength{\tabcolsep}{2pt}
\scalebox{0.95}{
\begin{tabular}{ c | c c cc}
\toprule[2pt]  \textbf {Research work} & \textbf {Base Algorithm} & \textbf {Minimizing Target} &\textbf {Local Perturbation} & \textbf {Extra Computation}\\ \midrule
\text { FedSAM~(ECCV22, ICML22) } & FedAvg&Local Sharpness & $\rho\frac{\nabla F_i(w_{i, k}^t)}{\|\nabla F_i(w_{i, k}^t\|}$ & $\checkmark$\\
\cmidrule{1-5}  \text { MofedSAM~(ICML22) } &FedAvg with Momentum &Local Sharpness &$\rho\frac{\nabla F_i(w_{i, k}^t)}{\|\nabla F_i(w_{i, k}^t\|} $ & $\checkmark$ \\
\cmidrule{1-5} \text { FedGAMMA~(TNNLS23) }& Scaffold  &  Local Sharpness& $\rho\frac{\nabla F_i(w_{i, k}^t)}{\|\nabla F_i(w_{i, k}^t\|}$ & $\checkmark$\\
\cmidrule{1-5} \text { FedSMOO~(ICML23) }&  FedDyn &  Local Sharpness with Correction&$\rho\frac{\nabla F_i(w_{i, k}^t)-\mu_i-s}{\|\nabla F_i(w_{i, k}^t)-\mu_i-s\|}$  & $\checkmark$\\
\cmidrule{1-5} \text { FedLESAM~(Ours)} & FedAvg, Scaffold, FedDyn &Global Sharpness&$\rho\frac{w_i^\mathrm{old}-w^t}{\|w_i^\mathrm{old}-w^t\|}$ & $\times$\\
\bottomrule[2pt]
\end{tabular}
}
\vspace{-9pt}
\end{table*}

\section{Introduction}
\label{sec:intro}
Federated Learning (FL) enables clients to collaboratively train a global model with a server without sharing their private data. 
As a representative paradigm in FL, FedAvg~\cite{fedavg} reduces the parameter transmission cost by increasing local training steps, which has drawn considerable attention in many fields such as medical diagnosis~\cite{medical1,medical2} and autonomous driving~\cite{driving1,AD}. 
However, challenges arise due to data heterogeneity and multi-step local updates~\cite{noniid1,noniid2,noniid3,fedskip,fedmr}, which often forms a sharper global loss landscape and leads the global model to converge to a sharp local minimum~\cite{fedsamicml,fedsameccv,fedsmoo,fedgamma}.
It is widely observed that such a sharp minimum tends to behave poor generalization ability~\cite{first_flatness_nips1994,sharp_minima_icml_2017,loss_landscape_nips_2018,domainsharp}.
As depicted in Figure~\ref{fig:intro_1}-\ref{fig:intro_3}, the loss surface in centralized training is substantially flatter compared to that in federated training and an increase in data heterogeneity sharpens the loss landscape, exacerbating performance degradation.

To address this challenge, recent innovations have leveraged sharpness-aware minimization (SAM)~\cite{sam} to find a flat minimum for better generalization by minimizing the loss of the model after perturbation. 
\citet{fedsameccv} and \citet{fedsamicml} pioneered SAM in FL and proposed FedSAM. 
\citet{fedsamicml} proposed a variant of FedSAM called MoFedSAM by adding local momentum. 
\citet{fedgamma} proposed FedGAMMA, which enhanced FedSAM by integrating variance reduction of Scaffold~\cite{scaffold}. 
Nevertheless, a common limitation persists: they all compute perturbations to minimize sharpness based on client data.
In heterogeneous scenarios, the local loss surfaces may not accurately reflect the flatness of the global loss surface. 
Therefore, minimizing local sharpness in these manners may not effectively guide the aggregated model to a global flat minimum.

In the process of minimizing local sharpness, as FedSAM illustrated in Figure~\ref{fig:intro_4}, clients follow a two-step procedure: 1) calculate local perturbations based on local gradients; 2) update their models using gradients computed on the model after perturbation.
However, the discrepancy between local and global loss surfaces becomes evident under heterogeneous data. 
As depicted in Figure~\ref{fig:intro_5}, the local perturbations, tailored to client data, guide client models toward their respective local flat minima ($w_1^*$ and $w_2^*$), which may significantly diverge from the global flat minimum ($w^*$). 
\citet{fedsmoo} noticed the difference and proposed FedSMOO to both correct local updates and the local perturbations. 
However, like other SAM-based methods, FedSMOO introduces many computational overheads, increasing the expenses of clients. 
We have summarized all SAM-based federated methods for solving data heterogeneity in Table~\ref{tab:related}.

In this study, we analyze that, to align the efficacy of SAM in FL with centralized training, it is essential to ensure the consistency between local and global updates and between local and global perturbations.
The former guarantees to minimize an upper bound of global sharpness and can be solved by incorporating previous research for eliminating client drifts~\cite{scaffold,fedyn}.
Therefore, the challenge remains in correctly estimating global perturbation, the direction of which is parallel with global gradient. 
As illustrated in Figure~\ref{fig:intro_6}, the global gradient~(red arrow) can be inferred from the global update~(black arrow), a strategy also employed in Scaffold to correct client updates. 
Inspired by this, we propose \textbf{FedLESAM}, a novel and efficient approach that \textbf{L}ocally \textbf{E}stimates global perturbation for \textbf{SAM} as the difference between global models received in the previous active and current rounds without extra computational overheads.
Empirically, we validate the local estimation of global perturbation and conduct comprehensive experiments to show the performance and efficiency. 
Theoretically, we provide the convergence guarantee of FedLESAM and prove a slightly tighter bound than FedSAM. 
Our contributions are threefold:
\begin{itemize}[leftmargin=15pt]
\vspace{-5pt}
	\item We rethink existing federated SAM-based algorithms for handling heterogeneous data, dissect the conflicts when minimizing local sharpness and analyze the conditions under which SAM is effective in FL~(Sec.~\ref{sec:rethink}).
 \vspace{-7pt}
	\item We present FedLESAM, a novel and efficient algorithm that minimizes global sharpness and reduces computational demand by locally estimating the global perturbation at the client level~(Sec.~\ref{sec:fedgesam}). Theoretically, we provide the convergence guarantee of FedLESAM and prove a slightly tighter bound than its original FedSAM~(Sec.~\ref{sec:analysis}).
     \vspace{-15pt}
	\item Empirically, we conducted comprehensive experiments on four benchmark datasets under three partition strategies to show the superior performance and the efficiency and ability to minimize global sharpness~(Sec.~\ref{sec:exp}).
 \vspace{-7pt}
\end{itemize}

\section{Preliminaries} \label{sec:method}
This section shows basic notations, definitions of SAM, and FedAvg. See Appendix~\ref{app:related} for the detailed related works.

\subsection{Basic Notations}\label{sec:notation}
The basic notations used in the paper are outlined as follows:
\begin{itemize}[leftmargin=10pt]
 \vspace{-9pt}
    \item $i, k, t$: Sequence number of client, local iteration within a round and the communication round, respectively.
     \vspace{-2pt}
    \item $\eta_\mathrm{l}, \eta_\mathrm{g}$: Local and global learning rate, respectively.
     \vspace{-2pt}
    \item $P(x,y)$, $P_i(x,y)$: Data distributions of the global and the $i$-th client, and satisfies $P(x,y)=\mathbb{E}_i P_i(x,y)$.
     \vspace{-3pt}
    \item $\xi$, $\xi_i$: One random variable $(x, y)$ sampled from $P(x,y)$ or $P_i(x,y)$, respectively.
     \vspace{-2pt}
    \item $w$, $w^t$, $w^t_{i,k}$: Model weights and weights of the global and local models of $i$-th client at $k$-th iteration in t-th round.
     \vspace{-2pt}
    \item $\mathcal{L}, \mathcal{L}(w,\xi)$: Loss function and specific loss of a sample.
     \vspace{-2pt}
    \item $F(w)$, $F_i(w)$: Expected loss under $w$ in the global distribution and in the client distribution, respectively.
     \vspace{-2pt}
    \item $\delta, \rho$: Perturbation towards to the sharpest point near the neighborhood of $w$, and the  pre-defined magnitude of $\delta$.
\end{itemize}
\subsection{Sharpness and SAM} \label{sec:intro_sam}
\textbf{Sharpness.}
Sharpness~\cite{keskar2017on} at $w$ with a loss function $\mathcal{L}$ and data distribution $P(x,y)$ can be defined as
\begin{equation}
    s(w, P) \triangleq \max_{\| \delta\|_2 \leq \rho} \mathbb{E}_{\xi \sim P(x,y)}[\mathcal{L}(w+\delta;\xi) - \mathcal{L}(w;\xi)]. \nonumber
\end{equation}
\textbf{SAM.} Many studies~\citep{first_flatness_nips1994,loss_landscape_nips_2018,sharp_minima_icml_2017} have demonstrated that a flat minimum tends to exhibit superior generalization ability in deep learning models and \citet{sam} proposed a sharpness-aware minimization~(SAM) as
\begin{equation}
   \min_w F^\mathrm{SAM}(w)=\min_w \max_{\| \delta\|_2 \leq \rho} \mathbb{E}_{\xi\sim P(x,y)} \mathcal{L}(w+\delta;\xi). \nonumber
\end{equation}
SAM minimizes both the sharpness and loss in two steps: 1) calculate perturbation as $\delta=\rho \frac{\nabla F(w)}{\|\nabla F(w)\|}$; 2) update the model with the gradient calculated after perturbation as $w=w-\eta \nabla F(w+\delta)$, where $\eta$ is the learning rate.

\subsection{Federated Learning via FedAvg} \label{sec:fedavg}
As shown in Algorithm~\ref{alg:fedgesam}, the vanilla FL via FedAvg~\cite{fedavg} consists of four steps: 1) In round $t$, the server distributes the global model $w^{t}$ to active $K$ clients; 2) Active clients receive and continue to train the model, \textit{e.g.,} the $i$-th client conducts the local training as $w_{i,k+1}^t\leftarrow w_{i,k}^t - \eta_\mathrm{l} \nabla \mathcal{L}(w_{i,k}^t,b_{i,k}^t)$, where $b_{i,k}^t$ is a batch of data and $k=0,...,E-1$; 3) After $E$ steps, the updated models are then communicated to the server; 4) The server performs the aggregation to acquire a new global model as $w^{t+1}\leftarrow  w^{t}-\eta_\mathrm{g}\frac{1}{K}\sum_{i=1}^K (w^{t}-w^{t}_{i,E}),$ where $K$ is the number of active clients in round $t$. When maximal round $T$ reaches, we will have the final optimized model $w^T$. 
\section{Rethink SAM in FL}
\label{sec:rethink}
This section delves into the analysis on when SAM works in FL, related works, a verification on the sharpness minimizing discrepancy, and our motivation. 
\subsection{When SAM Works in FL and Recent Works} \label{sec:how_work}
Given $i$-th client data distribution $P_i(x,y)$ and global distribution $P(x,y)$ with the relationship $P(x,y)=\mathbb{E}_i P_i(x,y)$, the SAM objective in centralized training is defined as follows~\cite{sam}:
\begin{equation}\label{eq:central}
\max_{\| \delta\| \leq \rho} \mathbb{E}_{\xi\sim P}\mathcal{L}(w+\delta;\xi)=\max_{\| \delta\| \leq \rho}\mathbb{E}_i \mathbb{E}_{\xi_i\sim P_i}\mathcal{L}(w+\delta;\xi_i).
\end{equation}
Constrained by the communication during the multi-step local updates, prevalent federated approaches integrate SAM into the local training~\cite{fedsameccv,fedsamicml,fedgamma,fedsmoo}. The SAM objective in FL is then formulated as
\begin{equation}
\mathbb{E}_i \max_{\| \delta_i\| \leq \rho} \mathbb{E}_{\xi_i\sim P_i}\mathcal{L}(w_i+\delta_i;\xi_i) \label{eq:conflict},
\end{equation}
where $\delta_i$ and $w_i$ are $i$-th client's perturbation and model weights. When client models are aligned in local updates, the objective of Equation~\ref{eq:conflict} is an upper bound of Equation~\ref{eq:central}:
\begin{equation}
\mathbb{E}_i \max_{\| \delta_i\| \leq \rho} \mathbb{E}_{\xi_i\sim P_i}\mathcal{L}(w+\delta_i;\xi_i)
\geq \max_{\| \delta\| \leq \rho}\mathbb{E}_i \mathbb{E}_{\xi_i\sim P_i}\mathcal{L}(w+\delta;\xi_i),
\nonumber
\end{equation}
where the inequality is from Jensen's inequality, specifically $\mathbb{E} [\max (x)]\geq \max (\mathbb{E} [x])$.
However, as the number of local updates and the degree of data heterogeneity increase, it becomes more difficult to maintain consistency of the global model with the client models.
In this case, minimizing local sharpness can not effectively achieve a global flat minimum.

Recent works, FedSAM~\cite{fedsameccv,fedsamicml}, MoFedSAM~\cite{fedsamicml}, and FedGAMMA~\cite{fedgamma}, all did not address this intrinsic discrepancy while MoFedSAM and FedGAMMA might mitigate this by introducing local momentum and variance reduction to prevent client drifts. 
FedSMOO~\cite{fedsmoo} noticed the difference and added a regularizer as FedDyn~\cite{fedyn} to correct both client updates and perturbations. 

\subsection{Verification and Motivation.} \label{sec:motivation}
To demonstrate the conflicts with heterogeneous data, we conducted experiments on CIFAR10 under the Dirichlet distribution with a coefficient of 0.1 and traced the global sharpness. 
As shown in the right panel of Figure~\ref{fig:motivation}, compared to FedAvg, FedSAM could not achieve satisfactory global flatness while MoFedSAM, FedGAMMA and FedSMOO obtained smaller sharpness but are still far away from our FedLESAM and the centralized training.
To further align the efficacy in FL~(Equation~\ref{eq:conflict}) with centralized training~(Equation~\ref{eq:central}), aside from increasing communication frequency (which raises communication costs), strategies for effectively minimizing global sharpness in FL involve reducing inconsistencies in client updates and estimating global perturbations in clients. 
The former that guarantees to minimize an upper bound of the global sharpness can be achieved by incorporating previous research such as Scaffold~\cite{scaffold} and FedDyn~\cite{fedyn}. 
Therefore, the challenge remains in correctly estimating the global perturbation in clients.
FedSMOO~\cite{fedsmoo} attempted to address this by correcting local perturbations, but it introduces many computational overheads as other SAM-based algorithms, which increases the expenses of clients in the federation.
To effectively optimize global sharpness and reduce the computational burden on clients, we propose a novel and efficient algorithm called FedLESAM and design two effective variants based on the frameworks Scaffold and FedDyn. 
FedLESAM locally estimates the direction of global perturbation on the client side as the difference between global models received in the previous active and the  current rounds without extra computation.

\begin{figure}[t!]
\centering  
\subfigure{
\includegraphics[width=0.23\textwidth]{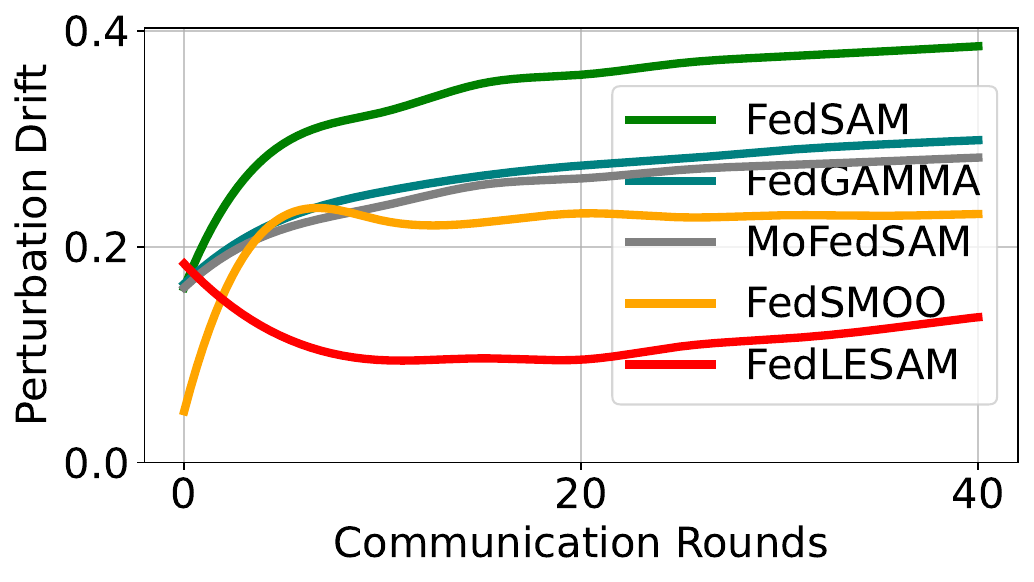}}\vspace{-8pt}
\subfigure{
\includegraphics[width=0.23\textwidth]{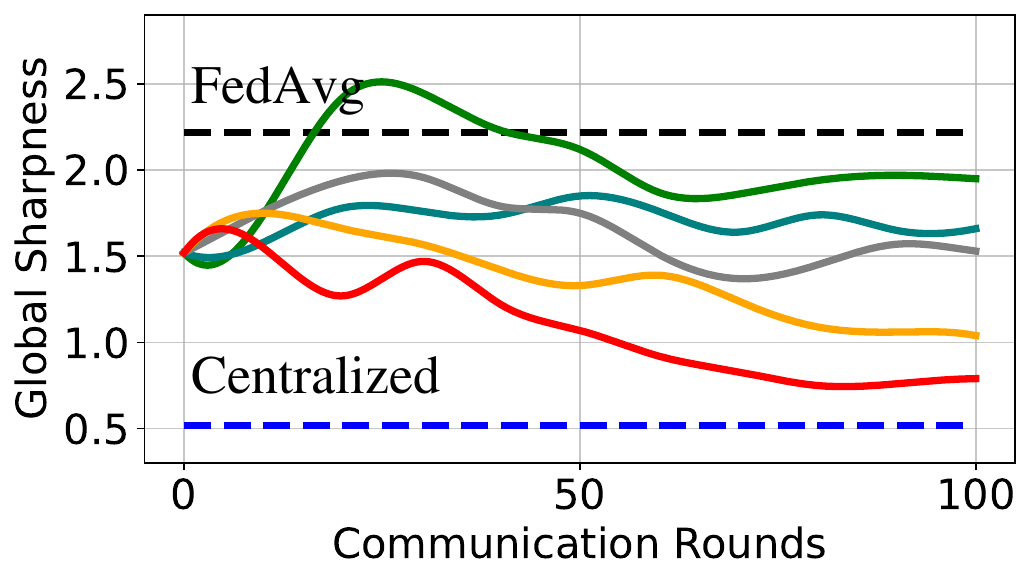}}
\caption{Illustration of perturbation drift~(left) ranged from 0 to 1 and global sharpness~(right) during federated training. The experiment was conducted on CIFAR10 under the Dirichlet distribution with coefficient of 0.1 with 100 clients and active ratio of 10\%. }
\vspace{-10pt}
\label{fig:motivation}
\end{figure}

\section{Method: FedLESAM}
\label{sec:fedgesam}
This section introduces our method FedLESAM with some primary analysis on the reasonableness, and demonstrates the total framework followed by two enhanced variants. 

\subsection{Efficiently Estimate Global Perturbation on Client}
As motivated above, our goal is to efficiently estimate global perturbation at each client without incurring additional computation overheads. 
To realize this, we first recall the definition of global sharpness-aware minimization in FL: 
\begin{equation}
\min_{w} \max_{\left\|\delta \right\|_2 \leq \rho}\left\{F(w+\delta)=\frac{1}{N} \sum_{i =1}^N F_i(w+\delta)\right\},
\nonumber
\end{equation}
where N is the number of clients. 
Without considering the communication frequency of local model weights between clients, we can obtain the virtual global perturbation $\delta_{\mathrm{g},k}^t$ at the $k$-th iteration in round $t$ as follows:
\begin{equation}
\delta_{\mathrm{g},k}^t=\rho\frac{\nabla F(w_{\mathrm{g},k}^t)}{\|\nabla F(w_{\mathrm{g},k}^t)\|}=\rho\frac{\sum_{i=1}^N \nabla F_i(w_{\mathrm{g},k}^t)}{\|\sum_{i=1}^N \nabla F_i(w_{\mathrm{g},k}^t)\|},
\nonumber
\end{equation}
where $w^t_{\mathrm{g},k}=w^t-\eta_\mathrm{g} \frac{1}{N}\sum_{i=0}^N (w^t-w_{i,k}^t) $ is the virtual global model. 
However, we can neither share weights nor gradients of clients during the local training. 
An alternative way is to estimate the global gradient at clients.
As illustrated in Figure~\ref{fig:intro_6}, we can estimate the global gradient in red color as the global updates in black color between two communication rounds $w^{t}-w^{t-1}$. 
Such estimation strategy is also applied in Scaffold~\cite{scaffold} as a global update to correct client updates, which is introduced in Appendix~\ref{app:variants}.
Under straggler situations, clients might not be active at all rounds and obtain $w^{t-1}$. 
Since the communication with server for $w^{t-1}$ increases communication cost, clients can utilize the global model received in the previous active round, denoted as $w^\mathrm{old}_i$. 
Therefore, the global perturbation can be approximately calculated as follows:
\begin{equation}\label{eq:estimate}
\delta_{\mathrm{g},k}^t=\rho\frac{\nabla F(w_{\mathrm{g},k}^t)}{\| \nabla F(w_{\mathrm{g},k}^t)\|}\approx \rho\frac{w^\mathrm{old}_i-w^t}{\|w^\mathrm{old}_i-w^t\|}.
\end{equation}
Notably, here we utilize $w^\mathrm{old}_i-w^t$ to estimate the direction of global gradient and the scaling issue from previous iteration to current iteration is addressed in the calculation of perturbation $\frac{w^\mathrm{old}_i-w^t}{\|w^\mathrm{old}_i-w^t\|}$. Under full participation or permitted to communicate last-round global model with server, $w^{old}$ will be equal to $w^{t-1}$.
\emph{For practical, we forbid such communication in the experiments.} Finally, we define the update of our FedLESAM that locally estimates global perturbation for SAM as
\begin{equation}
w_{i,k+1}^t=w_{i,k}^t-\eta_\mathrm{l}\nabla F_i(w_{i,k}^t+\rho\delta_{\mathrm{g},k}^t).\nonumber
\end{equation}

\textbf{Reasonableness.} Our estimation is possible, if the direction of ascent step on data sampled from general distribution $P$ can be inferred by global updates: $\nabla F(w_g^t,\xi\in P)\approx C\Delta w_g^t=C'(w^{t-1}-w^{t})\approx C''(w^{old}-w^{t}),$
where C, C' and C'' are constant values. This strategy is also applied in Scaffold to estimate global desent step. To show the reasonableness of the estimation on global perturbation, here we provide some primary analysis. 
In Section~\ref{sec:error}, we provide the estimation bias under one local update and full participation. 
The error can be bounded and influenced by the smoothness of the global loss function, learning rates, data heterogeneity, and sampling in stochastic gradient. 
To reduce the bias, we could set proper global and local learning rates. 
Empirically, we conducted an experiment on CIFAR10 and traced the global sharpness and perturbation drifts~(PD). 
A PD is value to estimate bias between local and global perturbations defined as $\text{PD}^t=\frac{1}{2KE}\sum_{k=0}^{E-1}\sum_{i=1}^K\|\delta^t_{\mathrm{g},k}-\delta_{i,k}^t\|$,
where $K$ is the number of active clients, $\delta^t_{\mathrm{g},k}$ is the global perturbation, and $\delta_{i,k}^t$ is the local perturbation. 
As shown in Figure~\ref{fig:motivation}, the PD value and global sharpness of our FedLESAM are much smaller than others, which verifies the effectiveness of our method in estimating the global perturbation and the superior ability to minimize global sharpness.

\subsection{Total Framework} 
The overall framework is summarized in Algorithm~\ref{alg:fedgesam}.
At the perturbation stage, clients use the difference between global model received in the last active round $w_i^\mathrm{old}$ and newly received global model $w^t$ as the direction of the global perturbation throughout the local training.
Then, all selected clients calculate the gradient after perturbation and perform local updates. 
At the end of local training, all local clients update the $w_i^\mathrm{old}$ as the originally received global model. 
The key distinction between FedAvg, FedSAM, and FedLESAM lies in the perturbation stage, highlighted in Algorithm~\ref{alg:fedgesam}. 
Unlike FedAvg, FedSAM calculates the perturbations as local gradients, while our FedLESAM leverages $w_i^\mathrm{old}$ to estimate global perturbation, reducing computational demands. 
Other parts in FedSAM and FedLESAM such as aggregation and communication are the same as FedAvg.

\subsection{Enhanced Variants} 
The global perturbation in our method is estimated as the difference between global models received in the previous active and the current rounds throughout the local training. 
When the global update is changing fast or the local models are far away from each other, the estimated perturbation might not be accurate. 
Therefore, to eliminate the inconsistencies for better estimation and a fair comparison with FedGAMMA and FedSMOO, we incorporate the variance reduction of Scaffold and dynamic regularizer of FedDyn into FedLESAM and propose two variants named FedLESAM-S and FedLESAM-D. In Appendix~\ref{app:variants}, we introduce them in detail and provide concrete algorithms.

\section{Theoretical Analysis}
\label{sec:analysis}
Generalization results proposed by~\citet{fedsamicml} and~\citet{fedsmoo} are both suitable for our FedLESAM.
Here we mainly focus on the convergence results of FedLESAM compared to its original FedSAM with an independent perturbation magnitude. 
The convergence results of our variants FedLESAM-S and FedLESAM-D can be easily extended.

\begin{algorithm}[t!]
    {\footnotesize
    \caption{FedAvg, FedSAM and FedLESAM}
    \label{alg:fedgesam}
    \textbf{Input}:$(K, \rho, w^0, E, T, \eta_\mathrm{l}, \eta_\mathrm{g}, \forall i~ w_i^\mathrm{old}=0)$
    \begin{algorithmic}
    \FOR{$t = 0,1,\dots,T-1$}
        \FOR{sampled $n$ active client $i = 1,2,\dots,n$}
            \STATE receive $w^t$, $w^t_{i,0}\leftarrow w^t$
            \FOR{$k=0,1,...,E-1$} 
                \STATE sample a batch of data $b_{i,k}^t$
                \STATE \colorbox{gray!20}{$\rhd$ perturbation stage}
                \STATE  \colorbox{green!20}{{\textcolor{blue}{FedAvg:~~~~~~~~}}$\delta^t_{i,k}=0$}
                \STATE  \colorbox{blue!20}{{\textcolor{blue}{FedSAM:~~~~~~}}$\delta^t_{i,k}=\rho \frac{\nabla\mathcal{L}\left(w^t_{i,k} ; b_{i,k}^t\right)}{\|\nabla\mathcal{L}\left(w^t_{i,k} ; b_{i,k}^t\right)\|}$ }
                \STATE \colorbox{red!20}{{\textcolor{blue}{FedLESAM:~}}$\delta^t_{i,k}=\rho \frac{w_i^\mathrm{old}-w^t}{\|w_i^\mathrm{old}-w^t\|}$ }
                \STATE $w_{i,k+1}^{t} \leftarrow w_{i,k}^t - \eta_\mathrm{l} \nabla\mathcal{L}(w_{i,k}^t+\delta^t_{i,k} ; b_{i,k}^t)$
            \ENDFOR
                \STATE \colorbox{red!20}{{\textcolor{blue}{FedLESAM:~}}store $w_i^\mathrm{old}=w^t$}
                \STATE submit $w^{t}_{i,E}.$
        \ENDFOR
        \STATE $w^{t+1} \leftarrow w^t-\eta_\mathrm{g}\sum_{i=1}^{K}{w^t-w^t_{i,E}}$.
    \ENDFOR
    \end{algorithmic}
{\textbf{Output}:$w^T.$}
}
\end{algorithm} 

\subsection{Basic Assumptions}
We first introduce some basic assumptions on clients' loss functions~$F_1,\cdots, F_N$ and their gradients, which are the same as FedSAM~\cite{fedsamicml}. Assumptions~\ref{asm:smooth_var}-\ref{asm:grad_differ} characterize the smoothness, bound on the variance of unit stochastic gradients, and the bound on the gradient difference between local and global objectives, while Assumption~\ref{asm:unit_variance}-\ref{asm:unit_differ} cares more about the bounds under averaged situations.

\begin{assumption}[$L$-smooth and bounded variance of unit stochastic gradients] 
$F_1, \cdots, F_N$ are all $L$-smooth:
\begin{equation}
    \left\|\nabla F_i(u)-\nabla F_i(v)\right\| \leq L\|u-v\| \nonumber,
\end{equation}
and the variance of unit stochastic gradients is bounded:
\begin{equation}
\mathbb{E}\left\|\frac{\nabla F_i\left(u, \xi_i\right)}{\left\|\nabla F_i\left(u, \xi_i\right)\right\|}-\frac{\nabla F_i(u)}{\left\|\nabla F_i(u)\right\|}\right\|^2 \leq \sigma_\mathrm{l}^2. \nonumber
\end{equation}
\label{asm:smooth_var}
\end{assumption}
\begin{assumption}[Bounded heterogeneity] 
The gradient difference between $F(u)$ and $F_i(u)$ is bounded:
\begin{equation}
   \left\|\nabla F_i\left(u\right)-\nabla F\left(u\right)\right\|^2 \leq \sigma_\mathrm{g}^2 \nonumber
\end{equation}
\label{asm:grad_differ}
\vspace{-14pt}
\end{assumption}

\begin{assumption}[Bounded unit variance] 
Variance of unit averaged stochastic gradients is bounded:
\begin{equation}
\mathbb{E}\left\|\frac{\sum_{i=1}^N \nabla F_i\left(u, \xi_i\right)}{\left\|\sum_{i=1}^N \nabla F_i\left(u, \xi_i\right)\right\|}-\frac{\sum_{i=1}^N \nabla F_i\left(u\right)}{\left\|\sum_{i=1}^N \nabla F_i\left(u\right)\right\|}\right\|^2\leq \sigma_\mathrm{l}'^2. \nonumber
\end{equation}
\label{asm:unit_variance}
\vspace{-14pt}
\end{assumption}

\begin{assumption}[Bounded unit difference] 
The variance of unit averaged gradient difference between $F(u)$ and $\sum_{i=1}^NF_i(u)$ is bounded:
\begin{equation}
\frac{\sum_{i=1}^N \nabla F_i\left(u\right)}{\left\|\sum_{i=1}^N \nabla F_i\left(u\right)\right\|}-\frac{\nabla F\left(u\right)}{\| \nabla F\left(u\right)\|}\leq \sigma_\mathrm{g}'^2. \nonumber
\end{equation}
\label{asm:unit_differ}
\vspace{-14pt}
\end{assumption}

\subsection{Convergence Results and Trade-off}
\begin{theorem}
    \label{thm:gefedsam}
    Let Assumption~\ref{asm:smooth_var}-\ref{asm:grad_differ} hold, with an independent $\rho$ under full participation, if choosing $\eta_\mathrm{l}=\frac{1}{\sqrt{T} E L}$ and $\eta_\mathrm{g}=\sqrt{E N}$, the sequence of $\{w^t\}$ generated by FedSAM and FedLESAM in Algorithm~\ref{alg:fedgesam} satisfies:
    \begin{equation}
    \begin{aligned} 
    &\frac{1}{T} \sum_{t=1}^T \mathbb{E}\left[\left\|\nabla F\left(w^{t+1}\right)\right\|\right]\leq \frac{10L(F\left(w^0\right)-F^*)}{C\sqrt{T E N}}\\  
    & +\frac{90L^2\rho^2\sigma_\mathrm{g}^2}{CTE}+\frac{180 L^2\rho^2}{C T}+\Delta+\frac{L^2\sigma_\mathrm{l}^2 \rho^2}{C\sqrt{T E N}}, \nonumber
    \end{aligned}
    \end{equation}
    where $C \geq (\frac{1}{2}-30 E^2 L^2 \eta_\mathrm{l}^2) \geq 0$. For FedSAM, $\Delta=\frac{120L^2\rho^2}{CET^2}+\frac{2L^2\rho^2}{CT}$, while for our FedLESAM, $\Delta=0$.
\end{theorem}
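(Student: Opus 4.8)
The plan is to run the standard descent analysis for federated SGD with a perturbed (SAM-type) local step and to track the single place where FedLESAM's \emph{round-constant} perturbation eliminates a term that FedSAM must pay. Since $F=\frac1N\sum_i F_i$ inherits $L$-smoothness from Assumption~\ref{asm:smooth_var}, I would begin from the one-round descent inequality
\[
\mathbb{E}\,F(w^{t+1})\le F(w^t)+\mathbb{E}\langle\nabla F(w^t),w^{t+1}-w^t\rangle+\tfrac{L}{2}\mathbb{E}\|w^{t+1}-w^t\|^2 ,
\]
and substitute $w^{t+1}-w^t=-\eta_\mathrm g\eta_\mathrm l\sum_{i}\sum_{k=0}^{E-1}\nabla\mathcal L(w^t_{i,k}+\delta^t_{i,k};b^t_{i,k})$. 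Taking expectation over the minibatches turns the cross term into a double sum of $\langle\nabla F(w^t),\nabla F_i(w^t_{i,k}+\delta^t_{i,k})\rangle$, and the identity $2\langle a,b\rangle=\|a\|^2+\|b\|^2-\|a-b\|^2$ exposes a negative multiple of $\|\nabla F(w^t)\|^2$ plus an error governed by $\sum_{i,k}\mathbb E\|\nabla F(w^t)-\nabla F_i(w^t_{i,k}+\delta^t_{i,k})\|^2$.

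Next I would decompose that error, through $\nabla F_i(w^t)$ and $\nabla F_i(w^t+\delta^t_{i,k})$, into: (i) the heterogeneity gap $\|\nabla F_i(w^t)-\nabla F(w^t)\|^2\le\sigma_\mathrm g^2$ (Assumption~\ref{asm:grad_differ}); (ii) the perturbation shift, bounded by $L\|\delta^t_{i,k}\|\le L\rho$, together with the variance of the normalized perturbation direction, bounded by $\sigma_\mathrm l^2$ via Assumption~\ref{asm:smooth_var}; and (iii) the client drift $\|w^t_{i,k}-w^t\|$. The drift is closed by the usual recursion: unrolling the local update, applying $L$-smoothness and $\|\delta^t_{i,k}\|\le\rho$, one obtains $\sum_k\mathbb E\|w^t_{i,k}-w^t\|^2\le O(E^2\eta_\mathrm l^2)\bigl(\|\nabla F(w^t)\|^2+\sigma_\mathrm g^2+L^2\rho^2+\sigma_\mathrm l^2\rho^2\bigr)$ provided $30E^2L^2\eta_\mathrm l^2\le\tfrac12$, which is exactly the condition making $C\ge\tfrac12-30E^2L^2\eta_\mathrm l^2\ge0$.

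The crux is step (ii). For FedLESAM, $\delta^t_{i,k}=\rho(w_i^\mathrm{old}-w^t)/\|w_i^\mathrm{old}-w^t\|$ is the \emph{same for every $k$} within a round, so it contributes only one $L^2\rho^2$ shift per client with no coupling to the $E$ inner iterations; this is precisely why $\Delta=0$. For FedSAM, $\delta^t_{i,k}=\rho\nabla\mathcal L(w^t_{i,k};b^t_{i,k})/\|\cdot\|$ changes at every inner step, so one must additionally control $\|\delta^t_{i,k}-\delta^t_{i,0}\|$; bounding this via $L$-smoothness and the normalization yields extra drift-weighted $L^2\rho^2$ terms which, after the learning-rate substitution, assemble into exactly $\Delta=\frac{120L^2\rho^2}{CET^2}+\frac{2L^2\rho^2}{CT}$.

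Finally I would sum the one-round inequality over $t=1,\dots,T$, telescope $F(w^0)-F^*$, divide by $T$, substitute $\eta_\mathrm l=\frac1{\sqrt T EL}$ and $\eta_\mathrm g=\sqrt{EN}$ — so that $\eta_\mathrm g\eta_\mathrm l E=\sqrt{EN/T}/L$ and the residual terms acquire the stated $1/\sqrt{TEN}$, $1/(TE)$ and $1/T$ scalings — collect the leftover constants into $C$, and pass from the squared-gradient estimate to the first-power statement by Jensen's inequality $\frac1T\sum_t\mathbb E\|\nabla F(w^{t+1})\|\le\bigl(\frac1T\sum_t\mathbb E\|\nabla F(w^{t+1})\|^2\bigr)^{1/2}$ (splitting the square root additively across terms). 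I expect the bookkeeping in step (ii) for FedSAM to be the main obstacle: making the perturbation-drift terms collapse cleanly into $\Delta$, and checking that every one of them genuinely disappears once the perturbation is frozen over the round, so that FedLESAM and FedSAM share the same bound except for $\Delta$.
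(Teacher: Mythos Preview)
Your proposal is correct and follows essentially the same route as the paper: a one-round descent inequality, a decomposition of the error that isolates (i) heterogeneity $\sigma_\mathrm g^2$, (ii) perturbation terms of size $L^2\rho^2$, and (iii) client drift bounded via a recursion requiring $30E^2L^2\eta_\mathrm l^2\le\tfrac12$; then sum over rounds and plug in $\eta_\mathrm l=\frac{1}{\sqrt{T}EL}$, $\eta_\mathrm g=\sqrt{EN}$. The pivotal observation---that FedLESAM's perturbation is constant across the $E$ inner steps, so $\|\delta^t_{i,k}-\delta^t_{i,0}\|=0$, whereas FedSAM must carry this term and it produces exactly the surplus $\Delta$---is precisely the paper's Lemma~2, and you have identified it clearly.

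The only cosmetic difference is that the paper runs the whole argument at the \emph{perturbed} iterates $\tilde w^t_{i,k}=w^t_{i,k}+\delta^t_{i,k}$ and splits $\tilde w^t_{i,k}-\tilde w^t_{i,0}=(w^t_{i,k}-w^t)+(\delta^t_{i,k}-\delta^t_{i,0})$, so the perturbation-drift term appears directly in the smoothness step of Lemma~1; you instead reach the same term by decomposing the gradient through $\nabla F_i(w^t)$ and $\nabla F_i(w^t+\delta^t_{i,k})$. Either framing isolates the same quantity. Your remark about passing from $\|\nabla F\|^2$ to $\|\nabla F\|$ via Jensen is in fact more careful than the paper, which simply writes the first-power bound at the last line without comment.
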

As shown in Figure~\ref{fig:intro_5}, local perturbations might guide the aggregated global model far away from the global flat minimum. 
Therefore, we keep $\rho$ as an independent constant and provide the updated convergence results of FedSAM and our FedLESAM under full client participation as shown in Theorem~\ref{thm:gefedsam}. 
It can be seen that, by replacing the local perturbation of FedSAM with our locally estimated global perturbation, the convergence bound can be reduced by a rate of $\Delta=\frac{120L^2\rho^2}{CET^2}+\frac{2L^2\rho^2}{CT}$. 
The complete proof is provided in Appendix~\ref{app:proof}. 
Notably, the independent perturbation magnitude $\rho$ will influence the largest term $\mathcal{O}\left(\frac{1}{\sqrt{T}}\right)$ in the convergence bound as $\frac{L^2\sigma_\mathrm{l}^2 \rho^2}{C\sqrt{T E N}}$.
To mitigate the influence, all existing convergence theorems ~\cite{fedsamicml,fedgamma,fedsmoo} require the perturbation magnitude $\rho$ be a scale of total rounds like $\mathcal{O}\left(\frac{1}{\sqrt{T}}\right)$. 
However, as generalization results analyzed by~\citet{sam}, \citet{fedsamicml}, and ~\citet{fedsmoo}, $\rho$ is highly related to the generalization error bound. 
Note that, those generalization results are commonly suitable for federated SAM algorithms, including our FedLESAM. 
Therefore, the chosen of $\rho$ will be a significant trade-off between the generalization and convergence. 
In the ablation study of Sec.~\ref{sec:exp} and as shown in Figure~\ref{fig:ablation}, we empirically verify the relationships. 

\subsection{Estimation Error}\label{sec:error}
\begin{theorem}
    \label{thm:naive}
    Assume local update is one step and follows Assumptions~\ref{asm:unit_variance}-~\ref{asm:unit_differ}. Under full participation and $L_\mathrm{g}$-smoothness of $F$ with global and local learning rates $\eta_\mathrm{g}$ and $\eta_\mathrm{l}$, the estimation bias is bounded as
    \begin{equation}
    \|\frac{w^{t-1}-w^t}{\|w^{t-1}-w^t\|}-\frac{\nabla F(w^t)}{\|\nabla F(w^t)\|}\| \leq 3\sigma_\mathrm{l}'^2+3\sigma_\mathrm{g}'^2+3L_\mathrm{g}^2 \eta_\mathrm{g}^2\eta_\mathrm{l}^2.
    \nonumber
    \end{equation}
\end{theorem}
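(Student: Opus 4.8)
\textbf{Proof proposal for Theorem~\ref{thm:naive}.}

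The plan is to expand the global update $w^{t-1}-w^t$ in terms of the aggregated one-step local updates and compare it, after normalization, to $\nabla F(w^t)$. Under full participation, one local step, and global learning rate $\eta_\mathrm{g}$, we have $w^t - w^{t-1} = -\eta_\mathrm{g}\eta_\mathrm{l}\frac{1}{N}\sum_{i=1}^N \nabla F_i(w^{t-1},\xi_i)$, so that $w^{t-1}-w^t$ is a positive scalar multiple of the averaged stochastic gradient $\frac{1}{N}\sum_i \nabla F_i(w^{t-1},\xi_i)$. Hence the first normalized vector $\frac{w^{t-1}-w^t}{\|w^{t-1}-w^t\|}$ is exactly the unit averaged \emph{stochastic} gradient at $w^{t-1}$, and the scaling constants $\eta_\mathrm{g},\eta_\mathrm{l},1/N$ drop out upon normalization.

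I would then bridge from the unit averaged stochastic gradient at $w^{t-1}$ to $\frac{\nabla F(w^t)}{\|\nabla F(w^t)\|}$ through two intermediate vectors, inserting and subtracting: (i) the unit averaged \emph{population} gradient $\frac{\sum_i \nabla F_i(w^{t-1})}{\|\sum_i \nabla F_i(w^{t-1})\|}$, controlled by Assumption~\ref{asm:unit_variance} ($\sigma_\mathrm{l}'^2$); (ii) the unit \emph{global} gradient $\frac{\nabla F(w^{t-1})}{\|\nabla F(w^{t-1})\|}$ at the same point $w^{t-1}$, controlled by Assumption~\ref{asm:unit_differ} ($\sigma_\mathrm{g}'^2$); and finally (iii) moving the base point from $w^{t-1}$ to $w^t$, controlled by $L_\mathrm{g}$-smoothness of $F$ together with the fact that $\|w^t - w^{t-1}\| \le \eta_\mathrm{g}\eta_\mathrm{l}$ times a gradient norm — yielding the $L_\mathrm{g}^2\eta_\mathrm{g}^2\eta_\mathrm{l}^2$ term. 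Applying the triangle inequality and the elementary bound $\|a+b+c\|^2 \le 3(\|a\|^2+\|b\|^2+\|c\|^2)$ (or the non-squared triangle inequality, matching the precise statement) then assembles the three contributions into $3\sigma_\mathrm{l}'^2 + 3\sigma_\mathrm{g}'^2 + 3L_\mathrm{g}^2\eta_\mathrm{g}^2\eta_\mathrm{l}^2$.

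The main obstacle I anticipate is step (iii): bounding the difference between the two \emph{unit} vectors $\frac{\nabla F(w^{t-1})}{\|\nabla F(w^{t-1})\|}$ and $\frac{\nabla F(w^t)}{\|\nabla F(w^t)\|}$ rather than the raw gradients. Normalization is not Lipschitz near a critical point, so one needs either an implicit lower-bound assumption on $\|\nabla F\|$ along the trajectory, or the standard trick of using the identity $\big\|\frac{u}{\|u\|}-\frac{v}{\|v\|}\big\| \le \frac{2\|u-v\|}{\max(\|u\|,\|v\|)}$ and absorbing the gradient-norm denominator — effectively the same device already implicit in Assumptions~\ref{asm:smooth_var} and~\ref{asm:unit_variance}, where unit-gradient variances are simply postulated to be bounded. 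I would handle it by treating the $L_\mathrm{g}$-smoothness bound on $\|\nabla F(w^t)-\nabla F(w^{t-1})\| \le L_\mathrm{g}\|w^t-w^{t-1}\|$ as the numerator and keeping the denominator factor inside the constant, consistent with how the paper states its other bounds; the remaining manipulations are routine applications of the triangle inequality and Young's inequality.
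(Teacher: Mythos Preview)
Your proposal is correct and matches the paper's proof almost exactly: the same one-step reduction $w^{t-1}-w^t\propto\frac{1}{N}\sum_i\nabla F_i(w^{t-1},\xi_i)$, the same three-term telescoping $A+B+C$, and the same bound $\|A+B+C\|^2\le 3(\|A\|^2+\|B\|^2+\|C\|^2)$ with Assumptions~\ref{asm:unit_variance}--\ref{asm:unit_differ} controlling the first two terms. The only cosmetic difference is in step~(iii): the paper bounds $\|C\|^2$ via a first-order Taylor expansion of $\nabla F$ around $w^{t-1}$ together with a small-angle argument ($\tan\phi\approx\phi$, Hessian eigenvalues bounded by $L_\mathrm{g}$) rather than the unit-vector inequality you propose, and both approaches carry the same normalization caveat near critical points that you correctly flag.
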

As shown in Theorem~\ref{thm:naive}, we provide the estimation error bound under one step local update and full participation. 
With Assumption~\ref{asm:unit_variance}-\ref{asm:unit_differ}, the estimation error can be bounded and is influenced by learning rates $\eta_\mathrm{g}$ and $\eta_\mathrm{l}$, smoothness of global function $L_\mathrm{g}$, sampling in stochastic gradient ($\sigma_\mathrm{l}'^2$) and data heterogeneity ($\sigma_\mathrm{g}'^2$).
With this insights, we can reduce the error by providing proper learning rates.
The detailed proof is provided in Appendix~\ref{app:proof}. 

\begin{table*}[th!]
\setlength{\tabcolsep}{5pt}
\centering
\caption{Test accuracy on CIFAR10/100 after 800 rounds under Dirichlet distribution and Pathological splits. $\beta$ is the Dirichlet coefficient selected from $\{0.1, 0.6\}$ and $\alpha$ is the Pathological coefficient, which is the number of active categories in each client. The two datasets are divided into 100 clients and 10\% of them are active at each round in the upper part, while 200 and 5\% in the lower part.}
\vspace{2pt}
\renewcommand\arraystretch{0.8}
\small
\centering
\scalebox{0.96}{
\begin{tabular}{l|cccc|cccc}
\toprule[2pt]
Method & \multicolumn{4}{c|}{ CIFAR10} & \multicolumn{4}{c}{ CIFAR100} \\
\cmidrule{1-9} \#Partition & \multicolumn{2}{c}{ Dirichlet } & \multicolumn{2}{c|}{Pathological} & \multicolumn{2}{c}{ Dirichlet }  & \multicolumn{2}{c}{Pathological} \\
\cmidrule{1-9} \#Coefficient& $\beta=0.6$ & $\beta=0.1$ & $\alpha=6$ & $\alpha=3$ & $\beta=0.6$ & $\beta=0.1$ & $\alpha=20$ & $\alpha=10$  \\
\midrule 
FedAvg & $79.52_{\pm 0.13}$ & $76.00_{\pm 0.18}$& $79.91_{\pm 0.17}$ & $74.08_{\pm 0.22}$ & $46.35_{\pm 0.15}$ & $42.64_{\pm 0.22}$ & $44.15_{\pm 0.17}$ & $40.23_{\pm 0.31}$  \\
FedAdam & $77.08_{\pm 0.31}$ & $73.41_{\pm 0.33}$& $77.05_{\pm 0.26}$ & $72.44_{\pm 0.29}$ & $48.35_{\pm 0.17}$ & $40.77_{\pm 0.31}$ & $41.26_{\pm 0.30}$ & $32.58_{\pm 0.22}$  \\
SCAFFOLD & $81.81_{\pm 0.17}$ & $78.57_{\pm 0.14}$ & $83.07_{\pm 0.10}$ & $77.02_{\pm 0.18}$ & $51.98_{\pm 0.23}$ & $44.41_{\pm 0.15}$& $46.06_{\pm 0.22}$ & $41.08_{\pm 0.24}$   \\
FedCM & $82.97_{\pm 0.21}$ & $77.82_{\pm 0.16}$ & $83.44_{\pm 0.17}$ & $77.82_{\pm 0.19}$ & $51.56_{\pm 0.20}$ & $43.03_{\pm 0.26}$& $44.94_{\pm 0.14}$ & $38.35_{\pm 0.27}$   \\
FedDyn  & $83.22_{\pm 0.18}$ & $78.08_{\pm 0.19}$ & $83.18_{\pm 0.17}$ & $77.63_{\pm 0.14}$ & $50.82_{\pm 0.19}$ & $42.50_{\pm 0.28}$& $44.19_{\pm 0.19}$ & $38.68_{\pm 0.14}$  \\
FedSAM& $80.10_{\pm 0.12}$ & $76.86_{\pm 0.16}$ & $80.80_{\pm 0.23}$ & $75.51_{\pm 0.24}$& $47.51_{\pm 0.26}$ & $43.43_{\pm 0.12}$ & $45.46_{\pm 0.29}$ & $40.44_{\pm 0.23}$   \\
MoFedSAM& $84.13_{\pm 0.13}$ & $78.71_{\pm 0.15}$ & $84.92_{\pm 0.14}$ & $79.57_{\pm 0.18}$& $\underline{54.38}_{\pm 0.22}$ & $44.85_{\pm 0.25}$ & $47.42_{\pm 0.26}$ & $41.17_{\pm 0.22}$   \\
FedGAMMA  & $82.64_{\pm 0.14}$ & $78.95_{\pm 0.15}$ & $83.24_{\pm 0.19}$ & $78.81_{\pm 0.14}$ & $53.41_{\pm 0.20}$ & $46.39_{\pm 0.19}$& $48.41_{\pm 0.14}$ & $43.24_{\pm 0.22}$  \\
FedSMOO  & $\underline{84.55}_{\pm 0.14}$ & $\textbf{80.82}_{\pm 0.17}$ & $\underline{85.39}_{\pm 0.21}$ & $\underline{81.58}_{\pm 0.16}$ & $53.92_{\pm 0.18}$ & $\underline{46.48}_{\pm 0.13}$& $\underline{48.87}_{\pm 0.17}$ & $\underline{44.10}_{\pm 0.19}$  \\
FedLESAM & ${81.04}_{\pm 0.19}$ & ${76.93}_{\pm 0.16}$ & ${81.37}_{\pm 0.17}$ & ${77.30}_{\pm 0.22}$ & ${47.92}_{\pm 0.19}$ & ${44.48}_{\pm 0.20}$& ${46.19}_{\pm 0.21}$ & ${41.20}_{\pm 0.18}$ \\
FedLESAM-D  & ${84.27}_{\pm 0.14}$ & ${80.08}_{\pm 0.19}$ & ${85.62}_{\pm 0.18}$ & ${83.00}_{\pm 0.22}$ & ${53.27}_{\pm 0.17}$ & ${46.42}_{\pm 0.23}$& ${48.26}_{\pm 0.18}$ & ${43.26}_{\pm 0.18}$  \\
FedLESAM-S  & $\color{red}\textbf{84.94}_{\pm 0.12}$ & $\underline{79.52}_{\pm 0.17}$ & $\color{red}\textbf{85.88}_{\pm 0.19}$ & $\color{red}\textbf{83.18}_{\pm 0.15}$ & $\color{red}\textbf{54.61}_{\pm 0.20}$ & $\color{red}\textbf{48.07}_{\pm 0.19}$& $\color{red}\textbf{50.26}_{\pm 0.18}$ & $\color{red}\textbf{44.42}_{\pm 0.17}$  \\
 \midrule 
FedAvg & $75.90_{\pm 0.21}$ & $72.93_{\pm 0.19}$& $77.47_{\pm 0.34}$ & $71.86_{\pm 0.34}$ & $44.70_{\pm 0.22}$ & $40.41_{\pm 0.33}$ & $38.22_{\pm 0.25}$ & $36.79_{\pm 0.32}$  \\
FedAdam & $75.55_{\pm 0.38}$ & $69.70_{\pm 0.32}$& $75.24_{\pm 0.22}$ & $70.49_{\pm 0.26}$ & $44.33_{\pm 0.26}$ & $38.04_{\pm 0.25}$ & $35.14_{\pm 0.16}$ & $30.28_{\pm 0.28}$  \\
 SCAFFOLD& $79.00_{\pm 0.26}$ & $76.15_{\pm 0.15}$ & $80.69_{\pm 0.21}$ & $74.05_{\pm 0.31}$ & $50.70_{\pm 0.18}$ & $41.83_{\pm 0.29}$& $39.63_{\pm 0.31}$ & $37.98_{\pm 0.36}$   \\
 FedCM & $80.52_{\pm 0.29}$ & $77.28_{\pm 0.22}$ & $81.76_{\pm 0.24}$ & $76.72_{\pm 0.25}$ & $50.93_{\pm 0.31}$ & $42.33_{\pm 0.19}$& $42.01_{\pm 0.17}$ & $38.35_{\pm 0.24}$   \\
 FedDyn  & $80.69_{\pm 0.23}$ & $76.82_{\pm 0.17}$ & $82.21_{\pm 0.18}$ & $74.93_{\pm 0.22}$ & $47.32_{\pm 0.18}$ & $41.74_{\pm 0.21}$& $41.55_{\pm 0.18}$ & $38.09_{\pm 0.27}$  \\
 FedSAM& $76.32_{\pm 0.16}$ & $73.44_{\pm 0.14}$ & $78.16_{\pm 0.27}$ & $72.41_{\pm 0.29}$& $45.98_{\pm 0.27}$ & $40.22_{\pm 0.27}$ & $38.71_{\pm 0.23}$ & $36.90_{\pm 0.29}$   \\
 MoFedSAM& $82.58_{\pm 0.21}$ & $78.43_{\pm 0.24}$ & $84.46_{\pm 0.20}$ & $79.93_{\pm 0.19}$& $\textbf{53.51}_{\pm 0.25}$ & $42.22_{\pm 0.23}$ & $42.77_{\pm 0.27}$ & $39.81_{\pm 0.21}$   \\
FedGAMMA  & $80.72_{\pm 0.19}$ & $76.41_{\pm 0.17}$ & $81.81_{\pm 0.17}$ & $76.58_{\pm 0.21}$ & $50.61_{\pm 0.19}$ & $43.77_{\pm 0.19}$& $43.35_{\pm 0.24}$ & $38.46_{\pm 0.22}$  \\
FedSMOO  & $\underline{82.94}_{\pm 0.19}$ & $\textbf{79.76}_{\pm 0.19}$ & $84.82_{\pm 0.18}$ & $\underline{81.01}_{\pm 0.19}$ & $\underline{53.45}_{\pm 0.19}$ & $\textbf{45.83}_{\pm 0.18}$& $\underline{44.70}_{\pm 0.21}$ & $\underline{43.41}_{\pm 0.22}$  \\
FedLESAM  & ${77.74}_{\pm 0.18}$ & ${73.73}_{\pm 0.22}$ & ${78.44}_{\pm 0.20}$ & ${74.53}_{\pm 0.19}$ & ${45.00}_{\pm 0.16}$ & ${41.87}_{\pm 0.23}$& ${42.14}_{\pm 0.18}$ & ${39.32}_{\pm 0.24}$  \\
FedLESAM-D  & ${82.53}_{\pm 0.19}$ & $\underline{79.56}_{\pm 0.27}$ & $\color{red}\textbf{85.04}_{\pm 0.21}$ & $\color{red}\textbf{81.10}_{\pm 0.19}$ & ${51.14}_{\pm 0.20}$ & $\underline{45.09}_{\pm 0.24}$& ${43.97}_{\pm 0.26}$ & ${42.63}_{\pm 0.29}$  \\
FedLESAM-S  & $\color{red}\textbf{83.22}_{\pm 0.22}$ & $78.69_{\pm 0.17}$ & $\underline{85.02}_{\pm 0.24}$ & ${80.57}_{\pm 0.17}$ & $52.26_{\pm 0.18}$ & $44.82_{\pm 0.20}$& $\color{red}\textbf{45.68}_{\pm 0.19}$ & $\color{red}\textbf{43.89}_{\pm 0.23}$  \\
\bottomrule[2pt]
\end{tabular}
}
\label{tb:cifar}
\vspace{-12pt}
\end{table*}

\begin{table*}[th!]
\centering
\caption{Accuracy of the target domain on OfficeHome and DomainNet after 400 rounds under leave-one-domain-out strategy. Each training domain is divided into 1 client and 100\% of them are active at each round in the upper part while 10 and 20\% in the lower part.}
\vspace{2pt}
\renewcommand\arraystretch{1.0}
\label{tb:domain}
\small
\centering
\setlength{\tabcolsep}{2pt}
\scalebox{0.92}{
\begin{tabular}{l|cccc|cccccc}
\toprule[2pt]
Method & \multicolumn{4}{c|}{ Officehome} & \multicolumn{6}{c}{ DomainNet} \\
\cmidrule{1-11} \#Target domain& Art & Clipart & Product & Real World& Clipart & Infograph & Painting & Quickdraw  & Real World& Sketch\\
\midrule 
FedAvg & $79.21_{\pm 0.17}$ & $60.60_{\pm 0.11}$& $86.22_{\pm 0.14}$ & $87.65_{\pm 0.14}$& $54.70_{\pm 0.11}$ & $81.59_{\pm 0.14}$ & $36.27_{\pm 0.27}$ & $76.49_{\pm 0.11}$ & $87.52_{\pm0.10}$ & $87.31_{\pm 0.13}$\\
FedAdam & $79.23_{\pm 0.23}$ & $61.21_{\pm 0.19}$& $86.00_{\pm 0.14}$ & $87.69_{\pm 0.12}$ & $\underline{56.77}_{\pm 0.25}$ & $81.33_{\pm 0.12}$ & $40.14_{\pm 0.24}$ & $78.43_{\pm 0.11}$& $87.46_{\pm 0.10}$& $88.22_{\pm 0.17}$  \\
SCAFFOLD& $80.35_{\pm 0.14}$ & $62.41_{\pm 0.13}$& $86.42_{\pm 0.14}$ & $\underline{88.39}_{\pm 0.11}$& $55.38_{\pm 0.17}$ & $82.28_{\pm 0.09}$ & $41.01_{\pm 0.24}$ & $77.26_{\pm 0.13}$ & $89.09_{\pm0.10}$ & $87.11_{\pm 0.14}$\\
FedCM& $80.10_{\pm 0.17}$& $61.10_{\pm 0.19}$ & $86.55_{\pm 0.17}$& $87.40_{\pm 0.17}$ & $55.30_{\pm 0.21}$ & $81.75_{\pm 0.17}$ & $38.98_{\pm 0.29}$ & $\underline{78.78}_{\pm 0.11}$ & $88.09_{\pm0.14}$ & $88.15_{\pm 0.14}$\\
FedDyn& $79.89_{\pm 0.17}$ & $56.27_{\pm 0.19}$& $84.97_{\pm 0.17}$ & $86.78_{\pm 0.16}$& $54.92_{\pm 0.21}$ & $80.72_{\pm 0.14}$ & $34.71_{\pm 0.27}$ & $77.69_{\pm 0.11}$ & $85.22_{\pm0.14}$ & $87.66_{\pm 0.16}$\\
FedSAM & $79.85_{\pm 0.14}$ & $62.25_{\pm 0.17}$& $86.71_{\pm 0.13}$ & $88.18_{\pm 0.16}$& $55.36_{\pm 0.14}$ & $82.20_{\pm 0.11}$ & $39.19_{\pm 0.20}$ & $77.53_{\pm 0.11}$ & $88.41_{\pm0.14}$ & $88.38_{\pm 0.09}$\\
MoFedSAM & $80.51_{\pm 0.14}$ & $62.47_{\pm 0.19}$& $86.80_{\pm 0.14}$ & $88.24_{\pm 0.11}$& $55.47_{\pm 0.17}$ & $82.33_{\pm 0.13}$ & $40.18_{\pm 0.26}$ & $78.43_{\pm 0.17}$ & $88.96_{\pm0.10}$ & $\underline{89.16}_{\pm 0.16}$\\
FedGAMMA& $\underline{80.63}_{\pm 0.17}$ & $\underline{62.68}_{\pm 0.19}$& $\underline{86.82}_{\pm 0.14}$ & $88.32_{\pm 0.17}$& $55.45_{\pm 0.20}$ & $\underline{82.55}_{\pm 0.14}$ & $\underline{41.10}_{\pm 0.23}$ & $77.30_{\pm 0.11}$ & $\underline{89.17}_{\pm0.09}$ & $87.54_{\pm 0.14}$\\
FedSMOO& $80.42_{\pm 0.17}$ & $57.77_{\pm 0.21}$& $85.43_{\pm 0.16}$ & $87.84_{\pm 0.19}$& $53.61_{\pm 0.24}$ & $81.99_{\pm 0.17}$ & $37.29_{\pm 0.34}$ & $77.92_{\pm 0.19}$ & $86.73_{\pm0.14}$ & $87.82_{\pm 0.19}$\\
FedLESAM& ${79.55}_{\pm 0.19}$ & ${60.57}_{\pm 0.16}$& ${86.49}_{\pm 0.21}$ & ${87.30}_{\pm 0.14}$& ${55.47}_{\pm 0.17}$& ${82.04}_{\pm 0.16}$ & ${39.86}_{\pm 0.12}$ & ${77.42}_{\pm 0.20}$ & ${87.63}_{\pm 0.19}$ & ${86.94}_{\pm0.11}$\\
FedLESAM-D& ${78.85}_{\pm 0.15}$ & ${57.34}_{\pm 0.12}$& ${85.62}_{\pm 0.11}$ & ${86.99}_{\pm 0.10}$& ${54.75}_{\pm 0.18}$ & ${82.24}_{\pm 0.16}$ & ${37.98}_{\pm 0.27}$ & ${77.54}_{\pm 0.22}$ & ${87.12}_{\pm0.19}$ & ${87.54}_{\pm 0.17}$\\

FedLESAM-S& ${\color{red}\textbf{81.10}_{\pm 0.14}}$  & ${\color{red}\textbf{62.86}_{\pm 0.16}}$ & ${\color{red}\textbf{87.34}_{\pm0.14}}$ & ${\color{red}\textbf{89.04}_{\pm 0.10}}$& ${\color{red}\textbf{57.24}_{\pm 0.19}}$& ${\color{red}\textbf{83.15}_{\pm 0.14}}$ & ${\color{red}\textbf{43.49}_{\pm 0.17}}$& ${\color{red}\textbf{79.31}_{\pm 0.10}}$ & ${\color{red}\textbf{89.26}_{\pm 0.09}}$ & ${\color{red}\textbf{89.61}_{\pm 0.14}}$\\
 \midrule 
FedAvg & $78.41_{\pm 0.13}$ & $59.63_{\pm 0.17}$& $85.31_{\pm 0.14}$ & $86.89_{\pm 0.21}$ & $54.15_{\pm0.19}$ & $80.70_{\pm 0.17}$& $35.97_{\pm 0.27}$ & $75.98_{\pm 0.14}$ & $86.56_{\pm 0.11}$ & $85.75_{\pm 0.17}$\\
FedAdam & $79.03_{\pm 0.27}$ & $59.78_{\pm 0.23}$& $85.09_{\pm 0.27}$ & $87.41_{\pm 0.22}$ & $55.21_{\pm 0.25}$ & $80.99_{\pm 0.27}$ & $38.69_{\pm 0.31}$ & $77.10_{\pm 0.19}$& $86.53_{\pm 0.14}$& $87.09_{\pm 0.17}$  \\
SCAFFOLD& $80.21_{\pm 0.19}$ & $60.39_{\pm 0.11}$& $85.99_{\pm 0.13}$ & $87.27_{\pm 0.21}$& $55.86_{\pm 0.24}$ & $81.17_{\pm 0.11}$ & $38.61_{\pm 0.19}$ & $76.57_{\pm 0.11}$ & $88.26_{\pm0.14}$ & $86.87_{\pm 0.13}$\\
FedCM& $80.06_{\pm 0.17}$ & $59.56_{\pm 0.14}$& $85.20_{\pm 0.19}$ & $86.69_{\pm 0.17}$& $55.95_{\pm 0.21}$ & $81.84_{\pm 0.11}$ & $37.89_{\pm 0.25}$ & $77.84_{\pm 0.17}$ & $87.33_{\pm0.13}$ & $85.98_{\pm 0.14}$\\
FedDyn& $77.01_{\pm 0.17}$ & $56.24_{\pm 0.22}$& $83.98_{\pm 0.24}$ & $87.31_{\pm 0.16}$& $52.48_{\pm 0.19}$ & $81.52_{\pm 0.14}$ & $33.10_{\pm 0.29}$ & $76.16_{\pm 0.24}$ & $85.47_{\pm0.13}$ & $86.22_{\pm 0.09}$\\
FedSAM & $79.22_{\pm 0.14}$ & $60.18_{\pm 0.22}$& $86.06_{\pm 0.09}$ & $86.94_{\pm 0.11}$& $55.23_{\pm 0.20}$ & $81.76_{\pm 0.13}$ & $38.90_{\pm 0.26}$ & $77.37_{\pm 0.14}$ & $87.33_{\pm0.11}$ & $86.04_{\pm 0.16}$\\
MoFedSAM & $79.81_{\pm 0.12}$ & $\underline{60.62}_{\pm 0.13}$& $\underline{86.46}_{\pm 0.06}$ & $\underline{87.70}_{\pm 0.17}$& $\underline{56.37}_{\pm 0.19}$ & $\underline{82.28}_{\pm 0.10}$ & $\underline{40.83}_{\pm 0.21}$ & $\underline{77.94}_{\pm 0.17}$ & $87.18_{\pm0.13}$ & $\underline{87.91}_{\pm 0.11}$\\
FedGAMMA& $\underline{80.51}_{\pm 0.11}$ & $60.59_{\pm 0.14}$& $86.35_{\pm 0.17}$ & $87.68_{\pm 0.13}$& $55.38_{\pm 0.20}$ & $81.83_{\pm 0.11}$ & $40.19_{\pm 0.23}$ & $77.30_{\pm 0.14}$ & $\underline{88.83}_{\pm0.09}$ & $87.04_{\pm 0.12}$\\
FedSMOO& $78.70_{\pm 0.21}$ & $57.11_{\pm 0.19}$& $85.43_{\pm 0.11}$ & $87.22_{\pm 0.16}$& $53.44_{\pm 0.31}$ & $81.96_{\pm 0.17}$ & $36.20_{\pm 0.29}$ & $76.94_{\pm 0.11}$ & $86.07_{\pm0.10}$ & $86.65_{\pm 0.09}$\\
FedLESAM& ${79.55}_{\pm 0.19}$ & ${60.57}_{\pm 0.16}$& ${86.49}_{\pm 0.21}$ & ${87.30}_{\pm 0.14}$& ${55.47}_{\pm 0.17}$& ${82.04}_{\pm 0.16}$ & ${39.86}_{\pm 0.12}$ & ${77.42}_{\pm 0.20}$ & ${87.63}_{\pm 0.19}$ & ${86.94}_{\pm0.11}$\\
FedLESAM-D& ${78.85}_{\pm 0.15}$ & ${57.34}_{\pm 0.12}$& ${85.62}_{\pm 0.11}$ & ${86.99}_{\pm 0.10}$& ${54.75}_{\pm 0.18}$ & ${82.24}_{\pm 0.16}$ & ${37.98}_{\pm 0.27}$ & ${77.54}_{\pm 0.22}$ & ${87.12}_{\pm0.19}$ & ${87.54}_{\pm 0.17}$\\
FedLESAM-S& ${\color{red}\textbf{80.73}_{\pm 0.14}}$ & ${\color{red}\textbf{62.13}_{\pm 0.17}}$& ${\color{red}\textbf{87.42}_{\pm 0.19}}$ & ${\color{red}\textbf{87.79}_{\pm 0.11}}$& ${\color{red}\textbf{57.03}_{\pm 0.14}}$ & ${\color{red}\textbf{82.49}_{\pm 0.13}}$ & ${\color{red}\textbf{42.25}_{\pm 0.22}}$ & ${\color{red}\textbf{78.95}_{\pm 0.17}}$ & ${\color{red}\textbf{88.93}_{\pm0.09}}$ & ${\color{red}\textbf{88.74}_{\pm 0.09}}$\\
\bottomrule[2pt]
\end{tabular}
}
\vspace{-6pt}
\end{table*}

\section{Experiments}
\label{sec:exp}
This section introduces some experimental setups including baselines, datasets, splits, and experimental details. Then we show the main results on benchmark datasets followed by extensive further analysis such as ablation and visualization.
\subsection{Experimental Setups}
\textbf{Baselines.} We compare our FedLESAM with FedAvg~\cite{fedavg} and existing federated SAM methods for sloving data heterogeneity including FedSAM~\cite{fedsameccv,fedsamicml}, MoFedSAM~\cite{fedsamicml}, FedGAMMA~\cite{fedgamma}, and FedSMOO~\cite{fedsmoo}. 
We also compare our method with classical federated optimization methods including Scaffold~\cite{scaffold}, FedDyn~\cite{fedyn}, FedAdam~\cite{fedadam}, and FedCM~\cite{fedcm}. 
Since FedGAMMA and FedSMOO respectively draw spirits from Scaffold and FedDyn, for a fair comparison and better minimizing global sharpness, we design two variants named FedLESAM-S and FedLESAM-D based on Scaffold and FedDyn. 
We introduce them in detail in Appendix~\ref{app:variants} and show the ablation in Sec~\ref{sec:further}.

\textbf{Dataset and Splits.} We adopt four popular federated benchmark datasets: CIFAR10/100~\cite{cifar10}, OfficeHome~\cite{officehome} and DomainNet~\cite{domainnet}. 
For CIFAR10/100, we follow \citet{split}, \citet{fedgamma}, and \citet{sun1,sun2,fedsmoo} and use Dirichlet and Pathological splits to simulate Non-IID. 
For OfficeHome and DomainNet, we adopt leave-one-domain-out strategy that selects one domain for test and all other domains for training. 
To simulate \emph{straggler} situations and large scale of clients, we divide CIFAR10/100 into 100 clients with an active ratio of 10\% and 200 clients with an active ratio of 5\%. Each domain in OfficeHome and DomainNet is divided into 1 client with an active ratio of 100\% and 10 clients with an active ratio of 20\%. See Appendix~\ref{app:data} for more details.

\textbf{Experimental Details.}
For a fair comparison on CIFAR10/100, we follow all settings in FedSMOO. 
Backbone is ResNet-18~\cite{resnet} with Group Normalization~\cite{groupnorm} and SGD, total rounds $T=800$, local learning rate $\eta_\mathrm{l}=0.1$, global learning rate $\eta_\mathrm{g}=1$ expect for
FedAdam which adopts $0.1$, and perturbation magnitude $\rho=0.1$ expect for
FedSAM and FedLESAM which adopts $0.01$.
On the CIFAR10, batchsize is $50$, and the local epoch is $5$. 
On the CIFAR100, batchsize equals to $20$, and the local epoch equal to $2$. 
For OfficeHome and DomainNet, we use the pre-trained model ViT-B/32~\cite{vit,dg} as the backbone. 
The optimizer is SGD with a local learning rate 0.001 and a global learning rate $1$ expect for
FedAdam which adopts $0.1$. 
See Appendix~\ref{app:implement} for more details.

\begin{figure*}[t]
    \centering
    \subfigure[FedAvg]{
    \centering
    \label{fig:surface_1}
    \includegraphics[width=0.18\textwidth]{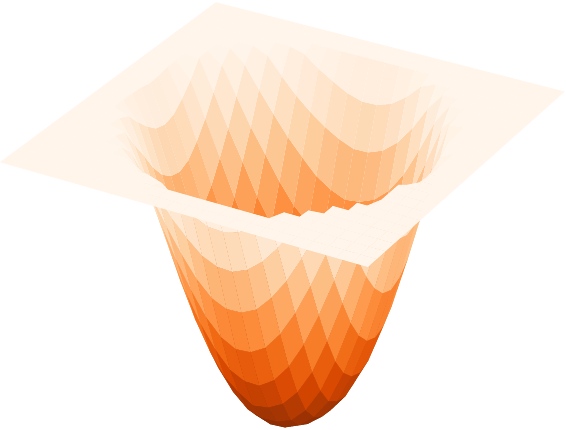}}
    \subfigure[FedSAM]{
    \centering
    \label{fig:surface_2}
    \includegraphics[width=0.18\textwidth]{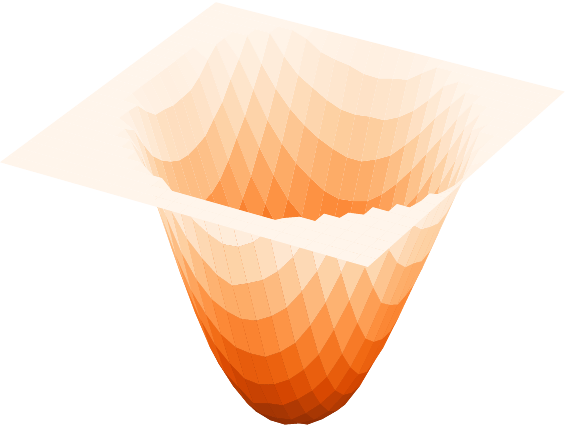}}
    \subfigure[FedGAMMA]{
    \centering
    \label{fig:surface_3}
    \includegraphics[width=0.18\textwidth]{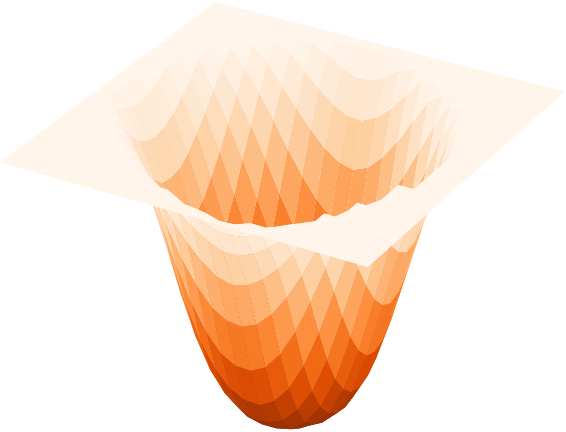}}
    \centering
    \subfigure[FedSMOO]{
    \centering
    \label{fig:surface_4}
    \includegraphics[width=0.18\textwidth]{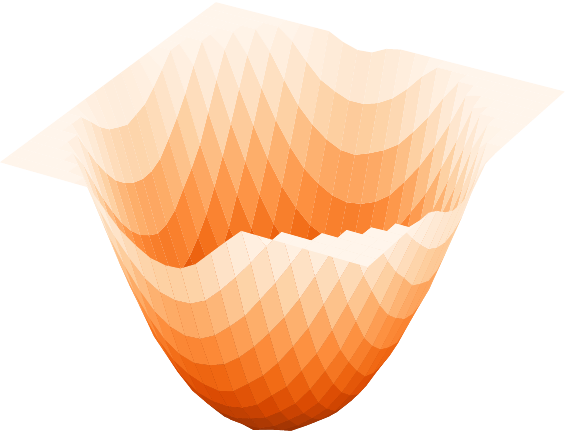}}
    \subfigure[FedLESAM-D]{
    \centering
    \label{fig:surface_5}
    \includegraphics[width=0.18\textwidth]{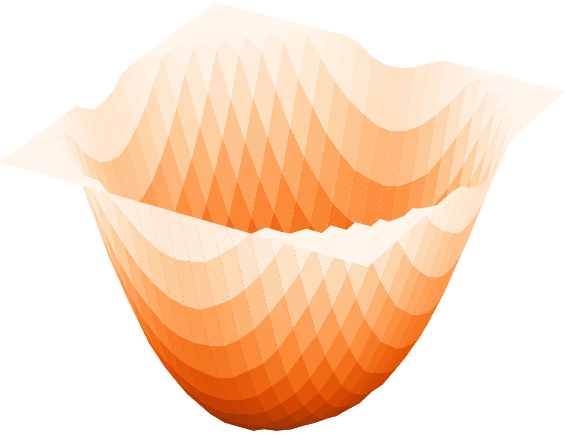}}
    \caption{Visualization of the global loss surface on CIFAR10 under Dirichlet distribution with coefficient 0.1 of FedAvg, FedSAM, FedGAMMA, FedSMOO and our FedLESAM-D. We divide the dataset into 100 clients and in each round 10\% clients are active.}\label{fig:surface}
    \vspace{-8pt}
\end{figure*}

\begin{figure*}[ht!]
\centering  
\subfigure{
\includegraphics[width=0.24\textwidth]{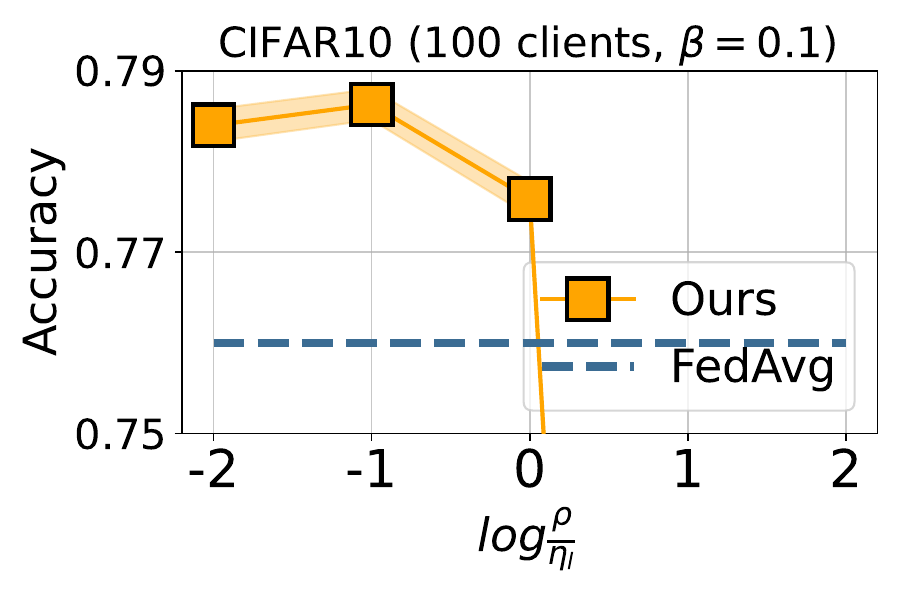}}
\subfigure{
\includegraphics[width=0.24\textwidth]{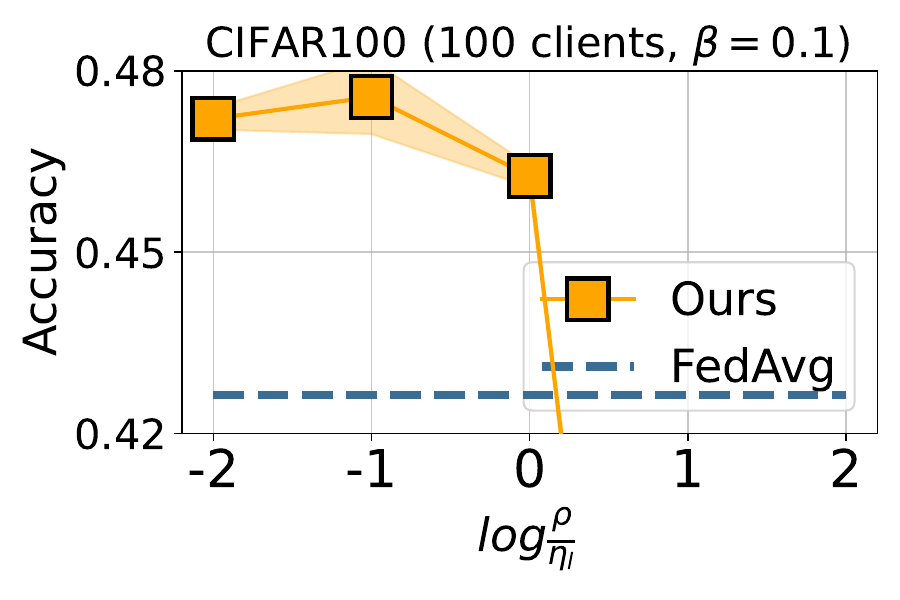}}
\subfigure{
\includegraphics[width=0.24\textwidth]{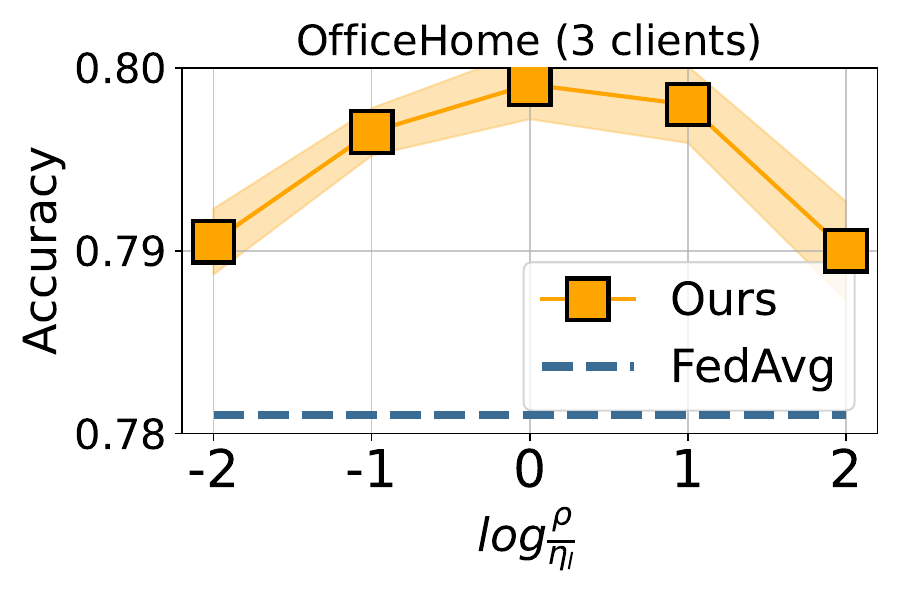}}
\subfigure{
\includegraphics[width=0.24\textwidth]{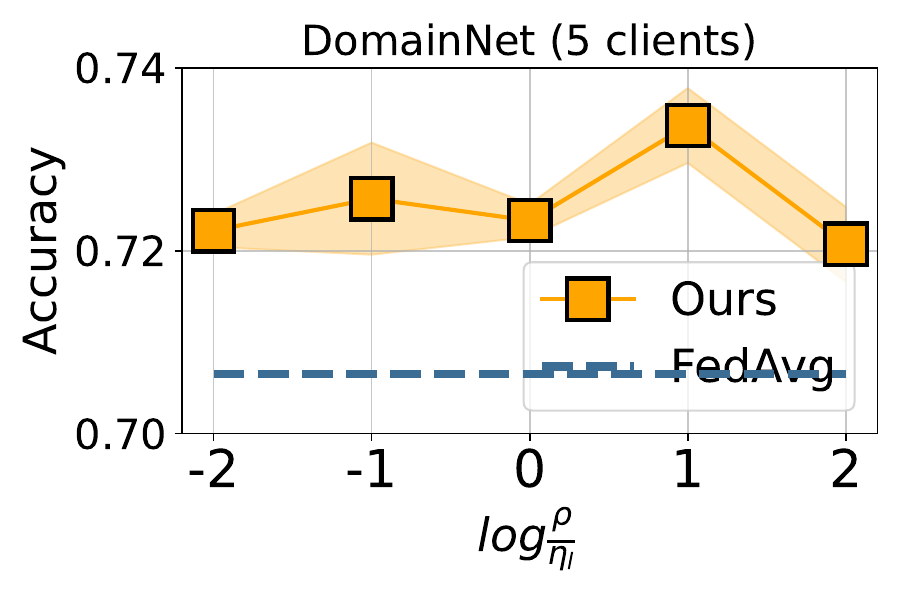}}
\vspace{-8pt}
\caption{Ablation study on $\log\frac{\rho}{\eta_\mathrm{l}}$, where $\eta_\mathrm{l}$ is local learning rate. From left to right, we show the test accuracy on CIFAR10 and CIFAR100~($\eta_\mathrm{l}=0.1$) and the averaged test accuracy of all target domains on OfficeHome and DomainNet~($\eta_\mathrm{l}=0.001$) with different $\rho$.}
\vspace{-15pt}
\label{fig:ablation}
\end{figure*}

\subsection{Main Results}
\textbf{General Performance.} As shown in Table~\ref{tb:cifar}, our method performs the best or second best in the most cases on CIFAR10/100. 
Specifically, in the upper part of Table~\ref{tb:cifar} on CIFAR100, FedLESAM-S outperforms all baselines and achieves averaged improvements of 6\% to FedAvg and 1\% to the best baseline. 
As shown in Table~\ref{tb:domain}, we conduct experiments on OfficeHome and DomainNet under leave-one-domain-out strategy. 
It can be seen that, FedDyn and FedSMOO meet the overfitting problem on unseen domain while our method outperforms all baselines on all settings. 
Especially in the target domain "Painting" shown in the upper part of Table~\ref{tb:domain}, our method achieves improvements of 2.39\% to the best baseline and 7.22\% to FedAvg. 

\textbf{Heterogeneity, Scalability, and Straggler.}
To verify the performance under different levels of data heterogeneity, straggler situations and the scalability to the number of clients, we adopt multiple split strategies. 
As shown in Table~\ref{tb:cifar}, we split CIFAR10/100 into 100 clients and 10\% of them are active at each round in the upper part while 200 and 5\% in the lower part under different coefficient values of Dirichlet and Pathological strategies. 
For DomainNet and OfficeHome, we adopt leave-one-domain-out strategy and each training domain is divided into 1 client and 100\% of them are active at each round in the upper part while 10 and 20\% in the lower part shown in Table~\ref{tb:domain}. 
It can be seen that, compared under all settings, our FedLESAM-S performs well with comprehensive improvement to all baselines. 

\begin{table}[t!]
\centering
\small
\renewcommand\arraystretch{0.98}
\caption{\rm{Ablation study of variants on the averaged test accuracy on the four datasets. FedLESAM with A, S, and D respectively represent variants based on FedAvg, Scaffold, and FedDyn.}}
\vspace{4pt}
    \setlength{\tabcolsep}{1.5pt}
    \scalebox{0.97}{
    \begin{tabular}{l|lllll}
        \toprule[2pt]
        Method&  CIFAR10 & CIFAR100& OfficeHome&DomainNet\\
        \midrule
        FedAvg&$75.96$ &$41.69$&$77.99$&$70.25$ \\
        FedLESAM&$77.64_{1.68\%\uparrow}$ &$43.52_{1.83\uparrow}$&$78.90_{0.91\uparrow}$&$71.98_{1.73\uparrow}$ \\
        Scaffold&$78.80$&$44.21$&$78.93$&$71.62$\\
        FedLESAM-S&$82.63_{3.83\uparrow}$&$\textbf{48.00}_{3.79\uparrow}$&$\textbf{79.80}_{0.87\uparrow}$&$\textbf{73.37}_{1.75\uparrow}$ \\
        FedDyn&$79.60$ &$43.11$&$76.56$&$69.66$\\
        FedLESAM-D&$\textbf{82.65}_{3.05\uparrow}$&$46.78_{3.67\uparrow}$&$77.53_{0.97\uparrow}$&$71.30_{1.64\uparrow}$\\
        \bottomrule[2pt]
    \end{tabular}}
    \label{tab:variants}
    \vspace{-12pt}
\end{table}

\subsection{Further Analysis} \label{sec:further}
\textbf{Ablation of $\rho$.} Since perturbation magnitude $\rho$ critically influences the convergence and performance of SAM-based algorithms, here we tune $\rho$ on the four datasets of FedLESAM-S and compare the averaged test accuracy to FedAvg. 
As shown in Figure~\ref{fig:ablation}, test accuracy initially increases with $\rho$, benefiting from minimizing global sharpness, but then sharply declines as larger perturbations hinder convergence. 
Notably, our FedLESAM-S outperforms FedAvg across a broad range of $\frac{\rho}{\eta_\mathrm{l}}$, especially wider than a range from 10e-2 to 10e2 under OfficeHome and DomainNet. 
Empirically, we recommend setting $\rho$ to approximately 0.1 times the local learning rate $\eta_\mathrm{l}$ when starting with a randomly initialized model, as depicted in the left two panels of Figure~\ref{fig:ablation}, to prevent model breakdown. 
While for pre-trained model as shown in the right two panels of Figure~\ref{fig:ablation}, $\rho$ can be set larger and about 10 times of $\eta_\mathrm{l}$ to better minimize sharpness.

\textbf{Ablation on Variants.} We design FedLESAM under FedAvg and two enhanced methods under Scaffold and FedDyn named FedLESAM-S and FedLESAM-D, respectively. 
Therefore, here we show the averaged performance on CIFAR10/100, OfficeHome and DomainNet of all variants.
As shown in Table~\ref{tab:variants}, all variants achieve extensive improvement to their base methods, especially a notable 3.83\% improvement on CIFAR10 of FedLESAM-S. 
Generally speaking, FedLESAM-S performs the best.

\textbf{Computation and Communication.} Computational time and communication bottleneck are major concerns in FL. Therefore, as shown in Table~\ref{tab:commu}, we compare the total communication rounds, communication costs and computational times of all clients to achieve the target 68\% and 74\% test accuracy of FedAvg, SAM-based algorithms and our two variants. As can be seen that our method achieves competing communication efficiency and slightly smaller communication costs, and greatly reduces the computation. 

\begin{table}[t!]
\centering
\small
\renewcommand\arraystretch{0.95}
\caption{\rm{Total communication rounds, computational time~(minutes) and communication costs~(gigabytes of parameters) to achieve 68\% and 74\% test accuracy under Dirichlet distribution with coefficient of 0.1, 100 clients and 10\% active ratio on CIFAR10 of FedAvg, SAM-based methods and our two variants.}}
\vspace{4pt}
    \setlength{\tabcolsep}{1pt}
    \scalebox{0.95}{
    \begin{tabular}{l|cc|cc|cc}
        \toprule[2pt]
        Method& \multicolumn{2}{c|}{Commu. Round}  & \multicolumn{2}{c|}{Commu. Cost}& \multicolumn{2}{c}{Compu. Time}\\
        \cmidrule{1-7} \#Target Acc.& 68\% & 74\% & 68\% & 74\%& 68\% & 74\%\\
        \midrule
        FedAvg&259 (1x)&786 (1x) &56 (1x)&170 (1x)&28 (1x)&88 (1x)\\
        FedSAM&1.02x&0.83x&1.02x&0.83x&1.96x&1.56x\\
        MoFedSAM&\underline{0.47x}&0.51x&\underline{0.94x}&1.02x&1.08x&1.16x\\
        FedGAMMA&0.76x&0.44x&1.51x&0.89x&1.70x&0.96x\\
        FedSMOO&0.51x&\textbf{0.29x}&1.02x&\textbf{0.58x}&1.15x&0.63x\\
        \midrule
        FedLESAM-S&0.57x&0.38x&1.14x&0.77x&\textbf{0.81x}&\underline{0.53x}\\
        FedLESAM-D&\textbf{0.46x}&\underline{0.31x}&\textbf{0.92x}&\underline{0.60x}&\underline{0.83x}&\textbf{0.30x}\\
        \bottomrule[2pt]
    \end{tabular}}
    \label{tab:commu}
    \vspace{-10pt}
\end{table}
\textbf{Visualization of Global Loss Surface.} As shown in Figure~\ref{fig:surface}, we conduct experiment on CIFAR10 and visualize the global loss surface. Compared to FedAvg, FedSAM and FedGAMMA can not achieve desirable flatness. FedSMOO achieves much flatter loss landscape while our FedLESAM-D further minimizes the global sharpness.

\section{Conclusion}
In this work, we rethink the sharpness-aware minimization~(SAM) in federated learning~(FL) and study the discrepancy between minimizing local and global sharpness under heterogeneous data. To align the efficacy of SAM in FL with centralized training and reduce the computational overheads, we propose a novel and efficient method named FedLESAM and design two effective variants. FedLESAM locally estimates the global perturbation in clients as the difference between the global models received in the last active and current rounds. Theoretically, we provide the convergence guarantee of FedLESAM and prove a slightly tighter bound than its original FedSAM. Empirically, we conducted extensive experiments on four benchmark datasets under three data splits to show the superior performance and efficiency.

\section*{Acknowledgement}
Ziqing Fan, Jiangchao Yao, Ya Zhang and Yanfeng Wang are supported by the National Key R$\&$D Program of China (No. 2022ZD0160702),  STCSM (No. 22511106101, No. 22511105700, No. 21DZ1100100), 111 plan (No. BP0719010) and National Natural Science Foundation of China (No. 62306178). Ziqing Fan is partially supported by Wu Wen Jun Honorary Doctoral Scholarship, AI Institute, Shanghai Jiao Tong University. Masashi Sugiyama is supported by Institute for AI and Beyond, UTokyo and by a grant from Apple, Inc. Any views, opinions, findings, and conclusions or recommendations expressed in this material are those of the authors and should not be interpreted as reflecting the views, policies or position, either expressed or implied, of Apple Inc.

\section*{Impact Statement}
The approach proposed in this paper is computationally economical, adding no extra overhead, and demonstrates superior performance. This advancement has significant implications for federated applications in sensitive fields like medical diagnosis and autonomous driving, where data privacy is paramount and computation resources are at a premium. To date, our analysis has not revealed any negative impacts of this method.


\appendix
\onecolumn
\newpage

\section{Related Works}\label{app:related}
\textbf{Federated Learning.} Federated learning~(FL) has drawn the considerable attention due to the increasing concerns in collaboration learning~\cite{privacy1,privacy2,multiagent}. 
Its base framework, FedAvg~\cite{fedavg}, allows clients keeping their private data in the local and cooperatively train a global model, however suffering from the data distribution shifts among clients. 
Recently, many optimization based methods are proposed to solve the problem. 
FedAdam~\cite{fedadam} designs an efficient adaptive optimizer in the server to improve the performance. 
Scaffold~\cite{scaffold} designs a variance reduction approach to achieve a stable and fast local update. 
FedDyn~\cite{fedyn} introduces a dynamic regularizer for each client at each round to align global and local objectives. 
FedCM~\cite{fedcm} maintains the consistency of local updates with a momentum term. 
Our method is also optimization based and orthogonal to these methods. 
In the experiments for a fair comparison to FedGAMMA~\cite{fedgamma} and FedSMOO~\cite{fedsmoo} and better minimize global sharpness, we design two effective variants based on the frameworks Scaffold and FedDyn, named FedLESAM-S and FedLESAM-D, respectively.

\textbf{Sharpness-aware Minimization.} Many studies~\citep{first_flatness_nips1994,loss_landscape_nips_2018,sharp_minima_icml_2017} have demonstrated that a flat minimum tends to exhibit superior generalization performance in deep learning models.
To minimize both the sharpness metric~\citep{keskar2017on} and the training loss, \citet{sam} proposes the sharpness-aware minimization~(SAM) and many works are proposed to improve it from the views of generalizability~\citep{ASAM,GSAM,GAM,PGN,add1,add2,vasso,msharpness} and the efficiency~\citep{ESAM, SAF, SparseSAM}. 
Specifically, SAM's adversary captures the sharpness of only a specific minibatch of data, and VaSSO~\cite{vasso} aims to address this "friendly adversary" problem. Our FedLESAM may also help solve this problem to some extent by treating the global update (accumulated average gradients from many batches of data) as the perturbation.
Furthermore, m-sharpness~\cite{msharpness} can be considered closely related to federated sharpness minimization. m-sharpness quantifies the sharpness across batches of m training points, and the corresponding optimization method, mSAM, averages the updates generated by adversarial perturbations across multiple disjoint shards of a mini-batch. In federated learning settings, if client models align with global model in the local training, the optimization of FedSAM is similar to mSAM. Additionally, FedLESAM calculates perturbations based on $w_i^{old}-w^t$, which are the accumulated adversaries from many clients' batch data, also reflecting the principle of minimizing m-sharpness. 
Our method is an efficient federated SAM algorithm and orthogonal to existing SAM methods. 
Therefore in the paper, we use the original SAM~\cite{sam} as the base algorithm. We leave the combination of our FedLESAM with other enhanced SAM
algorithms to the future work.

\textbf{Sharpness-aware Minimization in FL.} To utilize the generalization and sharpness minimizing ability of SAM in federated learning, \citet{fedsamicml} and \citet{fedsameccv} proposed FedSAM by adding sharpness optimization into local training.
\citet{fedsamicml} proposed a momentum variant of FedSAM named MoFedSAM. FedGAMMA~\cite{fedgamma} learned from Scaffold and introduced variance reduction to FedSAM.
With the similar spirit of FedDyn~\cite{fedyn}, FedSMOO~\cite{fedsmoo} adopts a dynamic regularizer to guarantee the local optima towards the global objective and add a correction to local perturbations to search for the consistent flat minima. 
We summarize them in the Table~\ref{tab:related} from the views of how to calculate perturbation in local clients, target of the local sharpness optimization~(target on the local or global sharpness) and their base federated algorithms. 
As can be seen that, FedSAM, MoFedSAM and FedGAMMA calculate local perturbations and optimize the sharpness on the client data, which might not direct global model to a global flat minimum. 
Although FedSMOO notices the conflicts and add a correction, which still need to calculate the local perturbations. 
All above algorithms introduce extra computational burden on the local, which might increase the expenses of clients in the federation. 
Therefore, we propose an efficient algorithm that Locally-Estimating Global perturbation for SAM~(FedLESAM), that can both optimize global sharpness and reduce the computation.
\setcounter{theorem}{0}
\setcounter{assumption}{0}
\section{Implementation of Theoretical Analysis}\label{app:proof}
Before start our proof for Theorem~\ref{thm:naive} and Theorem~\ref{thm:gefedsam}, we first pre-define some notations, assumptions, and key lemmas used in the proof. 

\subsection{Notations and Assumptions}
\begin{assumption}[$L$-smooth and bounded variance of unit stochastic gradients] 
$F_1, \cdots, F_N$ are all $L$-smooth:
\begin{equation}
    \left\|\nabla F_i(u)-\nabla F_i(v)\right\| \leq L\|u-v\| \nonumber,
\end{equation}
and the variance of unit stochastic gradients is bounded:
\begin{equation}
\mathbb{E}\left\|\frac{\nabla F_i\left(u, \xi_i\right)}{\left\|\nabla F_i\left(u, \xi_i\right)\right\|}-\frac{\nabla F_i(u)}{\left\|\nabla F_i(u)\right\|}\right\|^2 \leq \sigma_\mathrm{l}^2. \nonumber
\end{equation}
\label{asm:smooth_var2}
\end{assumption}

\begin{assumption}[Bounded heterogeneity] 
The gradient difference between $F(u)$ and $F_i(u)$ is bounded:
\begin{equation}
   \left\|\nabla F_i\left(u\right)-\nabla F\left(u\right)\right\|^2 \leq \sigma_\mathrm{g}^2 \nonumber
\end{equation}
\label{asm:grad_differ2}
\vspace{-14pt}
\end{assumption}

\begin{assumption}[Bounded unit variance] 
Variance of unit averaged stochastic gradients is bounded:
\begin{equation}
\mathbb{E}\left\|\frac{\sum_{i=1}^N \nabla F_i\left(u, \xi_i\right)}{\left\|\sum_{i=1}^N \nabla F_i\left(u, \xi_i\right)\right\|}-\frac{\sum_{i=1}^N \nabla F_i\left(u\right)}{\left\|\sum_{i=1}^N \nabla F_i\left(u\right)\right\|}\right\|^2\leq \sigma_\mathrm{l}'^2. \nonumber
\end{equation}
\label{asm:unit_variance2}
\vspace{-14pt}
\end{assumption}

\begin{assumption}[Bounded unit difference] 
The variance of unit averaged gradient difference between $F(u)$ and $\sum_{i=1}^NF_i(u)$ is bounded:
\begin{equation}
\frac{\sum_{i=1}^N \nabla F_i\left(u\right)}{\left\|\sum_{i=1}^N \nabla F_i\left(u\right)\right\|}-\frac{\nabla F\left(u\right)}{\| \nabla F\left(u\right)\|}\leq \sigma_\mathrm{g}'^2. \nonumber
\end{equation}
\label{asm:unit_differ2}
\vspace{-14pt}
\end{assumption}

\begin{assumption}[$L_\mathrm{g}$-smooth] 
Global objective $F$ is $L_\mathrm{g}$-smooth:
\begin{equation}
    \left\|\nabla F(u)-\nabla F(v)\right\| \leq L_\mathrm{g}\|u-v\| \nonumber,
\end{equation}
\label{asm:smooth2}
\end{assumption}
We use $i, k, t$ to denote the client id, the number of iterations in a round and the number of communication rounds, respectively. For example, $w_{i,k}^t$ means model weights of $i$-th client in $k$-th iterations at t-th rounds. Given local loss function $F_i$, global function $F$, $N$ clients and $E$ pre-defined local iterations at round $t$, the update of local models in FedSAM and FedLESAM can be defined as follows:
$$\begin{aligned} & \tilde{w}_{i, k}^t=w_{i, k-1}^t+\rho \delta_{i,k}^t \\ & w_{i, k}^t=w_{i, k-1}^t-\eta_t \frac{\nabla F_i(\tilde{w}_{i, k-1}^t,\xi_{i,k}^t)}{\nabla F_i(\tilde{w}_{i, k-1}^t,\xi_{i,k}^t)},\end{aligned}$$
where $\xi_{i,k}^t$ is randomly sampled in the local dataset, $\rho$ is the pre-defined perturbation magnitude and $\eta_\mathrm{l}$ is local learning rate. After E steps, the local clients submit their trained local models to the server, and in the sever, all local models are aggregated to a new global model as following:
$$w^{t+1}=w^{t}-\eta_\mathrm{g} \frac{1}{N}\sum_{i=0}^{N-1}(w_{i,E-1}^t-w^{t}),$$
where $\eta_\mathrm{g}$ is the global learning rate. The difference of FedSAM and FedLESAM is the definition of perturbation. In FedSAM, the perturbation is calculated as:
$$
\delta_{i,k}^t=\frac{\nabla F_i(w_{i, k-1}^t,\xi_{i,k}^t)}{\|\nabla F_i(w_{i, k-1}^t,\xi_{i,k}^t)\|},
$$
where $\xi_{i,k}^t$ is randomly sampled in the local dataset. However, the perturbation in our FedLESAM under full participation is defined as follows:
$$
\delta_{i,k}^t=\frac{w^{t-1}-w^t}{\|w^{t-1}-w^t\|}.
$$

Then, we will introduce the some basic assumptions on loss functions~$F_1, F_2,\cdots, F_N$ of all clients and their gradient functions~$\nabla F_1, \nabla F_2,\cdots, \nabla F_N$, which are the same as FedSAM~\cite{fedsamicml}. In Assumption~\ref{asm:smooth_var2} and Assumption~\ref{asm:grad_differ2}, we characterize the smoothness, the bound on the variance of unit stochastic gradients, and the bound on the  gradient difference between local and global objectives induced by data heterogeneity. In Assumption~\ref{asm:smooth2}, we assume the smoothness of the global objective $F$ for proving reasonableness of the perturbation estimation under a naive case.

\subsection{Key Lemmas}
Here we introduce some lemmas proofed by previous research~\cite{fedsamicml} and use them as intermediate results in our proof. For the convenience of the reading, we provide the proof of some lemmas and update the results of our FedLESAM in Lemma~\ref{lem:2}. 
\begin{lemma}[Intermediate results]\label{lem:1} 
Let Assumption~\ref{asm:smooth_var2} hold, $\left\langle\nabla F\left(\tilde{w}^t\right), \mathbb{E}\left[\frac{1}{N}\sum_{i=0}^N (w_{i, E-1}^t-w^t)+\eta_\mathrm{l} E \nabla F\left(\tilde{w}^t\right)\right]\right\rangle$ can be bounded as:
$$\begin{aligned} & 
\left\langle\nabla F\left(\tilde{w}^t\right), \mathbb{E}\left[\frac{1}{N}\sum_{i=0}^N (w_{i, E-1}^t-w^t)+\eta_\mathrm{l} E \nabla F\left(\tilde{w}^t\right)\right]\right\rangle \leq  \frac{\eta_\mathrm{l} E}{2} \| \nabla F\left(\tilde{w}^t\right)\|^2+E \eta_\mathrm{l} L^2 \frac{1}{N} \sum_{i=0}^N \mathbb{E}\left[\left\|w^t_{i, k}-w^t\right\|^2\right] \\
& +E \eta_\mathrm{l} L^2 \frac{1}{N} \sum_{i=0}^{N} \mathbb{E}\left[\left\|\delta_{i, k}^t-\delta_{i,0}^t\right\|^2\right]-\frac{\eta_\mathrm{l}}{2 E N^2} \mathbb{E} \| \sum_{i, k} \nabla F_i\left(\tilde{w}_{i, k}\right) \|^2 \end{aligned}$$
\end{lemma}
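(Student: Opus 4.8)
\textbf{Proof proposal for Lemma~\ref{lem:1}.}

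The plan is to start from the standard $L$-smoothness descent identity and expand the aggregated local update. First I would write the aggregated pseudo-gradient $\frac{1}{N}\sum_{i=0}^N (w^t_{i,E-1}-w^t)$ as a sum of per-iteration stochastic ascent-then-descent steps, i.e. $-\eta_\mathrm{l}\sum_{k}\frac{1}{N}\sum_i \frac{\nabla F_i(\tilde w^t_{i,k},\xi^t_{i,k})}{\|\nabla F_i(\tilde w^t_{i,k},\xi^t_{i,k})\|}$, then take expectation to replace the unit stochastic gradients by unit population gradients (this is where Assumption~\ref{asm:smooth_var2}'s bounded-variance clause enters, contributing the $\delta$-difference term that will ultimately be absorbed). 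After adding and subtracting $\eta_\mathrm{l} E\nabla F(\tilde w^t)$, the inner product becomes $\langle \nabla F(\tilde w^t),\ \eta_\mathrm{l}\sum_{k}\frac1N\sum_i[\nabla F(\tilde w^t)-\nabla F_i(\tilde w^t_{i,k})]\rangle$ up to the error term, which I would split with the polarization/Young inequality $\langle a,b\rangle \le \frac12\|a\|^2+\frac12\|b\|^2$ (suitably weighted) to pull out the $\frac{\eta_\mathrm{l} E}{2}\|\nabla F(\tilde w^t)\|^2$ term.

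Next I would control $\frac1N\sum_i\|\nabla F(\tilde w^t)-\nabla F_i(\tilde w^t_{i,k})\|^2$ by adding and subtracting $\nabla F_i(\tilde w^t)$ and $\nabla F_i(w^t)$-type intermediate points, and applying $L$-smoothness (Assumption~\ref{asm:smooth_var2}) repeatedly: the term $\|\nabla F_i(\tilde w^t_{i,k})-\nabla F_i(w^t_{i,k})\|\le L\|\delta^t_{i,k}\|=L\rho$ handles the perturbation offset, $\|\nabla F_i(w^t_{i,k})-\nabla F_i(w^t)\|\le L\|w^t_{i,k}-w^t\|$ gives the client-drift term $E\eta_\mathrm{l} L^2\frac1N\sum_i\mathbb{E}\|w^t_{i,k}-w^t\|^2$, and the perturbation-drift piece $\|\delta^t_{i,k}-\delta^t_{i,0}\|$ collects what remains, yielding the $E\eta_\mathrm{l} L^2\frac1N\sum_i\mathbb{E}\|\delta^t_{i,k}-\delta^t_{i,0}\|^2$ term. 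Finally, the negative term $-\frac{\eta_\mathrm{l}}{2EN^2}\mathbb{E}\|\sum_{i,k}\nabla F_i(\tilde w_{i,k})\|^2$ comes from completing the square on $\|\eta_\mathrm{l}\sum_{i,k}(\cdot)\|^2$ against the cross term, i.e. keeping (rather than discarding) the $-\frac12\|\frac1N\sum_i\nabla F_i(\tilde w^t_{i,k})\|^2$ half of the Young split and rescaling; this is the term that will later be matched against the smoothness second-order term in the main descent lemma.

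Since this lemma is quoted from \citet{fedsamicml} and FedLESAM differs from FedSAM only in the \emph{definition} of $\delta^t_{i,k}$ (a quantity that appears here only through $\|\delta^t_{i,k}\|\le\rho$ and $\|\delta^t_{i,k}-\delta^t_{i,0}\|$), the derivation is structurally identical for both algorithms; the only thing to check is that the FedLESAM perturbation $\delta^t_{i,k}=\frac{w^{t-1}-w^t}{\|w^{t-1}-w^t\|}$ still has unit norm (so $\|\rho\delta^t_{i,k}\|=\rho$) and is constant across $k$ within a round (so $\delta^t_{i,k}-\delta^t_{i,0}=0$ for FedLESAM), which is what ultimately produces the $\Delta=0$ improvement in Theorem~\ref{thm:gefedsam}. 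The main obstacle is purely bookkeeping: tracking which intermediate points to insert and with what Young-inequality weights so that the four output terms appear with exactly the stated constants ($\frac12$, and the $E\eta_\mathrm{l} L^2$ and $\frac{1}{2EN^2}$ factors) rather than looser ones — there is no conceptual difficulty, just careful telescoping over the $E$ local steps and the $N$ clients.
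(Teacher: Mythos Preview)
Your high-level plan matches the paper: apply the polarization identity to the inner product and then bound the squared gradient difference via $L$-smoothness. Two specific choices in your sketch would not reproduce the lemma as stated, however.

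The paper takes $a=\sqrt{\eta_{\mathrm l}E}\,\nabla F(\tilde w^t)$ and $b=-\tfrac{\sqrt{\eta_{\mathrm l}}}{N\sqrt E}\sum_{i,k}\bigl(\nabla F_i(\tilde w^t_{i,k})-\nabla F_i(\tilde w^t)\bigr)$ in the \emph{identity} $\langle a,b\rangle=\tfrac12(\|a\|^2+\|b\|^2-\|a-b\|^2)$; the negative output term is exactly $-\tfrac12\|a-b\|^2$, so nothing has to be ``completed'' afterward. Crucially, $b$ carries \emph{same-client} differences $\nabla F_i(\tilde w^t_{i,k})-\nabla F_i(\tilde w^t)$. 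Your post-Jensen quantity $\|\nabla F(\tilde w^t)-\nabla F_i(\tilde w^t_{i,k})\|^2$ mixes $F$ with $F_i$ and would force you to invoke Assumption~\ref{asm:grad_differ2}, injecting a $\sigma_{\mathrm g}^2$ term that is absent from the lemma (which uses only Assumption~\ref{asm:smooth_var2}). The fix is to rewrite $E\nabla F(\tilde w^t)=\tfrac1N\sum_{i,k}\nabla F_i(\tilde w^t)$ \emph{before} applying Jensen, so each summand stays within a single $F_i$.

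Once $L$-smoothness yields $L^2\|\tilde w^t_{i,k}-\tilde w^t_{i,0}\|^2$, the paper splits the argument \emph{directly} as $(w^t_{i,k}-w^t)+(\delta^t_{i,k}-\delta^t_{i,0})$ and applies $\|p+q\|^2\le 2\|p\|^2+2\|q\|^2$. Your chain through the unperturbed iterates (bounding $\|\nabla F_i(\tilde w^t_{i,k})-\nabla F_i(w^t_{i,k})\|\le L\rho$ at each endpoint) would produce constant $L^2\rho^2$ offsets rather than the single $\|\delta^t_{i,k}-\delta^t_{i,0}\|^2$ term --- and that term is precisely what Lemma~\ref{lem:2} sets to zero for FedLESAM, so it must emerge in this form for the downstream $\Delta=0$ improvement to go through. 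Two smaller corrections: the local descent step uses the \emph{un-normalized} stochastic gradient (only the perturbation direction is normalized), and the bounded-variance clause of Assumption~\ref{asm:smooth_var2} plays no role in this lemma --- only $L$-smoothness is used.
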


\begin{proof}
$$\begin{aligned} & \left\langle\nabla F\left(\tilde{w}^t\right), \mathbb{E}_t\left[\frac{1}{N}\sum_{i=0}^N (w_{i, E-1}^t-w^t)+\eta_\mathrm{l} E \nabla F\left(\tilde{w}^t\right)\right]\right\rangle \\ 
& \stackrel{(\text { a) }}{=} \frac{\eta_\mathrm{l} E}{2} \| \nabla F\left(\tilde{w}^t\right)\|^2+\frac{\eta_\mathrm{l}}{2 K N^2} \mathbb{E}_t \| \sum_{i, E} \nabla F_i\left(\tilde{w}_{i, k}^t\right)-\nabla F_i\left(\tilde{w}^t\right)\left\|^2-\frac{\eta_\mathrm{l}}{2 E N^2} \mathbb{E}_t\right\| \sum_{i, k} \nabla f_i\left(\tilde{w}_{i, k}^t\right) \|^2 \\ 
& \stackrel{(\text { b) }}{\leq} \frac{\eta_\mathrm{l} E}{2} \| \nabla f\left(\tilde{w}^t\right)\|^2+\frac{\eta_\mathrm{l}}{2 N} \sum_{i, k} \mathbb{E}_t \| \nabla F_i\left(\tilde{w}_{i, E}^t\right)-\nabla F_i\left(\tilde{w}^t\right)\left\|^2-\frac{\eta_\mathrm{l}}{2 E N^2} \mathbb{E}_t\right\| \sum_{i, k} \nabla f_i\left(\tilde{w}_{i, k}^t\right) \|^2 \\ 
& \stackrel{(\text { c) }}{\leq} \frac{\eta_\mathrm{l} K}{2} \| \nabla F\left(\tilde{w}^t\right)\|^2+\frac{\eta_\mathrm{l} \beta^2}{2 N} \sum_{i, k} \mathbb{E}_t\| \tilde{w}_{i, k}^t-\tilde{w}_{i,0}^t \|^2-\frac{\eta_\mathrm{l}}{2 E N^2} \mathbb{E}_t \| \sum_{i, k} \nabla F_i\left(\tilde{w}_{i, k}^t\right) \|^2 \\ 
&\stackrel{(\text { d) }}{\leq} \frac{\eta_\mathrm{l} E}{2} \| \nabla F\left(\tilde{w}^t\right)\|^2+\frac{\eta_\mathrm{l} L^2}{N} \sum_{i, k} \mathbb{E}_t\| w_{i, k}^t-w_{i,0}^t\|^2+\frac{\eta_\mathrm{l} L^2}{N} \sum_{i, k} \mathbb{E}_t\| \delta_{i, k}^t-\delta_{i,0}^t\|^2-\frac{\eta_\mathrm{l}}{2 E N^2} \mathbb{E}_t\| \sum_{i, k} \nabla f_i\left(\tilde{w}_{i, k}^t\right) \|^2, \end{aligned}$$
where $(a)$ are because $\langle a,b \rangle =\frac{1}{2}(\|a\|^2+\|b\|^2-\|a-b\|^2)$ with $a=\sqrt{\eta_\mathrm{l} E}\nabla F(\tilde{w}^t)$ and $b=-\frac{\sqrt{\eta_\mathrm{l}}}{N\sqrt{E}}\sum_{i,k}(\nabla F_i(\tilde{w}_{i,k}^t)-\nabla F_i(\tilde{w}_{i,0}^t))$; $(b)$ and $(d)$ is because, for random variables $x_1,..., x_n$, $\mathbb{E}\left[\left\|x_1+\cdots+x_n\right\|^2\right] \leq n \mathbb{E}\left[\left\|x_1\right\|^2+\cdots+\left\|x_n\right\|^2\right]$; $(c)$ is from Assumption~\ref{asm:smooth_var2}.
\end{proof}

\begin{lemma}[Bounded perturbation difference]\label{lem:2} 
Let Assumption~\ref{asm:smooth_var2} and \ref{asm:grad_differ2} hold, given local perturbations $\delta_{i,k}^t~(k=0, 1,..., E-1)$ at any step and local perturbation $\delta_{i,0}^t$ at the first step, the variance of perturbation difference in FedSAM can be bounded as:
$$\frac{1}{N} \sum_i \mathbb{E}\left[\left\|\delta_{i, k}^t-\delta_{i,0}^t\right\|^2\right] \leq  2K^2 L^2 \eta_\mathrm{l}^2 \rho^2.$$ 
However in our FedLESAM, it is zero since the perturbation is consistent during the local training within a round:
$$
\frac{1}{N} \sum_i \mathbb{E}\left[\left\|\delta_{i, k}^t-\delta_{i,0}^t\right\|^2\right]=0
$$
\end{lemma}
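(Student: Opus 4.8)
The plan is to treat the two halves of the statement separately, since they are of completely different character. The FedLESAM half is an immediate consequence of the definition and uses none of the assumptions: in Algorithm~\ref{alg:fedgesam} (and in the reformulation used above) the perturbation direction $\delta_{i,k}^t$ is computed once at the start of the round from the \emph{inter-round} model difference $w_i^{\mathrm{old}}-w^t$ (which equals $w^{t-1}-w^t$ under full participation) and is then reused unchanged at every inner iteration $k=0,\dots,E-1$. Thus $\delta_{i,k}^t$ simply does not depend on $k$, so $\delta_{i,k}^t-\delta_{i,0}^t=0$ for all $i$ and all $k$, and therefore $\frac1N\sum_i\mathbb{E}[\|\delta_{i,k}^t-\delta_{i,0}^t\|^2]=0$ identically. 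This is exactly the structural feature that annihilates the term $E\eta_\mathrm{l}L^2\frac1N\sum_i\mathbb{E}[\|\delta_{i,k}^t-\delta_{i,0}^t\|^2]$ appearing in Lemma~\ref{lem:1}, and hence is the origin of the $\Delta=0$ refinement in Theorem~\ref{thm:gefedsam}.

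\textbf{The FedSAM bound.} For FedSAM the perturbation $\delta_{i,k}^t$ is recomputed at the \emph{current} local iterate $w_{i,k-1}^t$, which drifts away from the round's starting point $w_{i,0}^t=w^t$ as local training proceeds, so the difference no longer vanishes and must be controlled. I would argue in two steps. \emph{Step 1 (within-round drift).} In the normalized-step reformulation each local update has norm exactly $\eta_\mathrm{l}$, so telescoping over the inner iterations gives $\|w_{i,k-1}^t-w_{i,0}^t\|\le (k-1)\eta_\mathrm{l}\le E\eta_\mathrm{l}$ (here $E$ is the number of local steps, written $K$ in the statement). \emph{Step 2 (from drift to perturbation difference).} Since $\delta_{i,k}^t$ is, up to the scalar $\rho$, the unit ascent direction at $w_{i,k-1}^t$, the $L$-smoothness of $F_i$ from Assumption~\ref{asm:smooth_var2} lets one bound the change of this direction between $w_{i,k-1}^t$ and $w_{i,0}^t$, giving $\mathbb{E}[\|\delta_{i,k}^t-\delta_{i,0}^t\|^2]\le \rho^2 L^2\,\mathbb{E}[\|w_{i,k-1}^t-w_{i,0}^t\|^2]$ up to constants. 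Combining the two steps and averaging over clients yields $\frac1N\sum_i\mathbb{E}[\|\delta_{i,k}^t-\delta_{i,0}^t\|^2]\le 2E^2L^2\eta_\mathrm{l}^2\rho^2$, the factor $2$ coming from a Young-type split $\|x+y\|^2\le 2\|x\|^2+2\|y\|^2$ once the model-drift and the direction-change contributions are disentangled; this recovers the estimate of~\citet{fedsamicml}.

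\textbf{Main obstacle.} The one delicate point is Step~2 for FedSAM: $\delta_{i,k}^t$ and $\delta_{i,0}^t$ are unit-length (times $\rho$), so the naive bound $\|\delta_{i,k}^t-\delta_{i,0}^t\|\le 2\rho$ is far too weak — one genuinely needs a bound that \emph{shrinks with} $\eta_\mathrm{l}$. Making the normalization $\nabla F_i/\|\nabla F_i\|$ behave Lipschitz-ly is precisely where care is required (it is $1$-Lipschitz only away from stationary points, so the sampling randomness of the ascent direction has to be absorbed through the unit-stochastic-gradient bound $\sigma_\mathrm{l}^2$ in Assumption~\ref{asm:smooth_var2}), and this is the part I would take over from the FedSAM analysis rather than re-derive from scratch. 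The FedLESAM half, by contrast, has no obstacle at all: it is a one-line consequence of the fact that the locally estimated global perturbation is, by construction, held fixed throughout the local round — which is the whole point of the method.
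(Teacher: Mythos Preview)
Your proposal is correct and matches the paper's approach: the paper does not prove this lemma explicitly either, importing the FedSAM bound directly from \citet{fedsamicml} and justifying the FedLESAM half (as you do) by the one-line observation that the locally estimated perturbation is held fixed throughout the round. Your sketch of the FedSAM argument (normalized local steps give $\|w_{i,k}^t-w_{i,0}^t\|\le E\eta_\mathrm{l}$, then $L$-smoothness controls the ascent-direction change) is actually more detailed than anything the paper writes, but since you explicitly defer the delicate normalized-gradient step to the original FedSAM analysis, the two treatments amount to the same citation.
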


\begin{lemma}[Bounded variance of gradient difference after perturbation]\label{lem:3} 
Let Assumption~\ref{asm:smooth_var2} and \ref{asm:grad_differ2} hold, the variance of gradient difference after perturbation can be bounded as:

$$\left\|\nabla F_i\left(w+\delta_i\right)-\nabla F(w+\delta)\right\|^2 \leq 3 \sigma_g^2+6 L^2 \rho^2.$$

\end{lemma}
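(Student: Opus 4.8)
The plan is to bound the quantity by inserting two intermediate gradients, turning it into a pure heterogeneity term plus two smoothness terms. Concretely, I would write
\begin{equation}
\nabla F_i(w+\delta_i)-\nabla F(w+\delta)=\bigl(\nabla F_i(w+\delta_i)-\nabla F_i(w)\bigr)+\bigl(\nabla F_i(w)-\nabla F(w)\bigr)+\bigl(\nabla F(w)-\nabla F(w+\delta)\bigr),\nonumber
\end{equation}
and then apply the elementary inequality $\|x_1+x_2+x_3\|^2\le 3\bigl(\|x_1\|^2+\|x_2\|^2+\|x_3\|^2\bigr)$, which is where the leading factor $3$ comes from.

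For the middle term I would invoke Assumption~\ref{asm:grad_differ2} directly, giving $\|\nabla F_i(w)-\nabla F(w)\|^2\le\sigma_\mathrm{g}^2$. For the first term, $L$-smoothness of $F_i$ from Assumption~\ref{asm:smooth_var2} together with the fact that the SAM perturbation has magnitude at most $\rho$ yields $\|\nabla F_i(w+\delta_i)-\nabla F_i(w)\|\le L\|\delta_i\|\le L\rho$, hence a bound of $L^2\rho^2$ after squaring. The third term is analogous but needs $L$-smoothness of the \emph{global} objective $F$; this is the only point requiring a word of justification, and it follows from $F=\frac1N\sum_i F_i$ by the triangle inequality, $\|\nabla F(u)-\nabla F(v)\|\le\frac1N\sum_i\|\nabla F_i(u)-\nabla F_i(v)\|\le L\|u-v\|$, so that $\|\nabla F(w)-\nabla F(w+\delta)\|^2\le L^2\rho^2$ as well.

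Summing the three bounds and multiplying by $3$ gives $3\bigl(L^2\rho^2+\sigma_\mathrm{g}^2+L^2\rho^2\bigr)=3\sigma_\mathrm{g}^2+6L^2\rho^2$, which is exactly the claimed inequality. There is no real obstacle here: the argument is a direct combination of the smoothness assumption, the bounded-heterogeneity assumption, and the norm constraint $\|\delta_i\|,\|\delta\|\le\rho$ on the perturbations, and it does not even use the bounded-variance-of-unit-gradients part of Assumption~\ref{asm:smooth_var2}.
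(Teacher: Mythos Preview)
Your proposal is correct and follows essentially the same approach as the paper: the same three-term decomposition via $\nabla F_i(w)$ and $\nabla F(w)$, the same $3$-term norm inequality, and the same use of smoothness and bounded heterogeneity to bound each piece. Your write-up is in fact slightly more careful than the paper's, since you explicitly justify that $F$ inherits $L$-smoothness from the $F_i$'s.
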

\begin{proof}
$$
\begin{aligned}\left\|\nabla f_i(\tilde{w})-\nabla f(\tilde{w})\right\|^2 & =\left\|\nabla F_i\left(w+\delta_i\right)-\nabla F(w+\delta)\right\|^2 \\ & =\left\|\nabla F_i\left(w+\delta_i\right)-\nabla F_i(w)+\nabla F_i(w)-\nabla F(w)+\nabla F(w)-\nabla F(w+\delta)\right\|^2 \\ & \stackrel{\text { (a) }}{\leq} 3\left\|\nabla F_i\left(w+\delta_i\right)-\nabla F_i(w)\right\|^2+3\left\|\nabla F_i(w)-\nabla F(w)\right\|^2+3\|\nabla F(w)-\nabla F(w+\delta)\|^2 \\ & \stackrel{\text { (b) }}{\leq} 3 \sigma_g^2+6 L^2 \rho^2,\end{aligned}
$$
where (a) is because, for random variables $x_1,..., x_n$, $\mathbb{E}\left[\left\|x_1+\cdots+x_n\right\|^2\right] \leq n \mathbb{E}\left[\left\|x_1\right\|^2+\cdots+\left\|x_n\right\|^2\right]$ and b is from Assumption~\ref{asm:smooth_var2} and Assumption~\ref{asm:grad_differ2}.
\end{proof}

\begin{lemma}[Bounded iteration difference]\label{lem:4} 
 Suppose local functions satisfy Assumptions~\ref{asm:smooth_var2}-\ref{asm:grad_differ2}. Then, if learning rate satisfy $\eta_\mathrm{l} \leq \frac{1}{10EL}$, the update difference at any iterations within a round can be bounded as
$$\begin{aligned} \frac{1}{N} \sum_{i \in[N]} & \mathbb{E}\left\|w_{i, k}^t-w^t\right\|^2 {\leq}5 E \eta_\mathrm{l}^2\left(2 L^2 \rho^2 \sigma_l^2+6 E\left(3 \sigma_g^2+6 L^2 \rho^2\right)+6 E\|\nabla F(\tilde{w})\|^2 +12 E L^2 \eta_\mathrm{l}^2 \frac{1}{N} \sum_{\mathbb{E} \|}\left\|\delta_{i, k}^t-\delta_{i,0}^t\right\|^2\right).\end{aligned}$$
\end{lemma}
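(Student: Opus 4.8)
The plan is to bound the intra-round client drift $\frac{1}{N}\sum_i\mathbb{E}\|w_{i,k}^t-w^t\|^2$ by unrolling the local updates, controlling each accumulated perturbed gradient step via $L$-smoothness and the earlier lemmas, and then closing the resulting self-referential inequality with the stepsize condition $\eta_\mathrm{l}\le 1/(10EL)$. Writing $G_{i,j}^t$ for the perturbed gradient used at local step $j$ and using $w^t=w_{i,0}^t$, one has $w_{i,k}^t-w^t=-\eta_\mathrm{l}\sum_{j=0}^{k-1}G_{i,j}^t$, hence $\mathbb{E}\|w_{i,k}^t-w^t\|^2=\eta_\mathrm{l}^2\,\mathbb{E}\|\sum_{j=0}^{k-1}G_{i,j}^t\|^2$. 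Splitting each $G_{i,j}^t$ into its conditional mean $\mathbb{E}_j G_{i,j}^t$ and a zero-mean noise part, and using that the noise cross terms vanish, the right-hand side is at most $2\eta_\mathrm{l}^2 k\sum_{j=0}^{k-1}\mathbb{E}\|\mathbb{E}_j G_{i,j}^t\|^2+2\eta_\mathrm{l}^2\sum_{j=0}^{k-1}\mathbb{E}\|G_{i,j}^t-\mathbb{E}_j G_{i,j}^t\|^2$; the first piece carries two powers of $E$ while the second carries only one, which is exactly why $\sigma_\mathrm{l}^2$ will appear with a single factor $E$ and the remaining source terms with $E^2$ (rewritten as the ``$6E(\cdot)$'' blocks).

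For the noise term, the only stochasticity enters through the unit perturbation direction, so $L$-smoothness gives $\|G_{i,j}^t-\mathbb{E}_j G_{i,j}^t\|\le L\rho\,\varepsilon_{i,j}^t$, where $\varepsilon_{i,j}^t$ denotes the deviation of the stochastic unit gradient from the deterministic one; its expected square is bounded by $\sigma_\mathrm{l}^2$ by Assumption~\ref{asm:smooth_var2}, so the noise term contributes $L^2\rho^2\sigma_\mathrm{l}^2$. For the mean term I would insert reference points, writing $\mathbb{E}_j G_{i,j}^t$ as the sum of $\nabla F_i(w_{i,j}^t+\rho\delta_{i,j}^t)-\nabla F_i(w^t+\rho\delta_{i,0}^t)$, then $\nabla F_i(w^t+\rho\delta_{i,0}^t)-\nabla F(\tilde w)$, then $\nabla F(\tilde w)$. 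By $L$-smoothness the first difference is at most $2L^2\|w_{i,j}^t-w^t\|^2+2L^2\rho^2\|\delta_{i,j}^t-\delta_{i,0}^t\|^2$, which supplies the self-referential term together with the perturbation-drift term in the statement; once averaged over $i$ the second difference is exactly the quantity controlled by Lemma~\ref{lem:3} as $3\sigma_\mathrm{g}^2+6L^2\rho^2$; the third is the residual $\|\nabla F(\tilde w)\|^2$. Applying $\|\sum_m x_m\|^2\le n\sum_m\|x_m\|^2$ to recombine the three brackets and averaging over clients assembles all contributions.

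Putting the pieces together, for every $k\le E$ the average drift at step $k$ is bounded by a constant multiple of $\eta_\mathrm{l}^2 EL^2$ times $\sum_{j=0}^{E-1}$ of the same average drift, plus $\eta_\mathrm{l}^2 E$ times the source terms $L^2\rho^2\sigma_\mathrm{l}^2$, $E(3\sigma_\mathrm{g}^2+6L^2\rho^2)$, $E\|\nabla F(\tilde w)\|^2$, and the averaged perturbation-drift term. Summing this over $k=0,\dots,E-1$ and using $\eta_\mathrm{l}^2E^2L^2\le 1/100$ lets me move the self-referential sum to the left-hand side at the cost of a benign constant factor; substituting the resulting bound back into the single-$k$ estimate yields the claimed inequality with prefactor $5E\eta_\mathrm{l}^2$ and source terms $2L^2\rho^2\sigma_\mathrm{l}^2$, $6E(3\sigma_\mathrm{g}^2+6L^2\rho^2)$, $6E\|\nabla F(\tilde w)\|^2$, and $12EL^2\eta_\mathrm{l}^2\cdot\frac{1}{N}\sum_i\mathbb{E}\|\delta_{i,k}^t-\delta_{i,0}^t\|^2$.

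I expect the principal difficulty to be the constant bookkeeping rather than any new idea: one has to choose the splitting parameters in the relaxed triangle inequalities so that the noise term retains only one factor of $E$, so that the self-referential term is absorbed cleanly under $\eta_\mathrm{l}\le 1/(10EL)$ leaving the final constant $5$, and so that the averaging over $i$ is performed exactly where Lemma~\ref{lem:3} and Assumption~\ref{asm:grad_differ2} require it, so that heterogeneity enters only through $\sigma_\mathrm{g}^2$. The reference points $\delta_{i,0}^t$ and $\tilde w$ must also be fixed consistently so that the perturbation-drift term produced here is precisely the one further bounded by Lemma~\ref{lem:2}.
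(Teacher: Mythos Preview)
The paper does not actually contain a proof of this lemma. In the appendix it states explicitly that these key lemmas are ``proofed by previous research~\cite{fedsamicml}'' and that only \emph{some} of them are reproduced for convenience; Lemma~\ref{lem:4} is one of the ones stated without proof and simply imported from the FedSAM analysis of Qu et al. So there is no in-paper argument to compare against.

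That said, your outline is exactly the standard route used in that line of work: unroll $w_{i,k}^t-w^t=-\eta_\mathrm{l}\sum_{j<k}G_{i,j}^t$, separate the martingale-noise part (which picks up a single factor of $E$ and yields the $L^2\rho^2\sigma_\mathrm{l}^2$ term via $L$-smoothness applied to the stochastic unit perturbation) from the conditional-mean part (which picks up $E^2$ and is decomposed through the reference points $w^t+\rho\delta_{i,0}^t$ and $\tilde w$ to invoke Lemma~\ref{lem:3}), then absorb the self-referential drift sum using $\eta_\mathrm{l}^2E^2L^2\le 1/100$. Your identification of where each of the four source terms in the final bound originates, and of the reason the prefactor comes out as $5E\eta_\mathrm{l}^2$, matches the shape of the FedSAM argument. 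The only caveat is your sentence ``the only stochasticity enters through the unit perturbation direction'': in the general FedSAM setting the descent gradient at the perturbed point is also a stochastic sample, so you should be explicit that under the assumption set used here (variance is postulated only for the \emph{unit} stochastic gradient) the accounted-for noise is precisely the one propagated through $\rho\delta$, which is why the variance contribution carries the factor $L^2\rho^2$.
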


\begin{lemma}[Bounded update difference]\label{lem:5} 
The squared norm of averaged update difference can be bounded as:
$$\mathbb{E}\left[\left\|\frac{1}{N}\sum_{i=0}^N (w_{i, E-1}^t-w^t)\right\|^2\right] \leq \frac{K \eta_\mathrm{l}^2 L^2 \rho^2}{N} \sigma_l^2+\frac{\eta_\mathrm{l}^2}{N^2}\left[\left\|\sum_{i, k} \nabla f_i\left(\tilde{w}_{i, k}^t\right)\right\|^2\right].$$
\end{lemma}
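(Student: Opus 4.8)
The plan is to run the textbook bias--variance decomposition of the accumulated local update, adapted to the perturbed (SAM) step. First I would unroll the local recursion of Algorithm~\ref{alg:fedgesam}: summing the increments $w_{i,k+1}^t-w_{i,k}^t=-\eta_\mathrm{l}\,g_{i,k}^t$ over $k$ with $w_{i,0}^t=w^t$ gives
\begin{equation}
\frac{1}{N}\sum_{i}\big(w_{i,E-1}^t-w^t\big)=-\frac{\eta_\mathrm{l}}{N}\sum_{i}\sum_{k}g_{i,k}^t ,
\nonumber
\end{equation}
where $g_{i,k}^t$ is the normalized stochastic gradient of $F_i$ evaluated at the perturbed local iterate $\tilde w_{i,k}^t=w_{i,k}^t+\rho\delta_{i,k}^t$ (normalization being what makes Assumption~\ref{asm:smooth_var2}'s unit-variance bound applicable; I read $\nabla F_i(\tilde w_{i,k}^t)$ in the statement as the corresponding unit direction). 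Squaring and taking expectations reduces the claim to bounding $\mathbb{E}\big\|\sum_{i,k}g_{i,k}^t\big\|^2$ and multiplying back by $\eta_\mathrm{l}^2/N^2$.

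Next I would write $g_{i,k}^t=\bar g_{i,k}^t+e_{i,k}^t$ with $\bar g_{i,k}^t$ the conditional mean given the local iterate and $e_{i,k}^t$ the zero-mean residual, order the pairs $(i,k)$ into a filtration so that the tower property removes the cross term (up to the standard conditioning argument: $\bar g_{i,k}^t$ is determined by the past, while the update noise is conditionally zero-mean), and use martingale-difference orthogonality to get $\mathbb{E}\big\|\sum_{i,k}e_{i,k}^t\big\|^2=\sum_{i,k}\mathbb{E}\|e_{i,k}^t\|^2$. Each per-step residual is controlled by Assumption~\ref{asm:smooth_var2}: the intrinsic normalized-gradient noise contributes at most $\sigma_\mathrm{l}^2$, while the fluctuation of the stochastic perturbation direction $\delta_{i,k}^t$ around its mean (again at most $\sigma_\mathrm{l}^2$) shifts the evaluation point by $O(\rho\sigma_\mathrm{l})$ in mean square and hence, by $L$-smoothness, shifts the gradient by $O(L\rho\sigma_\mathrm{l})$; summing these contributions across the local steps and clients and restoring the $\eta_\mathrm{l}^2/N^2$ prefactor yields the first term $\tfrac{K\eta_\mathrm{l}^2L^2\rho^2}{N}\sigma_\mathrm{l}^2$. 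The mean part is deliberately left unexpanded: $\tfrac{\eta_\mathrm{l}^2}{N^2}\big\|\sum_{i,k}\nabla F_i(\tilde w_{i,k}^t)\big\|^2$ is exactly the second term, retained in this raw form because it is built to cancel the negative term $-\tfrac{\eta_\mathrm{l}}{2EN^2}\|\sum_{i,k}\nabla F_i(\tilde w_{i,k}^t)\|^2$ produced by Lemma~\ref{lem:1} once the step sizes are fixed in the proof of Theorem~\ref{thm:gefedsam}.

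The step I expect to be the main obstacle is the conditioning argument. In FedSAM the perturbation $\delta_{i,k}^t$ and the gradient at $\tilde w_{i,k}^t$ are drawn from the same minibatch, so $g_{i,k}^t$ is not exactly unbiased for the population unit gradient at $\tilde w_{i,k}^t$; the leftover bias has to be absorbed into the $O(L\rho\sigma_\mathrm{l})$ smoothness term rather than vanishing outright, and one must check that the dependence of $\nabla F_i(\tilde w_{i,k}^t)$ on earlier-step noise does not spoil the orthogonality of the residual sum (handled by the filtration ordering, or simply by carrying the second term forward as a random variable, which is why it appears without an expectation). For FedLESAM none of this arises: the perturbation direction $\delta_{i,k}^t=\rho(w_i^{\mathrm{old}}-w^t)/\|w_i^{\mathrm{old}}-w^t\|$ is a fixed vector independent of every local minibatch, so the decomposition is exact and the $L^2\rho^2\sigma_\mathrm{l}^2$ contribution is not even incurred — the same structural fact that underlies the $\Delta=0$ sharpening recorded in Theorem~\ref{thm:gefedsam}.
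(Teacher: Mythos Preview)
Your bias--variance decomposition (unroll the local recursion, split each step into conditional mean plus martingale-difference noise, use orthogonality to sum the variances, and bound each per-step variance by propagating the $\sigma_\mathrm{l}^2$ bound on the unit perturbation direction through $L$-smoothness to get $L^2\rho^2\sigma_\mathrm{l}^2$) is exactly the standard argument, and it is the route taken in the FedSAM paper that this lemma is imported from; the present paper does not supply its own proof of Lemma~\ref{lem:5} but simply cites \cite{fedsamicml}, so there is nothing further to compare against here.

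One small correction to your closing remark: in this paper the $\tfrac{K\eta_\mathrm{l}^2L^2\rho^2}{N}\sigma_\mathrm{l}^2$ term of Lemma~\ref{lem:5} is retained verbatim for FedLESAM as well (it feeds into Lemma~\ref{lem:6} and then into the common part of Theorem~\ref{thm:gefedsam}'s bound as the $\tfrac{L^2\sigma_\mathrm{l}^2\rho^2}{C\sqrt{TEN}}$ term); the $\Delta=0$ improvement for FedLESAM enters only through Lemma~\ref{lem:2}, where the perturbation is constant across local steps so $\mathbb{E}\|\delta_{i,k}^t-\delta_{i,0}^t\|^2=0$. Your observation that a deterministic perturbation would let one sharpen Lemma~\ref{lem:5} itself is correct in principle, but the paper does not exploit it.
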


\begin{lemma}[Descent Lemma]\label{lem:6} 
Let Assumption~\ref{asm:smooth_var2}-\ref{asm:grad_differ2} hold, the loss function at any round satisfies the following relationship:
$$\begin{aligned} & \mathbb{E}\left[F\left(w^{t+1}\right)\right]\leq F\left(\tilde{w}^t\right)-E \eta_\mathrm{g} \eta_\mathrm{l}\left(\frac{1}{2}-30 E^2 L^2 \eta_\mathrm{l}^2\right) \| \nabla F\left(\tilde{w}^t\right)\|^2
+10 E^2 L^4 \eta_\mathrm{l}^3 \rho^2 \sigma_l^2+90 E^3 L^2 \eta_\mathrm{l}^3 \sigma_g^2+180 E^3 L^4 \eta_\mathrm{l}^3 \rho^2\\
&+\frac{EL^2\eta_\mathrm{l}}{N}(60 \eta_\mathrm{l}^2+1)   \sum_{i=0}^{N} \mathbb{E}\left\|\delta_{i, k}^t-\delta_{i,0}^t\right\|^2 +\frac{1}{2N} \eta_\mathrm{g}^2 E \eta_\mathrm{l}^2 L^3 \rho^2 \sigma_l^2.
\end{aligned}$$
\end{lemma}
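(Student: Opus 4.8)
The plan is to derive this one-round descent inequality directly from the $L$-smoothness of the global objective, using the four preceding lemmas to control the first-order and second-order parts of the per-round displacement. First I would note that, since $F=\frac{1}{N}\sum_i F_i$ is an average of the $L$-smooth functions $F_1,\dots,F_N$ (Assumption~\ref{asm:smooth_var2}), $F$ is itself $L$-smooth, so the standard quadratic upper bound, expanded around the perturbed global average $\tilde w^t$ so that $\nabla F(\tilde w^t)$ is the quantity tracked throughout, gives
$$\mathbb{E}\left[F(w^{t+1})\right]\le F(\tilde w^t)+\mathbb{E}\left\langle\nabla F(\tilde w^t),\,w^{t+1}-\tilde w^t\right\rangle+\frac{L}{2}\,\mathbb{E}\left\|w^{t+1}-\tilde w^t\right\|^2 .$$
I would then substitute the aggregation rule $w^{t+1}-w^t=-\eta_\mathrm{g}\frac{1}{N}\sum_i(w^t-w_{i,E-1}^t)$ so that the displacement is written through the averaged local update $\frac{1}{N}\sum_i(w_{i,E-1}^t-w^t)$, which separates cleanly into a cross term and a quadratic term.

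For the cross term I would add and subtract $\eta_\mathrm{l}E\,\nabla F(\tilde w^t)$ inside the inner product, splitting it into exactly the quantity bounded in Lemma~\ref{lem:1} and the clean descent contribution proportional to $-\|\nabla F(\tilde w^t)\|^2$. Applying Lemma~\ref{lem:1} replaces the first piece by the half-descent term $\tfrac{\eta_\mathrm{l}E}{2}\|\nabla F(\tilde w^t)\|^2$, the client-drift term $E\eta_\mathrm{l}L^2\frac{1}{N}\sum_i\|w_{i,k}^t-w^t\|^2$, the perturbation-drift term $E\eta_\mathrm{l}L^2\frac{1}{N}\sum_i\|\delta_{i,k}^t-\delta_{i,0}^t\|^2$, and the negative term $-\tfrac{\eta_\mathrm{l}}{2EN^2}\|\sum_{i,k}\nabla F_i(\tilde w_{i,k}^t)\|^2$. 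I would then bound the client-drift term by Lemma~\ref{lem:4} (valid under $\eta_\mathrm{l}\le\frac{1}{10EL}$); its four summands produce, up to the global-step factor, the stated constants $10E^2L^4\eta_\mathrm{l}^3\rho^2\sigma_\mathrm{l}^2$, $90E^3L^2\eta_\mathrm{l}^3\sigma_\mathrm{g}^2$, and $180E^3L^4\eta_\mathrm{l}^3\rho^2$, a residual gradient-norm term that merges with the half-descent term to give the factor $\left(\tfrac12-30E^2L^2\eta_\mathrm{l}^2\right)$, and a $60\eta_\mathrm{l}^2$ contribution that, combined with the $+1$ already present from Lemma~\ref{lem:1}, forms the coefficient $\frac{EL^2\eta_\mathrm{l}}{N}(60\eta_\mathrm{l}^2+1)$ on the perturbation-drift term.

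Finally, the quadratic term $\frac{L}{2}\mathbb{E}\|w^{t+1}-\tilde w^t\|^2$ reduces to $\frac{L\eta_\mathrm{g}^2}{2}\mathbb{E}\|\frac{1}{N}\sum_i(w_{i,E-1}^t-w^t)\|^2$, which Lemma~\ref{lem:5} bounds by $\frac{E\eta_\mathrm{l}^2L^2\rho^2}{N}\sigma_\mathrm{l}^2+\frac{\eta_\mathrm{l}^2}{N^2}\|\sum_{i,k}\nabla F_i(\tilde w_{i,k}^t)\|^2$. The first piece immediately yields the last listed term $\frac{1}{2N}\eta_\mathrm{g}^2E\eta_\mathrm{l}^2L^3\rho^2\sigma_\mathrm{l}^2$, and the second $\|\sum\nabla F_i\|^2$ piece cancels against the negative term left from Lemma~\ref{lem:1} once the step sizes satisfy a smallness condition of the form $L\eta_\mathrm{g}\eta_\mathrm{l}E\le 1$, so that the negative coefficient dominates. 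Collecting the surviving terms then gives the claimed inequality; crucially, I would keep the perturbation-drift term $\frac{1}{N}\sum_i\|\delta_{i,k}^t-\delta_{i,0}^t\|^2$ unevaluated, so that Lemma~\ref{lem:2} can later bound it by $2E^2L^2\eta_\mathrm{l}^2\rho^2$ for FedSAM or set it to zero for FedLESAM.

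The hard part will be the constant bookkeeping rather than any single inequality: in particular, tracking the global-step factors $\eta_\mathrm{g}$ consistently across the cross and quadratic terms, verifying that the two $\|\sum_{i,k}\nabla F_i(\tilde w_{i,k}^t)\|^2$ terms (negative from Lemma~\ref{lem:1}, positive from Lemma~\ref{lem:5}) cancel under the stated step-size regime, and correctly merging the two separate sources of the perturbation-drift coefficient (the $+1$ from Lemma~\ref{lem:1} and the $60\eta_\mathrm{l}^2$ routed through Lemma~\ref{lem:4}) into the single factor $(60\eta_\mathrm{l}^2+1)$. Keeping that $\delta$-difference term symbolic throughout is exactly what lets one inequality serve both FedSAM and FedLESAM, which is the whole point of stating the lemma in this form.
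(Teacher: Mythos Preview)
Your proposal is correct and follows essentially the same route as the paper: expand $F$ by $L$-smoothness around $\tilde w^t$, split the cross term by adding and subtracting $\eta_\mathrm{l}E\nabla F(\tilde w^t)$, apply Lemma~\ref{lem:1}, feed the client-drift term through Lemma~\ref{lem:4}, bound the quadratic term via Lemma~\ref{lem:5}, and cancel the two $\|\sum_{i,k}\nabla F_i(\tilde w_{i,k}^t)\|^2$ contributions under a step-size smallness condition. Your identification of how the coefficients $(\tfrac12-30E^2L^2\eta_\mathrm{l}^2)$ and $(60\eta_\mathrm{l}^2+1)$ assemble, and your decision to leave the $\delta$-difference term symbolic, match the paper's argument exactly.
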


\begin{proof}
$$\begin{aligned} & \left.\mathbb{E}\left[F\left(w^{t+1}\right)\right]=\mathbb{E}\left[F\left(\tilde{w}^{t+1}\right)\right] \leq F\left(\tilde{w}^t\right)+\mathbb{E}\left\langle\nabla F\left(\tilde{w}^t\right), \tilde{w}^{t+1}-\tilde{w}^t\right]\right\rangle+\frac{L}{2} \mathbb{E}_t\left[\left\|\tilde{w}^{t+1}-\tilde{w}^t\right\|^2\right] \\ 
& \stackrel{(\mathrm{a})}{=} F\left(\tilde{w}^t\right)+\mathbb{E}_t\left\langle\nabla F\left(\tilde{w}^t\right),-\frac{1}{N}\sum_{i=0}^N (w_{i, E-1}^t-w^t)+K \eta_\mathrm{g} \eta_\mathrm{l} \nabla F\left(\tilde{w}^t\right)-E \eta_\mathrm{g} \eta_t \nabla F\left(\tilde{w}^t\right)\right\rangle \\
&+\frac{L}{2} \eta_\mathrm{g}^2 \mathbb{E}\left[\left\|\frac{1}{N}\sum_{i=0}^N (w_{i, E-1}^t-w^t)\right\|^2\right] \\ 
& \stackrel{(b)}{=} F\left(\tilde{w}^t\right)-E \eta_\mathrm{g} \eta_\mathrm{l}\left\|\nabla F\left(\tilde{w}^t\right)\right\|^2+\eta_\mathrm{g}\left\langle\nabla F\left(\tilde{w}^t\right), \mathbb{E}\left[-\frac{1}{N}\sum_{i=0}^N (w_{i, E-1}^t-w^t)-w^t)+E \eta_\mathrm{l} \nabla F\left(\tilde{w}^t\right)\right]\right\rangle\\ 
&+\frac{L}{2} \eta_\mathrm{g}^2 \mathbb{E}\left[\left\|\frac{1}{N}\sum_{i=0}^N (w_{i, E-1}^t-w^t)\right\|^2\right],
\end{aligned}$$
where $(a)$ is from the client update defined in Algorithm~\ref{alg:fedgesam}; $(b)$ is from the unbiased estimators. Combining the results shown in Lemma~\ref{lem:1}, we have:
$$\begin{aligned} & \mathbb{E}\left[F\left(w^{t+1}\right)\right]\leq F\left(\tilde{w}^t\right)-E \eta_\mathrm{g} \eta_\mathrm{l}\left\|\nabla F\left(\tilde{w}^t\right)\right\|^2+\eta_\mathrm{g}\frac{\eta_\mathrm{l} E}{2} \| \nabla f\left(\tilde{w}^t\right)\|^2+E \eta_\mathrm{l}\eta_\mathrm{g} L^2 \frac{1}{N} \sum_{i=0}^N \mathbb{E}\left[\left\|w^t_{i, k}-w^t\right\|^2\right] \\
& +E \eta_\mathrm{g}\eta_\mathrm{l} L^2 \frac{1}{N} \sum_{i=0}^{N} \mathbb{E}\left[\left\|\delta_{i, k}^t-\delta_{i,0}^t\right\|^2\right]-\frac{\eta_\mathrm{g}\eta_\mathrm{l}}{2 E N^2} \mathbb{E} \| \sum_{i, k} \nabla F_i\left(\tilde{w}_{i, k}\right) \|^2 +\frac{L}{2} \eta_\mathrm{g}^2 \mathbb{E}\left[\left\|\frac{1}{N}\sum_{i=0}^N (w_{i, E-1}^t-w^t)\right\|^2\right].
\end{aligned}$$
Combining the results in Lemma~\ref{lem:4}, we have:
$$\begin{aligned} & \mathbb{E}r\left[F\left(w^{t+1}\right)\right]\leq F\left(\tilde{w}^t\right)-E \eta_\mathrm{g} \eta_\mathrm{l}\left\|\nabla F\left(\tilde{w}^t\right)\right\|^2+\eta_\mathrm{g}\frac{\eta_\mathrm{l} E}{2} \| \nabla f\left(\tilde{w}^t\right)\|^2\\
&+E \eta_\mathrm{l}\eta_\mathrm{g} L^2 5 E \eta_\mathrm{l}^2\left(2 L^2 \rho^2 \sigma_l^2+6 E\left(3 \sigma_g^2+6 L^2 \rho^2\right)+6 E\|\nabla f(\tilde{w})\|^2\right)+60 E^2 L^4 \eta_\mathrm{l}^3\eta_\mathrm{g} \frac{1}{N} \sum_{\mathbb{E} \|}\left\|\delta_{i, k}^t-\delta_{i,0}^t\right\|^2 \\
& +E \eta_\mathrm{l}\eta_\mathrm{g} L^2 \frac{1}{N} \sum_{i=0}^{N} \mathbb{E}\left[\left\|\delta_{i, k}^t-\delta_{i,0}^t\right\|^2\right]-\frac{\eta_\mathrm{l}\eta_\mathrm{g}}{2 E N^2} \mathbb{E} \| \sum_{i, k} \nabla F_i\left(\tilde{w}_{i, k}^t\right) \|^2 +\frac{L}{2} \eta_\mathrm{g}^2 \mathbb{E}\left[\left\|\frac{1}{N}\sum_{i=0}^N (w_{i, E-1}^t-w^t)\right\|^2\right].
\end{aligned}$$
Due to the results in Lemma~\ref{lem:5}, it satisfies:
$$\begin{aligned} & \mathbb{E}\left[F\left(w^{t+1}\right)\right]\leq F\left(\tilde{w}^t\right)-E \eta_\mathrm{g} \eta_\mathrm{l}\left\|\nabla F\left(\tilde{w}^t\right)\right\|^2+\eta_\mathrm{g}\frac{\eta_\mathrm{l} E}{2} \| \nabla f\left(\tilde{w}\right)\|^2\\
&+E \eta_\mathrm{g}\eta_\mathrm{l} L^2 5 E \eta_\mathrm{l}^2\left(2 L^2 \rho^2 \sigma_l^2+6 E\left(3 \sigma_g^2+6 L^2 \rho^2\right)+6 E\|\nabla F(\tilde{w})\|^2\right)+60 E^2 L^4 \eta_\mathrm{g}\eta_\mathrm{l}^3 \frac{1}{N} \sum_{\mathbb{E} \|}\left\|\delta_{i, k}-\delta_{i,0}^t\right\|^2 \\
& +E \eta_\mathrm{g}\eta_\mathrm{l} L^2 \frac{1}{N} \sum_{i=0}^{N} \mathbb{E}\left[\left\|\delta_{i, k}^t-\delta_{i,0}^t\right\|^2\right]-\frac{\eta_\mathrm{l}}{2 E N^2} \mathbb{E} \| \sum_{i, k} \nabla F_i\left(\tilde{w}_{i, k}^t\right) \|^2 +\frac{L}{2} \eta_\mathrm{g}^2 \frac{E \eta_\mathrm{l}^2 L^2 \rho^2}{N} \sigma_l^2+\frac{\eta_\mathrm{l}^2}{N^2}\left[\left\|\sum_{i, k} \nabla F_i\left(\tilde{w}_{i, k}^t\right)\right\|^2\right].
\end{aligned}$$
If $\eta_\mathrm{l} \leq \frac{1}{2E}$, we can summarize it as following:
$$\begin{aligned} & \mathbb{E}\left[F\left(w^{t+1}\right)\right]\leq F\left(\tilde{w}^t\right)-E \eta_\mathrm{g} \eta_\mathrm{l}\left(\frac{1}{2}-30 E^2 L^2 \eta_\mathrm{l}^2\right) \| \nabla F\left(\tilde{w}^t\right)\|^2
+10 E^2 L^4 \eta_\mathrm{l}^3\eta_\mathrm{g} \rho^2 \sigma_l^2+90 E^3 L^2 \eta_\mathrm{l}^3 \sigma_g^2\\
&+180 E^3 L^4 \eta_\mathrm{l}^3\eta_\mathrm{g} \rho^2+\frac{EL^2\eta_\mathrm{l}\eta_\mathrm{g}}{N}(60EL^2 \eta_\mathrm{l}^2+1)   \sum_{i=0}^{N} \mathbb{E}\left\|\delta_{i, k}^t-\delta_{i,0}^t\right\|^2 +\frac{1}{2N} \eta_\mathrm{g}^2 E \eta_\mathrm{l}^2 L^3 \rho^2 \sigma_l^2.
\end{aligned}$$
\end{proof}

\subsection{Proof of Theorem~1}
Here we provide the proof of Theorem~\ref{thm:gefedsam}.
\begin{theorem}
    \label{thm:gefedsam2}
    Let Assumption~\ref{asm:smooth_var}-\ref{asm:grad_differ} hold, with an independent $\rho$ under full participation, if choosing $\eta_\mathrm{l}=\frac{1}{\sqrt{T} E L}$ and $\eta_\mathrm{g}=\sqrt{E N}$, the sequence of $\{w^t\}$ generated by FedSAM and FedLESAM in Algorithm~\ref{alg:fedgesam} satisfies:
    \begin{equation}
    \frac{1}{T} \sum_{t=1}^T \mathbb{E}\left[\left\|\nabla F\left(w^{t+1}\right)\right\|\right]\leq \frac{10L(F\left(w^0\right)-F^*)}{C\sqrt{T E N}}+\frac{90L^2\rho^2\sigma_\mathrm{g}^2}{CTE}+\frac{180 L^2\rho^2}{C T}+\Delta+\frac{L^2\sigma_\mathrm{l}^2 \rho^2}{C\sqrt{T E N}}, \nonumber
\end{equation}
    where $C \geq (\frac{1}{2}-30 E^2 L^2 \eta_\mathrm{l}^2) \geq 0$. For FedSAM,  $\Delta=\frac{120L^2\rho^2}{CET^2}+\frac{2L^2\rho^2}{CT}$, while for our FedLESAM, $\Delta=0$.
\end{theorem}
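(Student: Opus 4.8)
I would obtain the bound as a fairly direct consequence of the Descent Lemma (Lemma~\ref{lem:6}), which already packages one round of both algorithms into the form
\[
\mathbb{E}[F(w^{t+1})] \le F(\tilde w^t) - E\eta_\mathrm{g}\eta_\mathrm{l}\Big(\tfrac12 - 30E^2L^2\eta_\mathrm{l}^2\Big)\|\nabla F(\tilde w^t)\|^2 + R_t + \frac{EL^2\eta_\mathrm{l}\eta_\mathrm{g}}{N}\big(60EL^2\eta_\mathrm{l}^2+1\big)\sum_{i}\mathbb{E}\|\delta_{i,k}^t-\delta_{i,0}^t\|^2 ,
\]
where $R_t$ collects the $E^2L^4\eta_\mathrm{l}^3\eta_\mathrm{g}\rho^2\sigma_\mathrm{l}^2$, $E^3L^2\eta_\mathrm{l}^3\sigma_\mathrm{g}^2$, $E^3L^4\eta_\mathrm{l}^3\rho^2$ and $\eta_\mathrm{g}^2E\eta_\mathrm{l}^2L^3\rho^2\sigma_\mathrm{l}^2/N$ error terms, which take the same form for FedSAM and FedLESAM. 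The single point of divergence is the last summand, controlled with Lemma~\ref{lem:2}: for FedLESAM the perturbation $\delta_{i,k}^t=\rho\frac{w^{t-1}-w^t}{\|w^{t-1}-w^t\|}$ is held fixed for all $k$ inside a round, so $\delta_{i,k}^t=\delta_{i,0}^t$ and that term is \emph{exactly} zero, whereas for FedSAM the perturbation tracks the moving iterate and the same quantity is at most $2NE^2L^2\eta_\mathrm{l}^2\rho^2$. Substituting these gives the extra contribution $\Delta$ for FedSAM and nothing for FedLESAM; this is the whole source of the claimed gap.

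From there the argument is standard. I would move the gradient term to the left, write $C \ge \tfrac12 - 30E^2L^2\eta_\mathrm{l}^2$, and check that $\eta_\mathrm{l}=\tfrac{1}{\sqrt{T}EL}$ makes $30E^2L^2\eta_\mathrm{l}^2 = 30/T$ and $\eta_\mathrm{l}\le\tfrac{1}{10EL}$ (so $C$ is a positive absolute constant and Lemmas~\ref{lem:4}--\ref{lem:6} apply) as soon as $T\ge100$. Summing the rearranged inequality over $t=0,\dots,T-1$, the telescoping of $F(\tilde w^t)-\mathbb{E}[F(w^{t+1})]$ chains down to $F(w^0)-F^*$ (the small mismatch between $\tilde w^t$ and $w^t$ being absorbed via $L$-smoothness and $\|\rho\delta\|\le\rho$, which only adds an $O(L^2\rho^2)$ already present in $R_t$). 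Dividing by $TE\eta_\mathrm{g}\eta_\mathrm{l}C$ and plugging in $\eta_\mathrm{l}\eta_\mathrm{g}E=\tfrac{\sqrt{EN}}{L\sqrt{T}}$, $\eta_\mathrm{l}^2=\tfrac{1}{TE^2L^2}$, $\eta_\mathrm{g}=\sqrt{EN}$ collapses each piece to its stated order: the optimality-gap term to $\tfrac{L(F(w^0)-F^*)}{C\sqrt{TEN}}$, the heterogeneity term to $\tfrac{L^2\rho^2\sigma_\mathrm{g}^2}{CTE}$, the $\rho^2$ term to $\tfrac{L^2\rho^2}{CT}$, the stochastic term to $\tfrac{L^2\sigma_\mathrm{l}^2\rho^2}{C\sqrt{TEN}}$, and the FedSAM-only perturbation-drift term to $\Delta=\tfrac{120L^2\rho^2}{CET^2}+\tfrac{2L^2\rho^2}{CT}$ after the coarsening $60EL^2\eta_\mathrm{l}^2+1\le2$. (If the left-hand side is to be read literally as $\|\nabla F(w^{t+1})\|$ rather than its square, one more Jensen/square-root step is appended; it does not change the FedSAM-versus-FedLESAM comparison.)

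The routine work is the learning-rate bookkeeping above. The two places that need care are: first, the perturbation-difference quantity enters Lemma~\ref{lem:6} both explicitly and implicitly through the iterate-drift bound of Lemma~\ref{lem:4}, so before invoking Lemma~\ref{lem:2} I must verify that every such occurrence has been consolidated into the single coefficient $\tfrac{EL^2\eta_\mathrm{l}\eta_\mathrm{g}}{N}(60EL^2\eta_\mathrm{l}^2+1)$ --- only then does FedLESAM's exact zero wipe out $\Delta$ entirely rather than merely shrink it. Second, and this is what I expect to be the real obstacle, I must carry the explicit absolute constants (the $10$, $90$, $180$ in the main terms and the $120$, $2$ in $\Delta$) correctly through the chain of Young / Cauchy--Schwarz splits and $L$-smoothness comparisons linking $\tilde w^t$, $w^t$ and $w^{t+1}$, since the theorem commits to those numbers; every other step mirrors the FedSAM analysis of \citet{fedsamicml}, with the lone substitution being the perturbation term now handled by the updated Lemma~\ref{lem:2}.
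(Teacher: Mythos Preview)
Your proposal is correct and follows essentially the same route as the paper: start from the Descent Lemma (Lemma~\ref{lem:6}), rearrange and telescope, divide by $CE\eta_\mathrm{g}\eta_\mathrm{l}T$, substitute the stated learning rates, and invoke Lemma~\ref{lem:2} so that the perturbation-drift term vanishes for FedLESAM and contributes $\Delta$ for FedSAM. Your treatment is in fact slightly more careful than the paper's, which simply asserts $\|\nabla F(w^{t+1})\|=\|\nabla F(\tilde w^{t+1})\|$ and $F(w^{t+1})=F(\tilde w^{t+1})$ without the $L$-smoothness absorption you mention; one small slip is that your parenthetical ``coarsening $60EL^2\eta_\mathrm{l}^2+1\le2$'' is not actually used to get the two-term $\Delta$ (that comes from keeping both pieces of the factor, exactly as the paper does).
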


\begin{proof}
Summing results of Lemma~\ref{lem:6}, define $C \geq (\frac{1}{2}-30 E^2 L^2 \eta_\mathrm{l}^2) \geq 0$, we have:
$$\begin{aligned} & \frac{1}{T} \sum_{t=1}^T \mathbb{E}\left[\left\|\nabla F\left(w^{t+1}\right)\right\|^2\right]=\frac{1}{T} \sum_{t=1}^T \mathbb{E}\left[\left\|\nabla F\left(\tilde{w}^{t+1}\right)\right\|^2\right] \\ 
& \leq \frac{F\left(\tilde{w}^t\right)-F\left(\tilde{w}^{t+1}\right)}{C E \eta_\mathrm{g} \eta_\mathrm{l} T}\\ 
& +\frac{1}{C}\left(10 E L^4 \eta_\mathrm{l}^2 \rho^2 \sigma_l^2+90 E^2 L^2 \eta_\mathrm{l}^2 \sigma_g^2+180 E^2 L^4 \eta_\mathrm{l}^2 \rho^2+\frac{L^2}{N}(60 \eta_\mathrm{l}^2EL^2+1)   \sum_{i=0}^{N} \mathbb{E}\left\|\delta_{i, k}^t-\delta_{i,0}^t\right\|^2+\frac{\eta_\mathrm{g} \eta_\mathrm{l} L^3 \rho^2}{2N} \sigma_l^2\right) \\ 
& \leq \frac{F\left(\tilde{w}^0\right)-f^*}{C E \eta_\mathrm{g} \eta_\mathrm{l} T}\\ 
& +\frac{1}{C}\left(10 E L^4 \eta_\mathrm{l}^2 \rho^2 \sigma_l^2+90 E^2 L^2 \eta_\mathrm{l}^2 \sigma_g^2+180 E^2 L^4 \eta_\mathrm{l}^2 \rho^2+\frac{L^2}{N}(60 \eta_\mathrm{l}^2EL^2+1)   \sum_{i=0}^{N} \mathbb{E}\left\|\delta_{i, k}^t-\delta_{i,0}^t\right\|^2+\frac{\eta_\mathrm{g} \eta_\mathrm{l} L^3 \rho^2}{2N} \sigma_l^2\right),\end{aligned}$$
if choosing $\eta_\mathrm{l}=\frac{1}{\sqrt{T} E L}$ and $\eta_\mathrm{g}=\sqrt{E N}$, under the intermediate results in Lemma~\ref{lem:2} of FedSAM we have:
$$\frac{1}{T} \sum_{t=1}^T \mathbb{E}\left[\left\|\nabla F\left(w^{t+1}\right)\right\|\right]\leq \left(\frac{10L(F\left(\tilde{w}^0\right)-F^*)}{C\sqrt{T E N}}+\frac{90L^2\rho^2\sigma_g^2}{CTE}+\frac{180 L^2\rho^2}{C T}+\frac{120L^2\rho^2}{CET^2}+\frac{2L^2\rho^2}{CT}+\frac{L^2\sigma_l^2 \rho^2}{C\sqrt{T E N}}\right)$$
Similarity, under the situation that $\eta_\mathrm{l}=\frac{1}{\sqrt{T} E L}$ and $\eta_\mathrm{g}=\sqrt{E N}$, with the intermediate results shown in Lemma~\ref{lem:2} of FedLESAM we have:
$$\frac{1}{T} \sum_{t=1}^T \mathbb{E}\left[\left\|\nabla F\left(w^{t+1}\right)\right\|\right]\leq \left(\frac{10L(F\left(\tilde{w}^0\right)-F^*)}{C\sqrt{T E N}}+\frac{90L^2\rho^2\sigma_g^2}{CTE}+\frac{180 L^2\rho^2}{C T}+\frac{L^2\sigma_l^2 \rho^2}{C\sqrt{T E N}}\right)$$

\end{proof}

\subsection{Proof of Theorem~2}
Here we provide the proof of Theorem~\ref{thm:naive}.

\begin{theorem}
    \label{thm:naive2}
    Assume local update is one step and follows Assumptions~\ref{asm:unit_variance}-~\ref{asm:smooth2}. Under full participation and $L_\mathrm{g}$-smoothness of $F$ with global and local learning rates $\eta_\mathrm{g}$ and $\eta_\mathrm{l}$, the estimation bias is bounded as
    \begin{equation}
    \small
    \|\frac{w^{t-1}-w^t}{\|w^{t-1}-w^t\|}-\frac{\nabla F(w^t)}{\|\nabla F(w)\|}\| \leq 3\sigma_\mathrm{l}'^2+3\sigma_\mathrm{g}'^2+3L_\mathrm{g}^2 \eta_\mathrm{g}^2\eta_\mathrm{l}^2.
    \nonumber
    \end{equation}
\end{theorem}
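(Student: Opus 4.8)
The plan is to unfold a single local step, rewrite the normalized global displacement $\frac{w^{t-1}-w^t}{\|w^{t-1}-w^t\|}$ as the \emph{unit averaged stochastic gradient} that appears in Assumptions~\ref{asm:unit_variance2}--\ref{asm:unit_differ2}, and then peel off three sources of error by the triangle inequality. First I would write the round-$(t-1)$ update under full participation with $E=1$: each client returns $w^{t-1}-w_{i,1}^{t-1}=\eta_\mathrm{l}\nabla F_i(w^{t-1},\xi_i^{t-1})$ (the perturbation term being suppressed in this ``naive'' regime), so after aggregation $w^{t-1}-w^t=\eta_\mathrm{g}\eta_\mathrm{l}\,\frac1N\sum_{i=1}^N\nabla F_i(w^{t-1},\xi_i^{t-1})$. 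Normalizing, the scalar $\eta_\mathrm{g}\eta_\mathrm{l}/N$ cancels and $\frac{w^{t-1}-w^t}{\|w^{t-1}-w^t\|}=\frac{\sum_i\nabla F_i(w^{t-1},\xi_i^{t-1})}{\|\sum_i\nabla F_i(w^{t-1},\xi_i^{t-1})\|}$, which is exactly the object bounded in Assumption~\ref{asm:unit_variance2} evaluated at $u=w^{t-1}$. I would also record the step bound $\|w^{t-1}-w^t\|\le\eta_\mathrm{g}\eta_\mathrm{l}$ coming from the unit-normalized form of the local update.

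Next I would insert the two natural intermediate directions and apply $\|x+y+z\|^2\le 3(\|x\|^2+\|y\|^2+\|z\|^2)$, taking expectation over the sampling:
\begin{align*}
\mathbb{E}\Bigl\|\tfrac{w^{t-1}-w^t}{\|w^{t-1}-w^t\|}-\tfrac{\nabla F(w^t)}{\|\nabla F(w^t)\|}\Bigr\|^2
&\le 3\,\mathbb{E}\Bigl\|\tfrac{\sum_i\nabla F_i(w^{t-1},\xi_i^{t-1})}{\|\sum_i\nabla F_i(w^{t-1},\xi_i^{t-1})\|}-\tfrac{\sum_i\nabla F_i(w^{t-1})}{\|\sum_i\nabla F_i(w^{t-1})\|}\Bigr\|^2\\
&\quad+3\Bigl\|\tfrac{\sum_i\nabla F_i(w^{t-1})}{\|\sum_i\nabla F_i(w^{t-1})\|}-\tfrac{\nabla F(w^{t-1})}{\|\nabla F(w^{t-1})\|}\Bigr\|^2+3\Bigl\|\tfrac{\nabla F(w^{t-1})}{\|\nabla F(w^{t-1})\|}-\tfrac{\nabla F(w^t)}{\|\nabla F(w^t)\|}\Bigr\|^2.
\end{align*}
The first term is at most $3\sigma_\mathrm{l}'^2$ by Assumption~\ref{asm:unit_variance2} and the second is at most $3\sigma_\mathrm{g}'^2$ by Assumption~\ref{asm:unit_differ2}; these two steps are immediate once the decomposition is in place.

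The remaining work is the third term, the drift of the gradient direction from $w^{t-1}$ to $w^t$. Here I would use the elementary inequality $\bigl\|\frac{a}{\|a\|}-\frac{b}{\|b\|}\bigr\|\le\frac{2\|a-b\|}{\max\{\|a\|,\|b\|\}}$ with $a=\nabla F(w^{t-1})$, $b=\nabla F(w^t)$, together with $L_\mathrm{g}$-smoothness of $F$ (Assumption~\ref{asm:smooth2}), giving $\|\nabla F(w^{t-1})-\nabla F(w^t)\|\le L_\mathrm{g}\|w^{t-1}-w^t\|\le L_\mathrm{g}\eta_\mathrm{g}\eta_\mathrm{l}$; absorbing the denominator into the constant yields the $3L_\mathrm{g}^2\eta_\mathrm{g}^2\eta_\mathrm{l}^2$ term and completes the bound on $\|\cdot\|^2$, hence the stated estimate. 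The main obstacle I anticipate is precisely the bookkeeping around unit vectors: a difference of normalized vectors is not controlled by the difference of the underlying vectors without a positive lower bound on their norms, so I would either carry such a lower bound explicitly (letting it be absorbed, since $\rho$ is taken independent) or phrase the third-term bound at the level of the unnormalized displacement; a secondary subtlety is the mild mismatch between ``sum of normalized'' and ``normalized sum'' gradients in the one-step update versus Assumptions~\ref{asm:unit_variance2}--\ref{asm:unit_differ2}, which I would handle by adopting the aggregate-direction form consistently throughout.
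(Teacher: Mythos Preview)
Your proposal follows the paper's proof almost line for line: the same one-step identification of $\frac{w^{t-1}-w^t}{\|w^{t-1}-w^t\|}$ with the normalized averaged stochastic gradient at $w^{t-1}$, the same three-way split $\|A+B+C\|^2\le 3(\|A\|^2+\|B\|^2+\|C\|^2)$, and the same direct invocation of Assumptions~\ref{asm:unit_variance2} and~\ref{asm:unit_differ2} to dispatch the first two pieces. The only real difference is in the third term $C=\tfrac{\nabla F(w^{t-1})}{\|\nabla F(w^{t-1})\|}-\tfrac{\nabla F(w^t)}{\|\nabla F(w^t)\|}$. The paper handles $C$ by a first-order Taylor expansion $\nabla F(w^t)=\nabla F(w^{t-1})+H\eta_\mathrm{g}\eta_\mathrm{l} g^{t-1}+O(\cdot)$, interprets the squared unit-vector difference as a squared angle $\phi^t$, and bounds $\tan\phi^t$ by $\eta_\mathrm{g}^2\eta_\mathrm{l}^2 L_\mathrm{g}^2$ via the spectral bound $\|H\|\le L_\mathrm{g}$; you instead combine the elementary estimate $\bigl\|\frac{a}{\|a\|}-\frac{b}{\|b\|}\bigr\|\le\frac{2\|a-b\|}{\max\{\|a\|,\|b\|\}}$ with $L_\mathrm{g}$-smoothness and a step-size bound. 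Your route is more elementary and, to your credit, surfaces the hidden assumption explicitly---the need for a lower bound on $\|\nabla F\|$---whereas the paper's $\tan\phi^t$ computation conceals the very same issue in the denominator $\|\nabla F(w^{t-1})\|^2$ and in the implicit replacement of $\|g^{t-1}\|$ by $\|\nabla F(w^{t-1})\|$. Both arguments arrive at the same $3L_\mathrm{g}^2\eta_\mathrm{g}^2\eta_\mathrm{l}^2$ contribution.
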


\begin{proof}
Under one step client updates and full participation, we have:
$$
w^t-w^{t-1}=\eta_\mathrm{g}\eta_\mathrm{l}\frac{1}{N}\sum_{i=1}^N \nabla F_i(w^{t-1},\xi_i).
$$Then the error bound can be defined as:
$$
error=\mathbb{E}\|\frac{\sum_{i=1}^N \nabla F_i(w^{t-1},\xi_i)}{\|\sum_{i=1}^N \nabla F_i(w^{t-1},\xi_i)\|}-\frac{\nabla F(w^t)}{\|\nabla F(w^t)\|}\|^2.
$$ Define $A=\frac{\sum_{i=1}^N \nabla F_i\left(w^{t-1}, b_i\right)}{\left\|\sum_{i=1}^N \nabla F_i\left(w^{t-1}, b_i\right)\right\|}-\frac{\sum_{i=1}^N \nabla F_i\left(w^{t-1}\right)}{\left\|\sum_{i=1}^N \nabla F_i\left(w^{t-1}\right)\right\|}$, $B=\frac{\sum_{i=1}^N \nabla F_i\left(w^{t-1}\right)}{\left\|\sum_{i=1}^N \nabla F_i\left(w^{t-1}\right)\right\|}-\frac{\nabla F\left(w^{t-1}\right)}{\| \nabla F\left(w^{t-1}\right)\|}$, and $C=\frac{\nabla F\left(w^{t-1}\right)}{\| \nabla F\left(w^{t-1}\right)\|}-\frac{\nabla F(w^t)}{\|\nabla F(w^t)\|}$. We have:
$$
error=\|A+B+C\|^2\leq 3\|A\|^2+3\|B\|^2+3\|C\|^2.
$$ 

Then we start to bound $\|C\|^2$. If the local and global learning rates are small and the gradient of global function $\nabla F\left(w^t, b\right)$ is small, based on the first order Hessian approximation, the expected gradient is
$$
\nabla F\left(w^t\right)=\nabla F\left(w^{t-1}+\eta_\mathrm{g}\eta_\mathrm{l}g^{t-1}\right)=\nabla F\left(w^{t-1}\right)+H \eta_\mathrm{g}\eta_\mathrm{l} g^{t-1}+O\left(\left\|\eta_\mathrm{g}\eta_\mathrm{l} g^{t-1}\right\|^2\right),
$$
where $H$ is the Hessian at $w^{t-1}$. Therefore, we have
$$
\mathbb{E}\left[\left\|\frac{\nabla F(w^{t-1})}{\|\nabla F(w^{t-1})\|}-\frac{\nabla F(w^t)}{\|\nabla F(w^t)\|}\right\|^2\right]\leq \phi^t,
$$
where $\phi^t$ is the square of the angle between the unit vector in the direction of $\nabla F(w^{t-1})$ and $\nabla F\left(w^t\right)$. The inequality is because (1) $\left\|\frac{\nabla F_i(\cdot)}{\left\|\nabla F_i(\cdot)\right\|}\right\|^2\leq1$, and thus we replace $\delta$ with a unit vector in the corresponding directions and obtain the upper bound, (2) the norm of difference between the unit vectors can be upper bounded by the square of the arc length on a unit circle. If the total learning rate $\eta_g\eta_l$ and the global model update $\nabla F\left(w^t\right)$ are small, $\phi^t$ will also be small. Based on the first order Taylor series, i.e., $\tan x=x+O\left(x^2\right)$, we have
$$
\begin{aligned} & \tan \phi^t=\frac{\left\|\nabla F(w^{t-1})-\nabla F\left(w^t\right)\right\|^2}{\left\|\nabla F\left(w^t\right)\right\|^2}+O\left((\phi^t)^2\right) \\ 
& =\frac{\left\|\nabla F(w^{t-1})-H \eta_\mathrm{g}\eta_\mathrm{l} g^{t-1}-O\left(\left\|\eta_\mathrm{g}\eta_\mathrm{l} g^{t-1}\right\|^2\right)-\nabla F(w^{t-1})\right\|^2}{\left\|\nabla F(w^{t-1})\right\|^2}+O\left((\phi^t)^2\right) \\ 
& \stackrel{\text { (a) }}{\leq} \eta_\mathrm{g}^2 \eta_\mathrm{l}^2L_\mathrm{g}^2,\end{aligned}
$$
where (a) is due to maximum eigenvalue of $H$ is bounded by $L_\mathrm{g}$ because $F$ function is $L_\mathrm{g}$-smooth.

Since $\|C\|^2$ is proved to be less than $L_\mathrm{g}^2\eta_\mathrm{l}^2\eta_\mathrm{g}^2$, and A and B are respectively bounded by the Assumptions~\ref{asm:unit_variance} and \ref{asm:unit_differ}, we have:
$$
\mathbb{E}\|\frac{w^{t-1}-w^t}{\|w^{t-1}-w^t\|}-\frac{\nabla F(w^t)}{\|\nabla F(w)\|}\| \leq 3\sigma_\mathrm{l}'^2+3\sigma_\mathrm{g}'^2+3L_\mathrm{g}^2 \eta_\mathrm{g}^2\eta_\mathrm{l}^2.
$$ Here we complete the proof.
\end{proof}

\begin{figure*}[ht!]
    \centering
    \subfigure[CIFAR10, 100 Clients, $\beta=0.6$]{
    \centering
    \label{fig:cifar10p100b06}
    \includegraphics[width=0.3\textwidth]{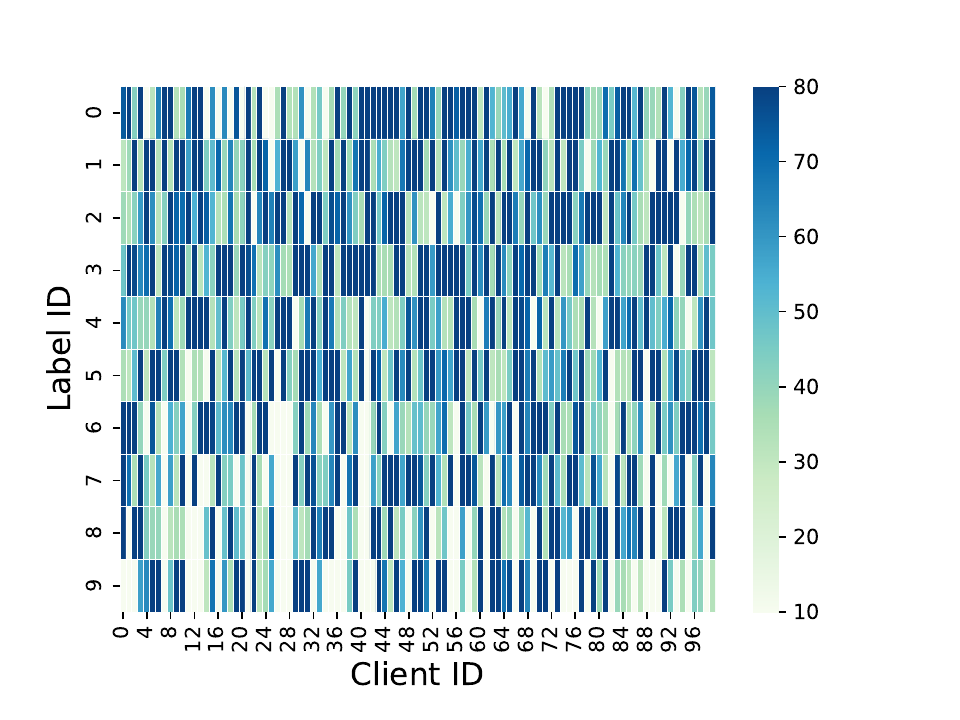}}
    \subfigure[CIFAR10, 100 Clients, $\beta=0.1$]{
    \centering
    \label{fig:cifar10p100b01}
    \includegraphics[width=0.3\textwidth]{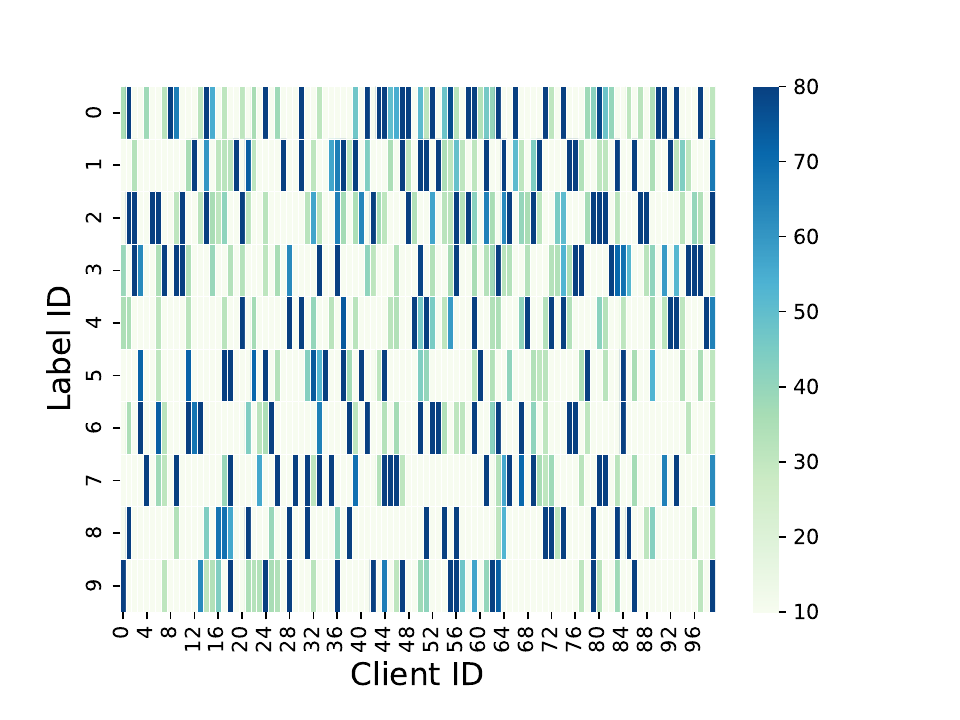}}
    \subfigure[CIFAR10, 200 Clients, $\beta=0.6$]{
    \centering
    \label{fig:cifar10p200b06}
    \includegraphics[width=0.3\textwidth]{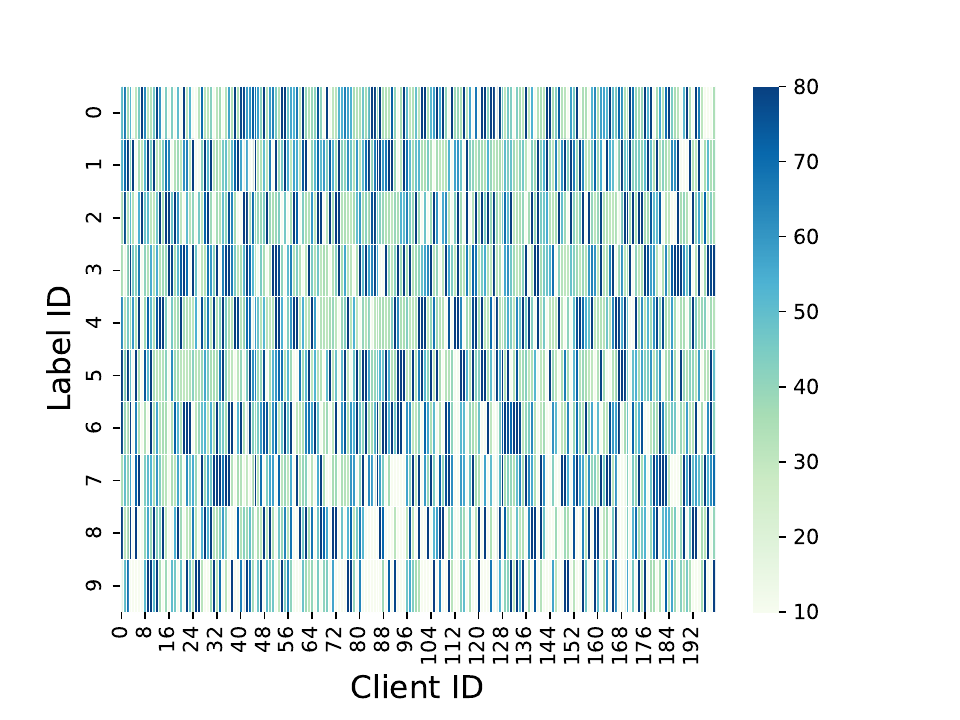}}
    \subfigure[CIFAR10, 200 Clients, $\beta=0.1$]{
    \centering
    \label{fig:cifar10p200b01}
    \includegraphics[width=0.3\textwidth]{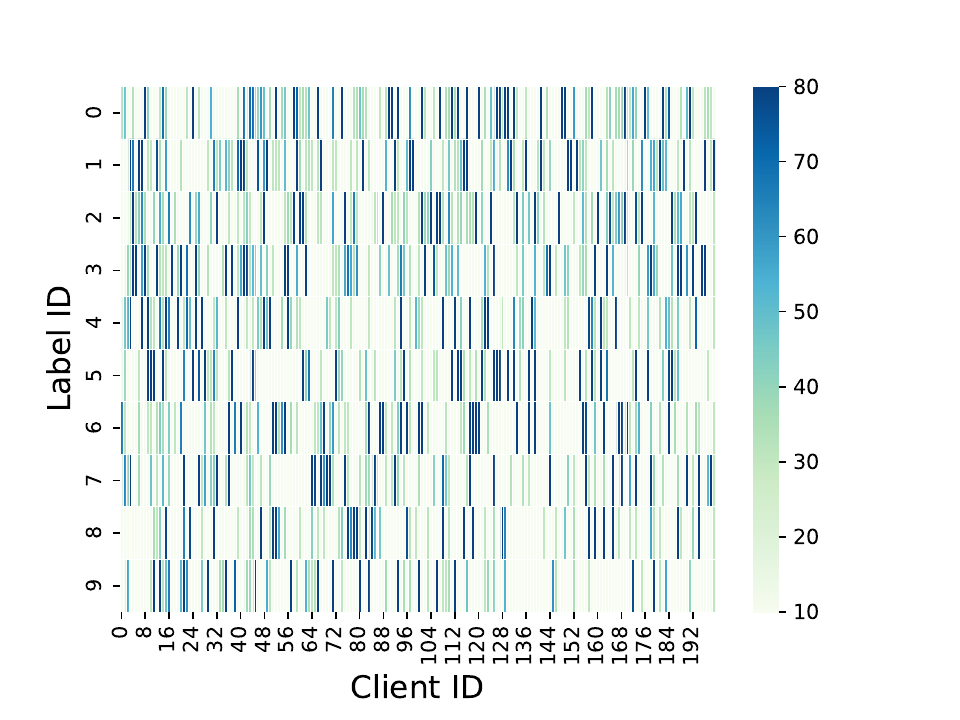}}
    \centering
    \subfigure[CIFAR100, 100 Clients, $\beta=0.6$]{
    \centering
    \label{fig:cifar100p100b06}
    \includegraphics[width=0.3\textwidth]{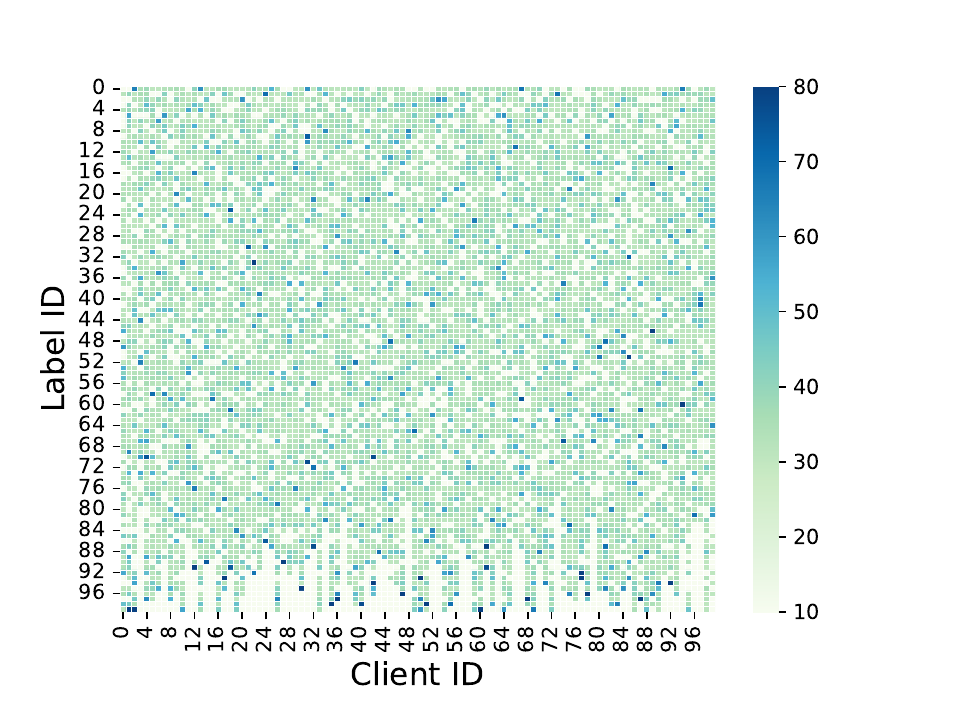}}
    \subfigure[CIFAR100, 100 Clients, $\beta=0.1$]{
    \centering
    \label{fig:cifar100p100b01}
    \includegraphics[width=0.3\textwidth]{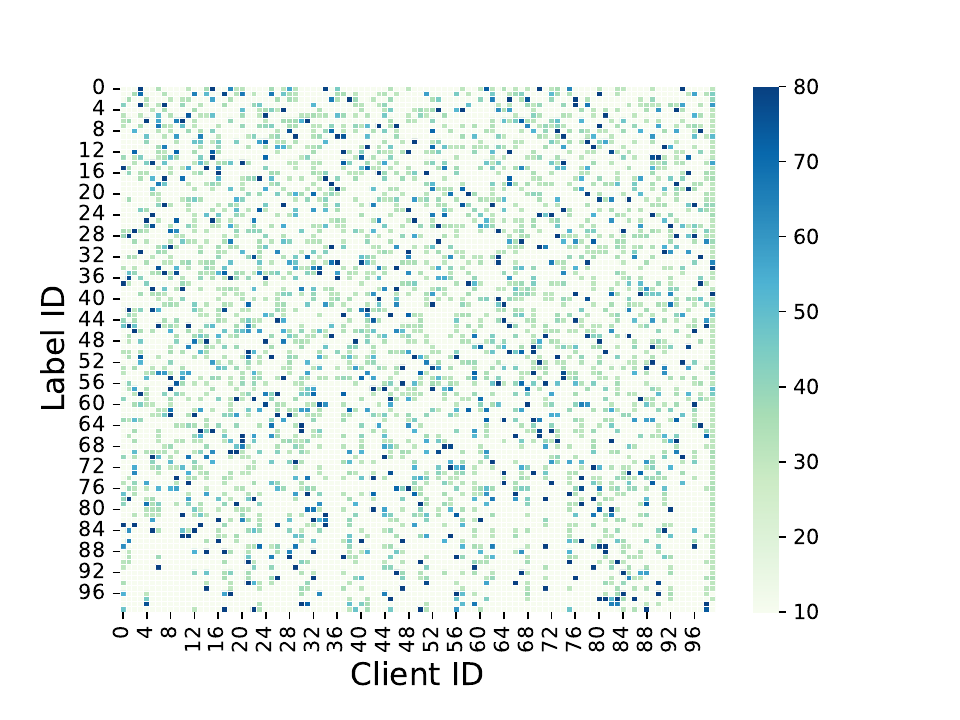}}
    \subfigure[CIFAR100, 200 Clients, $\beta=0.6$]{
    \centering
    \label{fig:cifar100p200b06}
    \includegraphics[width=0.3\textwidth]{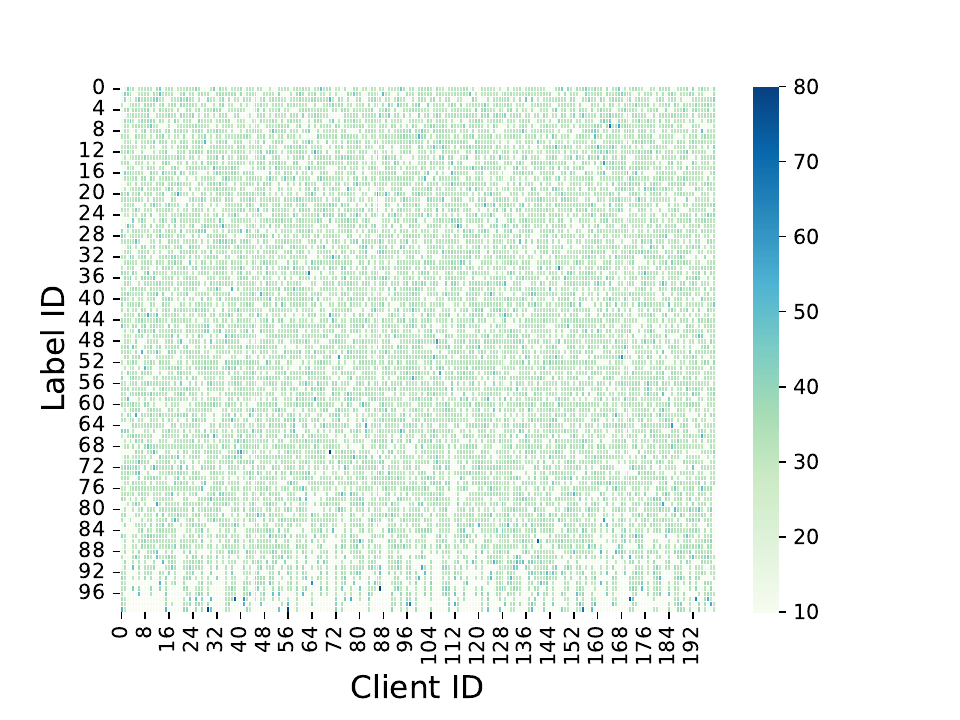}}
    \subfigure[CIFAR100, 200 Clients, $\beta=0.1$]{
    \centering
    \label{fig:cifar100p200b01}
    \includegraphics[width=0.3\textwidth]{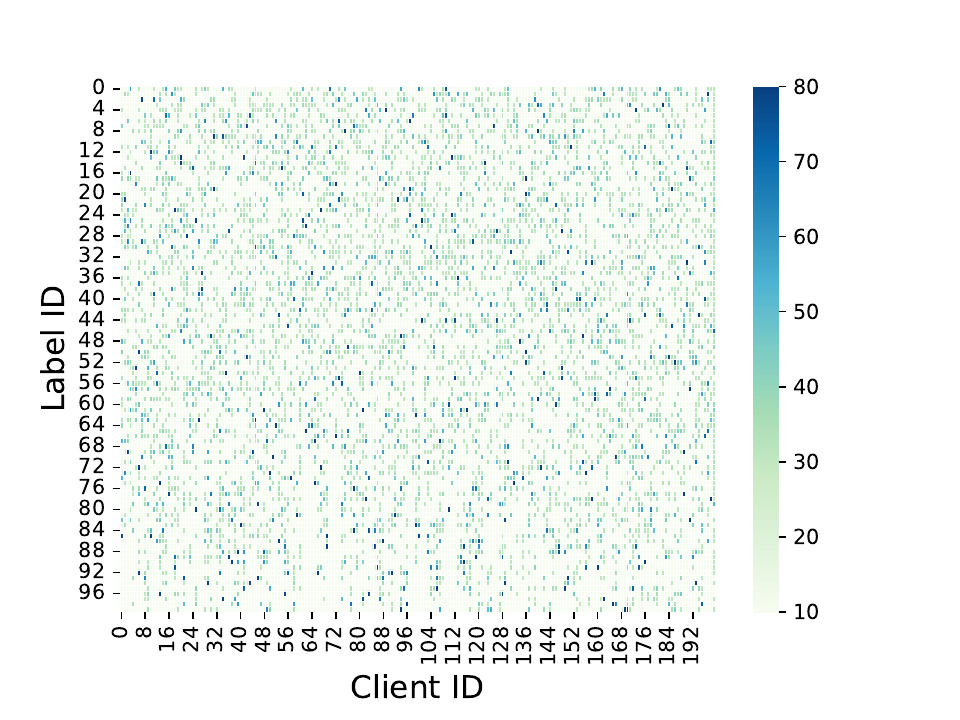}}
    \vspace{-8pt}
    \caption{Heatmap of data distribution of CIFAR10 and CIFAR100 under Dirichelet distribution with coefficients $\beta$ of $0.6$ and $0.1$. The two datasets are divided into 100 and 200 clients. }\label{fig:heatmap}
\end{figure*}

\section{Implementation of the Experiments}\label{app:data}
This section presents some details about benchmark datasets, data split strategies and backbone models used in the experiments.

\subsection{Datasets} 
\begin{table}[h]
\centering
\caption{\rm{A summary of CIFAR10/100, OfficeHome, and DomainNet datasets, including number of total images, number of classes, number of domains and the size of the images in the datasets.}}
\vspace{5pt}
    \scalebox{1.00}{
    \begin{tabular}{l|cccc}
        \toprule[2pt]
        Dataset&  Total Images & Class& Domain&Image Size\\
        \midrule
        CIFAR10&60,000 &10&-&3$~\times~$32$~\times~$32 \\
        CIFAR100&60,000 &100&-&3$~\times~$32$~\times~$32 \\
        OfficeHome&15,588&65&4&3$~\times~$224$~\times~$224\\
        DomainNet&586,575&345&6&3$~\times~$224$~\times~$224 \\
        \bottomrule[2pt]
    \end{tabular}}
    \label{tb:dataset}
\end{table}
CIFAR-10/100~\cite{cifar10}, OfficeHome~\cite{officehome} and DomainNet~\cite{domainnet} are all popular benchmark datasets in the field of federated learning. Data samples in CIFAR10 and CIFAR100 are colorful images of different categories with the resolution of $32~\times~32$. 
There are 10 classes and each class has 6,000 images in CIFAR10. 
For CIFAR100, there are 100 classes and each class has 600 images. Fore OfficeHome, there are 65 classes and 4 domains with 15,588 images~(resolution of $224~\times~224$). 
DomainNet is a large dataset, which has 345 classes and 6 domains with 586,575 images~(resolution of $224~\times~224$). 
As shown in the Table~\ref{tb:dataset}, we summarize CIFAR10, CIFAR100, OfficeHome, and DomainNet from the views of number of total images, number of classes, number of domains and the size of the images in the datasets.

\subsection{Splits}
For CIFAR10 and CIFAR100, we follow the settings in \citet{split}, \citet{fedgamma}, and \citet{fedsamicml} and use Dirichlet distribution and Pathological split strategies to simulate the situations of Non-IID. As shown in the Figure~\ref{fig:heatmap}, we provide the heatmap of data distribution among clients of CIFAR10 and CIFAR100 under Dirichlet distribution with coefficients of 0.6 and 0.1. The two datasets are divided into 100 and 200 clients. It can be seen that, the split can generate practical and complicated data distribution. 
For OfficeHome and DomainNet, we adopt the standard leave-one-domain-out split strategy that selects one domain for test and all other domains for federated training.

\subsection{Model}
Resnet18 backbone is commonly used in many experiments on CIFAR10 and CIFAR100 datasets~\cite{fedsmoo,fedspeed,sun1,sun2,fedgela,hong1,hong2,hong3}, here we also use it as the backbone followed with a classification head. Following the advice of \citet{advice} and keeping the same setting with \citet{fedsmoo,fedgamma} to avoid the non-differentiable parameters, we replace
the Batch Normalization with the Group Normalization~\cite{groupnorm}. To validate the performance of algorithms on different models, for DomainNet and OfficeHome, we adopt the pre-trained ViT-B/32~\cite{vit} as the backbone. 
\section{Variants}\label{app:variants}
In this section, we show the process of an optimization method in federated learning named Scaffold~\cite{scaffold}, and introduce the procedures of our two variants based on the frameworks of FedDyn~\cite{fedyn} and Scaffold, named FedLESAM-D and FedLESAM-S respectively. Client loss surfaces may not align with the global loss surface, meaning that minimizing local sharpness in FedSAM and MoFedSAM might not effectively reduce global sharpness. Effective variance reduction through Scaffold~($w_{i,k}^t$ aligns $w_{g,k}^t$ during local training) enables FedGAMMA to reduce both the training loss and the upper bound of global sharpness. In FedLESAM-S and FedLESAM-D, effective variance reduction combined with an accurate estimate of global perturbation leads to directly minimizing both training loss and global sharpness. With successful estimation and variance reduction, the key difference between FedGAMMA and FedLESAM-S is that FedGAMMA minimizes the upper bound of global sharpness, whereas FedLESAM-S directly minimizes the global sharpness. 

\begin{algorithm}[t!]
    \caption{Scaffold}
    \label{alg:scaffold}
    \textbf{Input}:$(K, \rho, w^0, E, T, \eta_\mathrm{l}, \eta_\mathrm{g}, \forall i~C_i=0, C=0)$
    \begin{algorithmic}
    \FOR{$t = 0,1,\dots,T-1$}
        \FOR{sampled $n$ active client $i = 1,2,\dots,n$}
            \STATE receive $w^t$, $w^t_{i,0}\leftarrow w^t$
            \FOR{$k=0,1,...,E-1$} 
                \STATE sample a batch of data $b_{i,k}^t$
                \STATE $w_{i,k+1}^{t} \leftarrow w_{i,k}^t - \eta_\mathrm{l} \nabla\mathcal{L}(w_{i,k}^t; b_{i,k}^t)+\eta_\mathrm{l}(C-C_i)$
            \ENDFOR
                \STATE $C_i=C_i-C+\frac{1}{\eta_\mathrm{l}E}(w^t-w^t_{i,E})$
                \STATE submit $C_i$ and $w^t_{i,E}$.
        \ENDFOR
        \STATE $w^{t+1} \leftarrow w^t-\eta_\mathrm{g}\sum_{i=1}^{K}{w^t-w^t_{i,E}}$.
        \STATE $C=C+\frac{1}{K}C_i$
    \ENDFOR
    \end{algorithmic}
{\textbf{Output}:$w^T.$}
\end{algorithm} 
\subsection{Scaffold and Comparison}
\citet{scaffold} proposed Scaffold to reduce the client drift by introducing variance reduction. Scaffold estimates the update direction for the server model and the update direction for each client, denoted as $C$ and  $C_i$, respectively. The difference $(C-C_i)$ is used to correct the local update. As shown in the Algorithm~\ref{alg:scaffold}, we provide the procedure of Scaffold. It can be seen that, under full participation case, $C$ is equal to $w^{t-1}-w^{t}$, which is the same in our algorithm to estimate global gradient. However, in Scaffold, it is used as global gradient for correcting local updates, while in our FedLESAM, it is used as global gradient to estimate global perturbation. 

\subsection{FedLESAM-S}
\begin{algorithm}[t!]
    \caption{FedLESAM-S}
    \label{alg:fedlesam-s}
    \textbf{Input}:$(K, \rho, w^0, E, T, \eta_\mathrm{l}, \eta_\mathrm{g}, \forall i~ w_i^\mathrm{old}=0, \forall i~C_i=0, C=0)$
    \begin{algorithmic}
    \FOR{$t = 0,1,\dots,T-1$}
        \FOR{sampled $n$ active client $i = 1,2,\dots,n$}
            \STATE receive $w^t$, $w^t_{i,0}\leftarrow w^t$
            \FOR{$k=0,1,...,E-1$} 
                \STATE sample a batch of data $b_{i,k}^t$
                \STATE \colorbox{gray!20}{$\rhd$ perturbation stage}
                \STATE $\delta^t_{i,k}=\rho \frac{w_i^\mathrm{old}-w^t}{\|w_i^\mathrm{old}-w^t\|}$ 
                \STATE $w_{i,k+1}^{t} \leftarrow w_{i,k}^t - \eta_\mathrm{l} \nabla\mathcal{L}(w_{i,k}^t+\delta^t_{i,k} ; b_{i,k}^t)+\eta_\mathrm{l}(C-C_i)$
            \ENDFOR
                \STATE $C_i=C_i-C+\frac{1}{\eta_\mathrm{l}E}(w^t-w^t_{i,E})$
                \STATE store $w_i^\mathrm{old}=w^t$
                \STATE submit $w^{t}_{i,E}$ and $C_i$.
        \ENDFOR
        \STATE $w^{t+1} \leftarrow w^t-\eta_\mathrm{g}\sum_{i=1}^{K}{w^t-w^t_{i,E}}$.
        \STATE $C=C+\frac{1}{K}C_i$
    \ENDFOR
    \end{algorithmic}
{\textbf{Output}:$w^T.$}
\end{algorithm} 
Here we introduce our variant FedLESAM-S based on the framework Scaffold. As illustrated in the Algorithm~\ref{alg:fedlesam-s}, FedLESAM-S locally estimates the global perturbation and incorporates variance reduction of Scaffold into the local training. Other procedures like communication and local update correction are the same with Scaffold.

\subsection{FedLESAM-D}
As illustrated in the Algorithm~\ref{alg:fedlesam-d}, we provide the procedure of our FedLESAM-D based on the framework of FedDyn. We incorporate the regularizer in FedDyn to correct local updates. In the experiments, we found it not stable during the federated training and the overfitting problem is easy to happen, as well as FedDyn and FedSMOO.

\begin{algorithm}[t!]
    \caption{FedLESAM-D}
    \label{alg:fedlesam-d}
    \textbf{Input}:$(K, \rho, w^0, E, T, \eta_\mathrm{l}, \beta, \eta_\mathrm{g}, \forall i~ w_i^\mathrm{old}=0, \forall i~\lambda_i=0, \lambda=0)$
    \begin{algorithmic}
    \FOR{$t = 0,1,\dots,T-1$}
        \FOR{sampled $n$ active client $i = 1,2,\dots,n$}
            \STATE receive $w^t$, $w^t_{i,0}\leftarrow w^t$
            \FOR{$k=0,1,...,E-1$} 
                \STATE sample a batch of data $b_{i,k}^t$
                \STATE \colorbox{gray!20}{$\rhd$ perturbation stage}
                \STATE $\delta^t_{i,k}=\rho \frac{w_i^\mathrm{old}-w^t}{\|w_i^\mathrm{old}-w^t\|}$ 
                \STATE $w_{i,k+1}^{t} \leftarrow w_{i,k}^t - \eta_\mathrm{l} \nabla\mathcal{L}(w_{i,k}^t+\delta^t_{i,k} ; b_{i,k}^t)-\eta_\mathrm{l}(\lambda_i+\frac{1}{\beta}\left(w_{i, k}^t-w^t\right))$
            \ENDFOR
                \STATE store $w_i^\mathrm{old}=w^t$
                \STATE submit $w^{t}_{i,E}.$
                \STATE $\lambda_i=\lambda_i-\frac{1}{\beta}\left(w_{i, k}^t-w^t\right)$
        \ENDFOR
        \STATE $w^{t+1} \leftarrow w^t-\eta_\mathrm{g}\sum_{i=1}^{K}{w^t-w^t_{i,E}}$.
        \STATE $\lambda^{t+1}=\lambda^t-\frac{1}{\beta K} \sum_{i =1}^K\left(w_{i,K}^t-w^t\right)$
    \ENDFOR
    \end{algorithmic}
{\textbf{Output}:$w^T.$}
\end{algorithm}

\begin{figure*}[ht!]
    \centering
    \subfigure[FedAvg]{
    \centering
    \label{fig:surface_12}
    \includegraphics[width=0.31\textwidth]{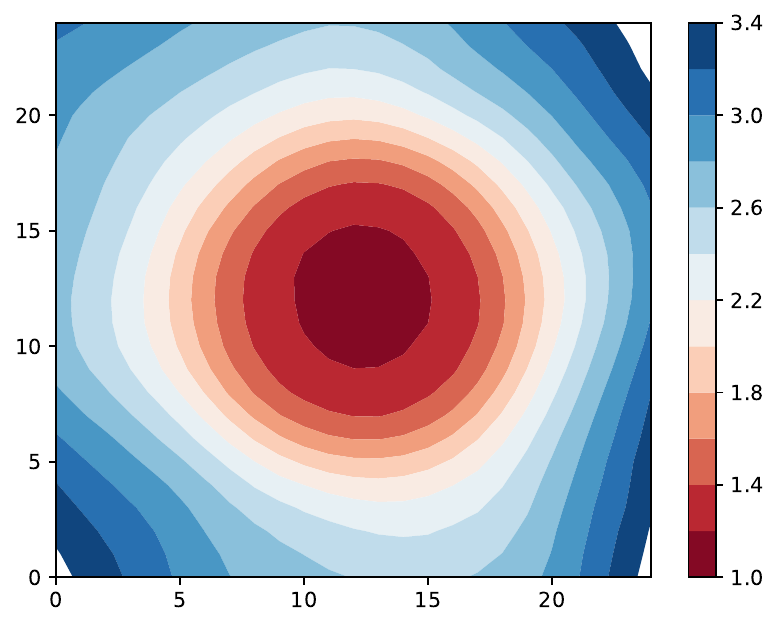}}
    \subfigure[FedSAM]{
    \centering
    \label{fig:surface_22}
    \includegraphics[width=0.31\textwidth]{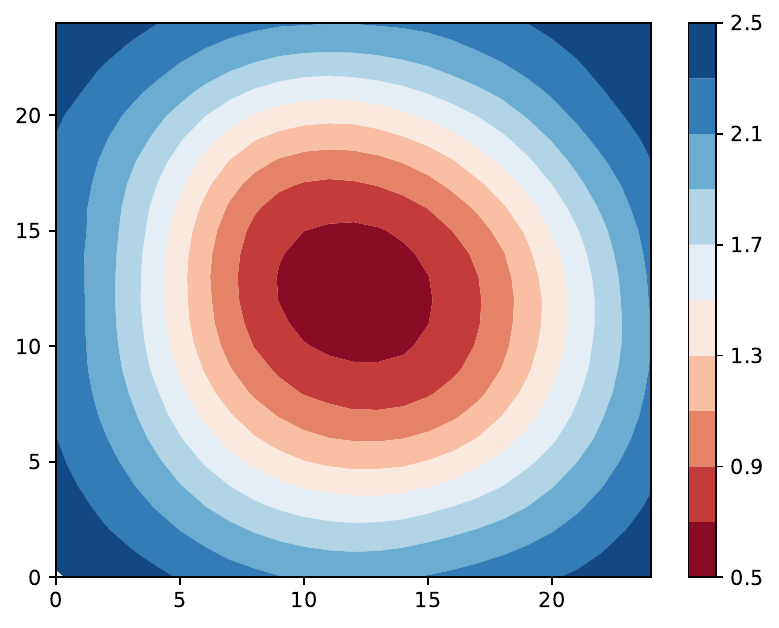}}
    \subfigure[FedGAMMA]{
    \centering
    \label{fig:surface_32}
    \includegraphics[width=0.31\textwidth]{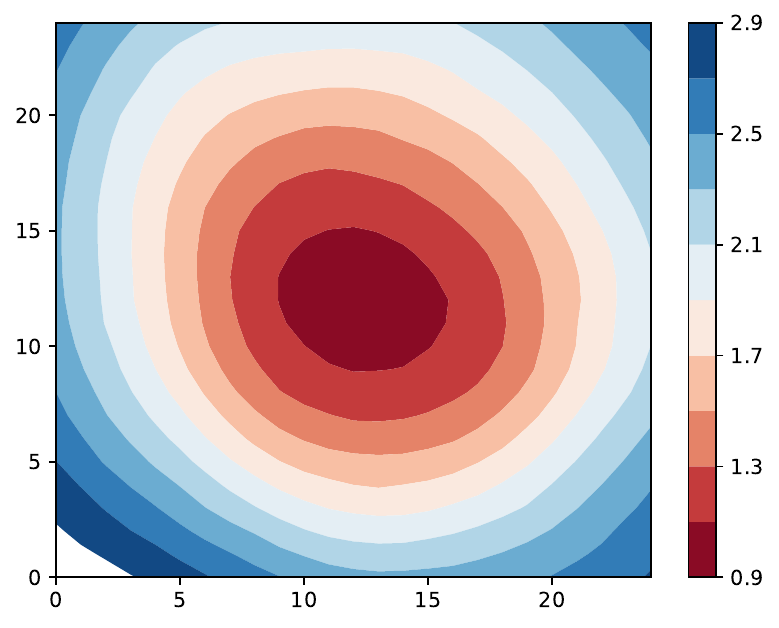}}
    \subfigure[MoFedSAM]{
    \centering
    \label{fig:surface_42}
    \includegraphics[width=0.31\textwidth]{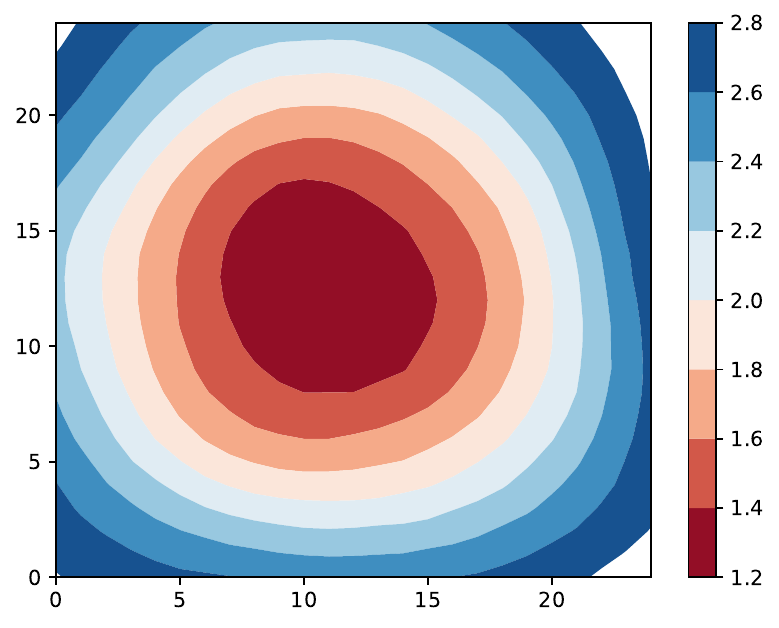}}
    \centering
    \subfigure[FedSMOO]{
    \centering
    \label{fig:surface_52}
    \includegraphics[width=0.31\textwidth]{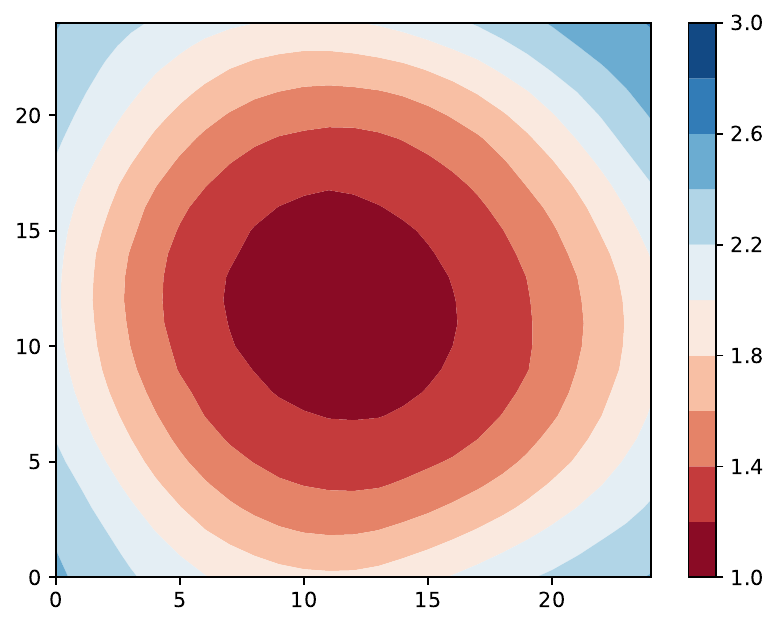}}
    \subfigure[FedLESAM-D]{
    \centering
    \label{fig:surface_62}
    \includegraphics[width=0.31\textwidth]{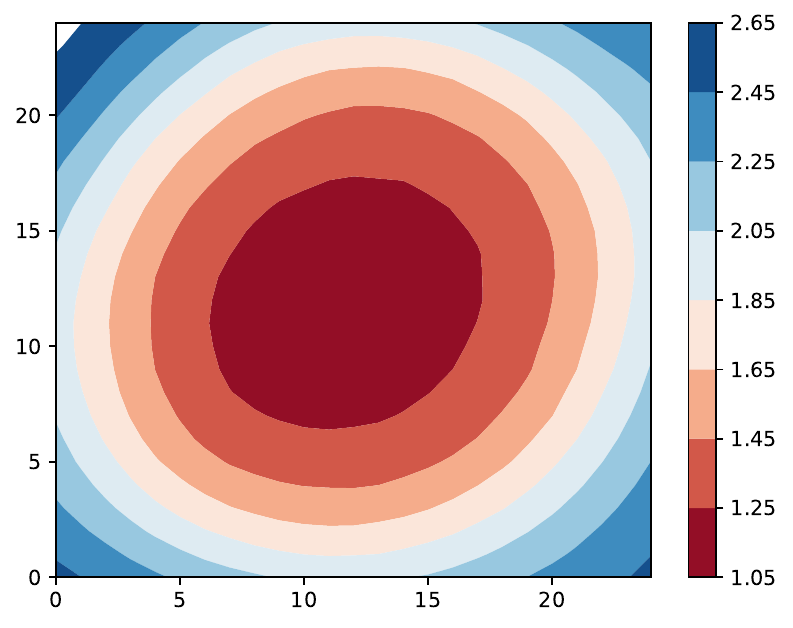}}
    \vspace{-8pt}
    \caption{Visualization of the global loss surface on CIFAR10 under Pathological distribution with coefficient 3 of FedAvg, FedSAM, FedGAMMA, MoFedSAM, FedSMOO and our FedLESAM-D. We divide the dataset into 100 clients and in each round 10\% clients are active.}\label{fig:surface2}
    \vspace{-8pt}
\end{figure*} 

\begin{table}[h]
\centering
\caption{\rm{Stored memory, backpropagation in each local step and communication at each round compared to FedAvg for SAM-based federated methods.}}
\vspace{5pt}
    \setlength{\tabcolsep}{1.5pt}
    \scalebox{1.00}{
    \begin{tabular}{l|cccccccc}
        \toprule[2pt]
        &  FedAvg&	FedSAM&	FedLESAM&	MoFedSAM&	FedSMOO&	FedLESAM-D&	FedGAMMA&	FedLESAM-S\\
        \midrule
        Stored memory&1$~\times~$&	1$~\times~$&	2$~\times~$&	2$~\times~$&	4$~\times~$&	4$~\times~$&	3$~\times~$&	4$~\times~$ \\
        Communication&1$~\times~$&	1$~\times~$&	1$~\times~$&	2$~\times~$&	2$~\times~$&	2$~\times~$&	2$~\times~$&	2$~\times~$ \\
        Backpropagation&1$~\times~$&	2$~\times~$&	1$~\times~$&	2$~\times~$&	2$~\times~$&	1$~\times~$&	2$~\times~$&	1$~\times~$\\
        \bottomrule[2pt]
    \end{tabular}}
    \label{tb:cc}
    \vspace{-10pt}
\end{table}

\begin{table}[h]
\centering
\caption{\rm{Wall clock time~(times including training, loading and evaluation) on one GeForce RTX 3090 between two communications on CIFAR10 under dirichlet distribution with coefficient $\beta=0.6,0.1$ and 100 clients. The active ratio is 10\%.}}
\vspace{5pt}
    \setlength{\tabcolsep}{1.5pt}
    \scalebox{1.00}{
    \begin{tabular}{l|cccccccc}
        \toprule[2pt]
        &  FedAvg&	FedSAM&	MoFedSAM&	FedGAMMA&	FedSMOO&	FedLESAM&	FedLESAM-S&	FedLESAM-D\\
        \midrule
        wall clock time	&20.34s	&25.71s	&28.73s	&29.88s	&29.67s	&20.99s	&25.81s	&25.70s\\
        \bottomrule[2pt]
    \end{tabular}}
    \label{tb:wt}
    \vspace{-10pt}
\end{table}

\begin{table}[t!]
\centering
\small
\renewcommand\arraystretch{0.95}
\caption{\rm{Total communication rounds, computational time (minutes) and communication costs (gigabytes of parameters) to achieve 68\% and 74\% test accuracy under Pathological split with coefficient of 3, 100 clients and 10\% active ratio on CIFAR10 of FedAvg, SAM-based methods, and our two variants.}}
\vspace{4pt}
    \setlength{\tabcolsep}{1pt}
    \scalebox{0.95}{
    \begin{tabular}{l|cc|cc|cc}
        \toprule[2pt]
        Method& \multicolumn{2}{c|}{Commu. Round}  & \multicolumn{2}{c|}{Commu. Cost}& \multicolumn{2}{c}{Compu. Time}\\
        \cmidrule{1-7} \#Target Acc.& 68\% & 74\% & 68\% & 74\%& 68\% & 74\%\\
        \midrule
        FedAvg&246 (1x)&723 (1x) &53 (1x)&156 (1x)&27 (1x)&80 (1x)\\
        FedSAM&253&604&1.03x&0.84x&1.97x&1.60x\\
        MoFedSAM&116&298&0.94x&0.82x&1.08x&0.95x\\
        FedGAMMA&176&422&1.43x&1.17x&1.57x&1.28x\\
        FedSMOO&144&194&1.17x&0.54&1.32x&0.61x\\
        \midrule
        FedLESAM-S&159&332&1.29x&0.92x&0.92x&0.65x\\
        FedLESAM-D&141&182&1.15x&0.50x&1.03x&0.45x\\
        \bottomrule[2pt]
    \end{tabular}}
    \label{tab:commu2}
    \vspace{-10pt}
\end{table}

\section{More Information in the Experiments} \label{app:implement}
\subsection{Hyper-prarameter choosing}
For a fair comparison on CIFAR10 and CIFAR100, we follow all the settings in FedGAMMA~\cite{fedgamma} and FedSMOO~\cite{fedsmoo}. Backbone is ResNet-18~\cite{resnet} with the Group Normalization~\cite{groupnorm} and SGD, total rounds $T=800$, initial local learning rate $\eta_\mathrm{l}=0.1$, global learning rate $\eta_\mathrm{g}=1$ except for FedAdam which adopts $0.1$, perturbation magnitude $\rho$ equals to $0.1$ for FedGAMMA, FedSMOO and the corresponding variants of our FedLESAM-S and FedLESAM-D, $\rho=0.01$ for FedSAM and our original FedLESAM, weight decay equals to 1e-3, and learning rate decreases by 0.998× exponentially except for FedDyn, FedSMOO and FedLESAM-D which adopt 0.9995× for the proxy term. On the CIFAR10, batchsize is 50, and the local epochs is 5. On the CIFAR100, batchsize equals to 20, and the local epochs equal to 2. For OfficeHome and DomainNet, we use the pre-trained model ViT-B/32~\cite{vit} as the backbone to verify the robustness of algorithms on different models. The optimizer is SGD with local learning rate 0.001 and global learning rate 1 except for FedAdam which adopts 0.1. For all methods, $\rho$ is tuned from $\{0.05, 0.01, 0.005, 0.001, 0.0005\}$, local epochs is 5, batchsize is 32 and total communication rounds equal to 400.

\subsection{More Loss Surface Visualization}
Here we show more visualizations of the global loss surfaces. As shown in Figure~\ref{fig:surface2}, we conduct experiments on CIFAR10 under Pathological splits with a coefficient of 3 and visualize the global loss surface of FedAvg, FedSAM, FedGAMMA, MoFedSAM, FedSMOO and our FedLESAM-D. Among all algorithms, our FedSMOO and FedLESAM-D achieve the much flatter loss landscape.

\subsection{Communication and Computation}
As shown in Table~\ref{tb:cc}, we provide the communication at each round, memory stored in clients and backpropagation performed in each local step compared to FedAvg. Our variants' storing and communication are comparable to FedSMOO and we reduce much computation. Compared to FedAvg and FedSAM, FedLESAM and MoFedSAM doubles the memory. Compared to FedSMOO, our FedLESAM-D maintains the same memory level. Compared to FedGAMMA, our FedLESAM-S requires an additional 33\% memory. As shown in Table~\ref{tb:wt}, we provide the wall-clock time comparison in the average time between two communications. It can be seen that our method greatly saves computational times. To further demonstrate efficiency, we provide results of an additional case in the Table~\ref{tab:commu2}. Our method exhibits competting communication efficiency and significantly reduces computation.


\end{document}